\documentclass[11pt]{article}
\usepackage[margin=1in]{geometry}
\usepackage{verbatim}
\usepackage[usenames,dvipsnames]{xcolor}
\definecolor{Gred}{RGB}{219, 50, 54}
\definecolor{Ggreen}{RGB}{60, 186, 84}
\definecolor{Gblue}{RGB}{72, 133, 237}
\definecolor{Gyellow}{RGB}{247, 178, 16}
\definecolor{ToCgreen}{RGB}{0, 128, 0}
\definecolor{myGold}{RGB}{231,141,20}
\definecolor{myBlue}{rgb}{0.19,0.41,.65}
\definecolor{myPurple}{RGB}{175,0,124}

\usepackage{subfig}
\usepackage{hyperref}
\hypersetup{
			colorlinks=true,
	citecolor=ToCgreen,
	linkcolor=Sepia,
	filecolor=Gred,
	urlcolor=Gred
	}

\usepackage{preamble}
\title{Online and Distribution-Free Robustness: \\ Regression and Contextual Bandits with Huber Contamination}
\author{Sitan Chen\thanks{Email: \texttt{sitanc@mit.edu} This work was supported in part by NSF CAREER Award CCF-1453261, NSF Large CCF-1565235 and Ankur Moitra's ONR Young Investigator Award.} \\
MIT
 \and 
Frederic Koehler\thanks{Email: \texttt{fkoehler@mit.edu}. This work was supported in part by NSF CAREER Award CCF-1453261, NSF Large CCF-1565235, Ankur Moitra's ONR Young Investigator Award, and E. Mossel's Vannevar Bush Fellowship ONR-N00014-20-1-2826.} \\
MIT
 \\\and
Ankur Moitra\thanks{Email: \texttt{moitra@mit.edu} This work was
supported in part by a Microsoft Trustworthy AI Grant, NSF CAREER Award CCF-1453261, NSF Large CCF1565235, a David and Lucile Packard Fellowship and an ONR Young Investigator
Award.}\\
MIT
\and 
Morris Yau\thanks{Email: \texttt{morrisyau@berkeley.edu}}\\
UC Berkeley}

\newcommand{\argmin}{\mathop{\text{argmin}}}

\newcommand{\RegHCB}{\mathop{\textup{Reg}_{\mathsf{HCB}}}}
\newcommand{\RegHSQ}{\mathop{\textup{Reg}_{\mathsf{HSq}}}}
\newcommand{\psRegHCB}{\widetilde{\mathop{\textup{Reg}_{\mathsf{HCB}}}}}
\newcommand{\N}{\mathcal{N}}

\newcommand{\td}{\widetilde}
\newcommand{\Ber}{\mathop{\textup{Ber}}}
\newcommand{\bareta}{\overline{\eta}}
\newcommand{\calDx}{\calD_x}
\newcommand{\frakF}{\mathfrak{F}}
\newcommand{\graderr}{\epsilon_{grad}}

\newcommand{\todo}[1]{{[\color{red} TODO: #1]}}

\newenvironment{customassume}[1]
  {\innercustomassume}
  {\endinnercustomassume}

\begin{document}

\maketitle

\begin{abstract}
    \normalsize
    In this work we revisit two classic high-dimensional online learning problems, namely linear regression and contextual bandits, from the perspective of adversarial robustness. Existing works in algorithmic robust statistics make strong distributional assumptions that ensure that the input data is evenly spread out or comes from a nice generative model. {\em Is it possible to achieve strong robustness guarantees even without distributional assumptions altogether, where the sequence of tasks we are asked to solve is adaptively and adversarially chosen?} 

    We answer this question in the affirmative for both linear regression and contextual bandits. In fact our algorithms succeed where conventional methods fail. In particular we show strong lower bounds against Huber regression and more generally any convex $M$-estimator. Our approach is based on a novel alternating minimization scheme that interleaves ordinary least-squares with a 
    simple convex program that finds the optimal reweighting of the distribution
    under a spectral constraint.
    Our results obtain essentially optimal dependence on the contamination level $\eta$, reach the optimal breakdown point, and naturally apply to infinite dimensional settings where the feature vectors are represented implicitly via a kernel map. 
    
\end{abstract}

\thispagestyle{empty}

\newpage

\tableofcontents

\newpage

\setcounter{page}{1}

\section{Introduction}
\label{sec:intro}

\subsection{Background}

The field of robust statistics was founded over five decades ago by John Tukey \cite{tukey1960survey, tukey1975mathematics}, Peter Huber \cite{huber1964robust} and others and seeks to design estimators that are provably robust to some fraction of their data being adversarially corrupted. However these estimators are generally not efficiently computable in high-dimensional settings \cite{bernholt2006robust, hardt2013algorithms}. After a decades long lull we have recently seen considerable progress in algorithmic robust statistics \cite{diakonikolas2019robust, lai2016agnostic, diakonikolas2017being, charikar2017learning, klivans2018efficient, diakonikolas2019sever, hopkins2018mixture, kothari2018robust,bakshi2020outlier,kane2020robust,diakonikolas2020robustly}. The first works \cite{diakonikolas2019robust, lai2016agnostic} focused on robust parameter estimation tasks, like robust mean estimation. The key insight from these works is that uncorrupted data often enjoys various spectral regularity properties, and this makes it possible to efficiently search for low-dimensional projections that can be used to identify corrupted data. 

Since then many of these ideas have found a number of exciting further applications, such as performing robust regression \cite{klivans2018efficient,bakshi2020robust,zhu2020robust,cherapanamjeri2020optimal} or minimizing a strongly convex function when your gradients can be adversarially corrupted \cite{diakonikolas2019sever}. However, what these works all share in common is that they are based on assumptions that the uncorrupted data is somehow evenly spread out. These assumptions can either come about by explicitly assuming a generative model, like a Gaussian \cite{diakonikolas2019robust} or a mixture of Gaussians \cite{bakshi2020outlier,kane2020robust,diakonikolas2020robustly}, or through a deterministic condition like hypercontractivity \cite{klivans2018efficient} or certifiable sub-Guassianity \cite{hopkins2018mixture, kothari2018robust}. 

Still, there is a widespread need for provably robust learning algorithms even in settings where these types of ``evenly spread out'' assumptions are just not appropriate. This is particularly the case in the context of online prediction \cite{cesa2006prediction} which operates in a setting where the input data is ever-changing and potentially even adversarially chosen. This flexibility allows it to capture challenging dynamic settings, as arise in reinforcement learning, where our learning algorithm interacts with the world around it and its decisions may in turn influence the next prediction task it is expected to solve. 
In this work we take an important first step towards answering a much broader question:

\begin{quote}
{\em Are there provably robust learning algorithms that can tolerate adversarial corruptions even for challenging high-dimensional and distribution-free online prediction tasks? }
\end{quote}

\noindent We will work in the Huber contamination model \cite{huber1964robust}. We will study two classic online learning problems: online linear regression with squared loss and linear contextual bandits. In unsupervised learning settings, the Huber contamination model posits that each random sample we get has an $\eta$ probability of coming from an arbitrary noise distribution chosen by an adversary instead of from our model. In our setting we will allow the feedback in each round to be arbitrarily corrupted with $\eta$ probability, and otherwise is subject to the usual stochastic noise. 

It turns out that for our problems the key challenge is to disentangle the effect of the \emph{dynamic range} of predictions vs. the effect of the \emph{noise level} on the overall regret guarantee.  
In particular, consider the basic linear regression problem where $(x_t)_{t = 1}^T$ is the input sequence of covariate vectors\footnote{In this paper, we will study the general case where these vectors are chosen adversarially and adaptively and the predictions are made online, but the importance of distinguishing dynamic range vs. noise level we discuss is relevant already in the basic (offline) setting.} and our goal is to robustly predict the response $y_t$.
Without adversarial corruptions, we assume the responses are generated according to the following well-specified model:
\[ y_t = \langle w^*, x_t \rangle + \xi_t \]
where $w^*$ is unknown  and $\xi_t$ is the noise, and our goal is to predict the clean, noiseless response $\langle w^*, x_t \rangle$ accurately. This problem is straightforward to solve with variants of Ordinary Least Squares \cite{azoury2001relative,vovk2001competitive} even in the online setting.
Now, consider what happens when we allow a random $\eta$ fraction of the responses $y_t$ to be adversarially corrupted, 
and our goal is to predict the \emph{clean/uncorrupted responses} $\langle w^*, x_t \rangle$ accurately. Let $R$ be the dynamic range of the true optimal predictions, so $| \langle w^*, x_t \rangle | \leq R$, and let $\sigma^2$ be the variance of $\xi_t$. When $\sigma^2$ is comparable to $R^2$, then the problem is relatively easy as there is (information-theoretically) not much that can be learned about $w^*$ in the first place. See the left panel of Figure~\ref{fig:noise} for an illustration.

\begin{figure}
    \centering
    \subfloat[][When $\sigma^2$ is comparable to $R^2$, many lines (range depicted in green), including the one found by ordinary least squares (orange), fit the data equally well (although the fit is not that good to begin with).]{\includegraphics[width=0.45\textwidth]{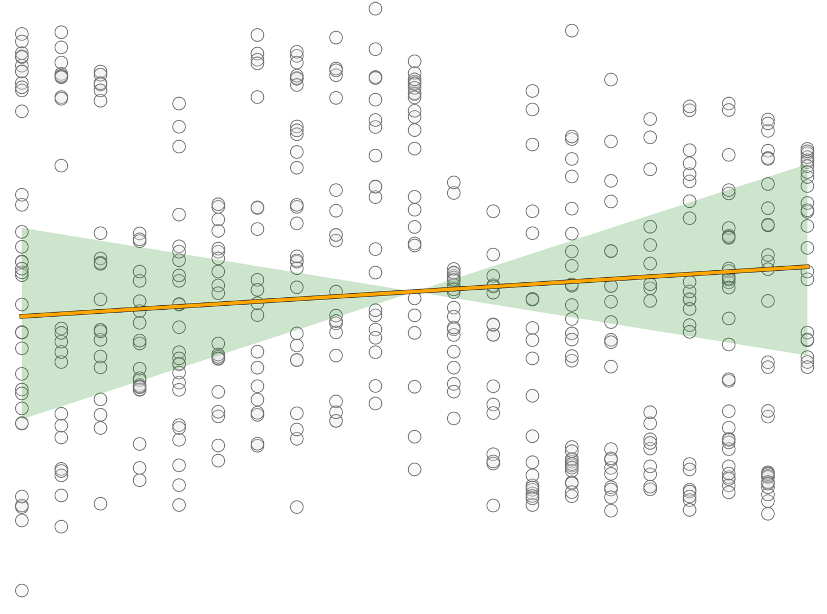}\label{fig:lownoise}}
    \qquad
    \subfloat[][When $\sigma^2$ is much smaller than $R^2$, then ordinary least squares (orange) fails, but in principle it should be possible to do much better even in high-dimensions.]{\includegraphics[width=0.45\textwidth]{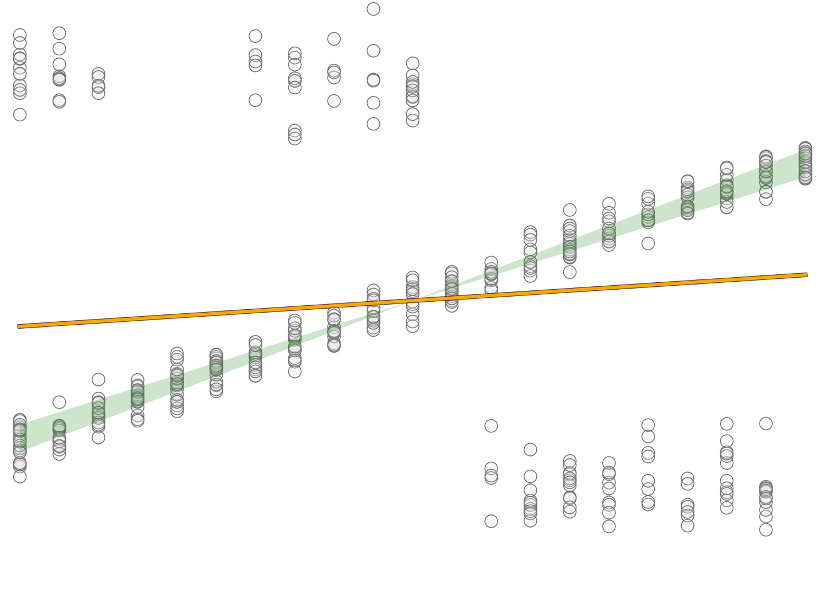}\label{fig:highnoise}}
    \caption{Datasets with equal contamination rates but different levels of noise $\sigma$. The corruptions are located in the upper left and bottom right parts of both figures. The goal in robust regression is to achieve low square loss on the \emph{uncorrupted points}. We depict in orange the ordinary least squares estimator and in green the range of linear predictors that would perform comparably to what our algorithms can achieve.}
    \label{fig:noise}
\end{figure}

In contrast we will be interested in the setting where $\sigma^2$ is much smaller than $R^2$, depicted in the right panel of Figure~\ref{fig:noise}. 
It turns out that existing approaches break down in the sense that they pay an extra factor of $R$ or $R^2$ in the clean prediction error (resp. clean regret). Moreover getting around this dependence is a serious obstacle for the usual techniques: we show that regression using any convex surrogate (including Huber loss and $L_1$ loss) must pay this price (see Theorem~\ref{thm:lowerbound}).  Thus our main question is:

\begin{quote}
    {\em Is it algorithmically possible, in the presence of adversarial corruptions, to achieve average clean prediction error (resp. average clean regret) that is independent of $R$?}
\end{quote}

\noindent We answer this question in the affirmative for both online regression with squared loss and linear contextual bandits. Our algorithms succeed where convex surrogates fail, and are based on a novel alternating minimization scheme that interleaves OLS with carefully designed reweighting schemes found through SDPs. 

Finally we emphasize that the issue of $R^2$ vs. $\sigma^2$ dependence is quite relevant in modern reinforcement learning. In particular, there are many sequential tasks where at each step the variance in the losses/rewards is much smaller than the dynamic range. This can happen naturally when there are some catastrophic states that we must avoid, but at no point is the outcome of playing an action in a given state all that uncertain \--- e.g. when manipulating a robotic arm, some actions can require the application of orders of magnitude more torque. Thus our work may be viewed as a stepping stone towards achieving stronger and more meaningful robustness guarantees in reinforcement learning more broadly. 

\subsection{Our Results}
In this section, we present our main results for both linear regression and contextual bandits in the Huber contamination model. We go on to discuss related work (e.g. robust linear regression under distributional assumptions) in Section~\ref{sec:related} below. 

\paragraph{Distribution-free offline linear regression with Huber contamination.} We begin by discussing our results in the simplest setting we consider, which is the classical {\em offline} linear regression model with a Huber contamination adversary. In the clean version of this model, an arbitrary set of covariates $x_1,\ldots,x_n$ is fixed and clean responses are generated by 
\begin{equation}\label{eqn:clean-model}
y_t = \langle w^*, x_t \rangle + \xi_t 
\end{equation}
for some mean zero noise $\xi_t$; for example, if $\xi_t \sim N(0,\sigma^2)$ then $y_t \sim N(\langle w^*, x_t \rangle, \sigma^2)$. In the Huber contamination model, we relax the assumptions to a total variation distance ball around the generative model. In particular, using the coupling interpretation of total variation distance, this translates into the assumption that with probability $1 - \eta$ the response $y_t$ is generated by \eqref{eqn:clean-model} above, and with probability $\eta$ the response $y_t$ is sampled from an adversarially chosen noise distribution, which we allow to depend on all other randomness in the problem. In this setting, we obtain the following strong result (and for a fairly simple algorithm, see Technical Overview):
\begin{theorem}[Informal version of Theorem~\ref{thm:nonsos-subgaussian} and Theorem~\ref{thm:sos}]\label{thm:nonsos-intro}
Suppose that $\eta < 0.499$ is an upper bound on the contamination level,
and suppose for some $\sigma \ge 0$ that 
for all $1 \le t \le n$, $\|x_t\| \le 1$ and the noise $\xi_t$ is conditionally mean-zero and $\sigma^2$-subgaussian.
Suppose also that $\|w^*\| \le R$.
Then if $\eta = 0$ or $n \gtrsim \log(\min(n,d))/\eta$,
there exists a polynomial time algorithm outputting $w$ satisfying
the clean squared loss guarantee
\begin{align*} 
\sqrt{\frac{1}{n} \sum_{t = 1}^n \langle w^* - w, x_t \rangle^2}
    &\lesssim \eta \sigma\sqrt{\log(1/\eta)} + \eta^{1/8} R^{1/2} \sigma^{1/2} (\eta \sqrt{\log(1/\eta)})^{1/4}\sqrt[8]{\frac{\log(\min(n,d))}{n}} \\
    &\quad+  \eta^{1/4} R\sqrt[4]{\frac{\log(\min(n,d))}{n}} 
     + \min \left\{\sigma\sqrt{d/n}, (R\sigma)^{1/2}\sqrt[4]{1/n}\right\}
\end{align*}
with high probability.
\end{theorem}
Note that all but the first term are $o(1)$ as $n \to \infty$.
 On the other hand, when $\eta = 0$ only the last term remains and our result simplifies to standard (minimax optimal) guarantees for Ordinary Least Squares and Ridge regression, see e.g. \cite{keener2010theoretical,rigollethigh,shalev2014understanding}. Our result obtains the optimal dependence on $\eta$ up to the $\sqrt{\log(1/\eta)}$ factor, because the information-theoretic lower bound is $\Omega(\eta \sigma)$:
\begin{proposition}
\label{prop:lb}
    For any $0\le \eta < 1/2$, any algorithm for Huber-contaminated regression
    with Gaussian noise must incur clean square loss $\frac{1}{n} \sum_{t = 1}^n \langle w^* - w, x_t \rangle^2$ at least $\Omega(\eta^{2} \sigma^2)$.
\end{proposition}
This follows by embedding the 1-dimensional robust mean estimation problem in a straightforward way --- see Example~\ref{obs:gaussianmean}.

We also show the other aspects of the bound (lower bound on $n$, and the presence of additional ``middle terms'') are required --- see Example~\ref{example:extra}.
Our results generalize naturally to the setting with heavy-tailed noise, even without second moments, and achieve the optimal dependence on $\eta$ in those settings too. We defer the detailed statement of these variants to Section~\ref{sec:nosos}.

\paragraph{Impossibility of strengthening the adversary.} Before proceeding to the more sophisticated online settings we consider, we emphasize the impossibility of strengthening the adversary even in the basic model above. First, we consider the version of this problem where the adversary is allowed to corrupt an \emph{arbitrary} $\eta$ fraction of responses, as opposed to corrupting responses in random locations. In this case, the problem is trivially impossible even in $1$-dimension. If $1 - \eta$ fraction of $x_i$ are zero and $\eta$ fraction are $1$, $w^* = \pm R$, and the adversary corrupts an arbitrary $\eta$ fraction of responses, it's information-theoretically impossible to tell if $w^* = R$ or $w^* = -R$. Thus, we have the following lower bound:
\begin{proposition}[Impossibility with adversarial corruption locations]
In the linear regression model where an adversary corrupts an \emph{arbitrary} $\eta$ fraction of responses $y_t$, any algorithm must suffer clean squared loss $\frac{1}{n} \sum_{t = 1}^n \langle w^* - w, x_t \rangle^2$
at least $\Omega(\eta R^2)$.
\end{proposition}
We note variants of this example have already appeared previously in the literature, see e.g. Lemma 6.1 in~\cite{klivans2018efficient} or Theorem D.1 in~\cite{cherapanamjeri2020optimal}.
Similarly, we can consider a strengthened adversary which still corrupts in random locations, but is allowed to change the covariate $x_t$ as well as the response $y_t$. For essentially the same reason (the adversary can change covariates $x_t$ from 0 to 1 and label them with negated responses $y_t = \mp R$), it again becomes impossible to tell whether $w^* = R$ or $w^* = -R$ and so we have a strong impossibility result: 
\begin{proposition}[Impossibility with corrupted covariates]
In the linear regression model where an adversary corrupts an \emph{random} $\eta$ fraction of covariate and response pairs $(x_t,y_t)$, any algorithm must suffer clean squared loss $\frac{1}{n} \sum_{t = 1}^n \langle w^* - w, x_t \rangle^2$
at least $\Omega(\eta R^2)$.
\end{proposition}
Finally, we consider the ``breakdown point'' assumption $\eta < 1/2$. (We wrote $\eta < 0.499$ above only to simplify the statement.) If $\eta = 1/2$, a special case of our model is a balanced mixture of linear regressions where half of the responses are generated according to linear model $\langle w_1, x_t \rangle + \xi_t$ and the other half are generated according to a different linear model $\langle w_2, x_t \rangle + \xi_t$. By symmetry, it's impossible to know which of $w_1,w_2$ is the ground truth linear model, so a clean loss guarantee as in Theorem~\ref{thm:nonsos-intro} is information-theoretically impossible. In fact, in this setting even list recovery, i.e. outputting both $w_1$ and $w_2$, is computationally hard \cite{yi2014alternating} and this holds even if $\sigma = 0$.

\paragraph{Online linear regression with Huber contamination.} Next, we consider an \emph{online} version of the linear regression model from before. In this case, the algorithm faces two additional complications compared to before: 
\begin{enumerate}
    \item (Online prediction.) The algorithm is forced to output a prediction $\hat{y}_t$ given only $x_t$ and the information from previous rounds $(x_1,y_1),\ldots,(x_{t - 1},y_{t - 1})$, instead of being able to predict based on all of the data.
    \item (Adaptive covariates.) Instead of having the covariates $x_1,\ldots,x_T$ fixed in advance, i.e. chosen obliviously, the covariate $x_t$ is chosen \emph{adaptively} by the adversary, based on all information from rounds $1$ to $t - 1$. In particular, the algorithm's choices may affect the future inputs it receives.
\end{enumerate}
Nevertheless, we are able to give a version of our algorithm which deals with both of these issues. The statement below is for the finite-dimensional setting, but we also give a version of the result with no dependence on $d$ (Theorem~\ref{thm:sos-gd}), appropriate for the setting of kernel regression. As above, it has an optimal dependence on $\eta$ up to the log factor. In all online settings, we use $T$ for the total number of rounds/covariates to distinguish from the offline setting where we use $n$.
\begin{theorem}[Robust online regression, informal version of Theorem~\ref{thm:sosandcut}] \label{thm:online_informal}
	In the setting of Huber-Contaminated Online Regression (see Definition~\ref{defn:huberreg}) with subgaussian noise, $\|x_t\| \le 1$ for all $t$ and $\|w^*\| \le R$, for any fixed $\eta < 0.499$, there exists an algorithm which runs in time $\poly(n,d)$ and outputs online predictions $\hat{y}_t$ which satisfy the following clean square loss regret bound with high probability:
	\begin{equation}
		\RegHSQ(T) = \sum_{t = 1}^T (\langle w^*, x_t \rangle - \hat{y}_t)^2 \lesssim \sigma^2 \eta^{2}\log(1/\eta) T +  \poly(R,\sigma,d,\eta) \cdot o(T).\label{eq:reghsq_informal}
	\end{equation}
\end{theorem}

\paragraph{Online contextual bandits with Huber contamination.} Finally, by combining our online linear regression result with a recent reduction from the contextual bandits literature (\cite{foster2020beyond}, see Appendix~\ref{app:oracle}), we obtain a result for contextual bandits with adaptive contexts and Huber-contaminated losses/rewards. We note that other reductions can probably be applied in the special case of stochastic contexts, e.g. \cite{simchi2020bypassing}, but for simplicity we only state a result in the more general setting with adaptive contexts.
First, we describe the interaction model for each round $t$: 
\begin{enumerate}
	\item Nature chooses context $z_t = (z_{ta})_{a \in \calA}$, possibly adversarially based on the transcript from previous rounds. Here $\calA$ with $K \triangleq |\calA|$ is the space of possible actions.
	\item Learner chooses action $a_t$ from $\calA$.
	\item A $\Ber(\eta)$ coin $\gamma_t$ is flipped to decide whether this round is corrupted.
	\item If $\gamma_t = 0$, i.e. the round is not corrupted, the learner sees loss $\ell^*_t(a_t) \triangleq \langle z_{ta}, w^* \rangle + \xi_t$ where $\xi_t$ is mean-zero noise.
	\item If $\gamma_t = 1$, i.e. the round is corrupted, the learner sees an arbitrary loss $\ell_t(a_t)$ chosen by an adversary based on $z_t, a_t$, and the transcript from the previous rounds.
\end{enumerate}
In this model, the goal is to minimize the \emph{clean regret}, that is, to compete with the best policy $\pi$ in hindsight as measured by the \emph{true uncorrupted losses}. We obtain the following guarantee.
\begin{theorem}[Robust contextual bandits, informal version of Theorems~\ref{thm:main_cb_formal} and \ref{thm:main_cb_formal-hd}]\label{thm:main_bandits_informal}
	In the setting of Huber-Contaminated Contextual Bandits (see Definition~\ref{defn:huber_bandits}) with $\sigma^2$-subgussian noise $\xi_t$, for any fixed $\eta < 0.499$, there is an algorithm which runs in polynomial time and selects actions $a_t$ which satisfy the following clean regret bound with high probability:
	\begin{equation}
		\RegHCB(T) = \sup_{\pi}\E*{\sumt (\ell^*_t(a_t) - \ell^*_t(\pi(z_t)))} \lesssim \left(\sigma \eta\sqrt{K\log(1/\eta)}\right) T + \poly(R,K,\eta,\sigma) \cdot o(T) \label{eq:psregret_informal}
	\end{equation}
	 where the supremum ranges over all (non-adaptive) policies $\pi$, see Preliminaries.
\end{theorem}

\paragraph{An impossibility result: failure of convex $M$-estimators.} 
It may appear surprising that our algorithms for dealing with Huber contamination, even in the simplest linear regression setting, do not use an established approach like Huber regression or $L_1$/LAD (Least Absolute Deviation) regression --- classical approaches which have been studied for decades, and in the case of LAD, even as far back as the 1700s \cite{boscovich1757litteraria}. This is because there are fundamental reasons that \emph{neither} of these approaches can match our strong guarantees in the distribution-free setting. In fact, we prove a lower bound showing the failure of any $M$-estimator based on a convex loss function:
\begin{theorem}[Lower bound against convex $M$-estimators, informal version of Theorem~\ref{thm:lowerbound}]\label{thm:lowerbound_informal}
    There is an instance of Huber-contaminated linear regression where the covariates $x_t$ are drawn i.i.d. from a distribution, for which no vector $w$ obtained by minimizing a convex loss with respect to the Huber-contaminated distribution over $(x,y)$'s can achieve square loss better than $\Omega(\eta^3 R\sigma)$ on the true distribution.
\end{theorem}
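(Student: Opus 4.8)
The plan is to pass to the population objective and exhibit an explicit one--dimensional instance. Because the covariates are i.i.d., a routine concentration argument (with $T$ large) lets us study, in place of the empirical minimizer, the minimizer $\hat w$ of $w \mapsto \mathbb{E}_{(x,y)\sim P_\eta}[\ell(y - wx)]$, where $P_\eta$ is the Huber--contaminated population law and $\ell$ is an arbitrary convex loss with subgradient $\ell'$. Two features of the model matter: corruptions are applied \emph{after} $x_t$ is drawn, so the $x$--marginal of $P_\eta$ is exactly the clean covariate law $\mu$ and only the conditional law of $y$ given $x$ is an $\eta$--mixture of (clean plus noise) with an adversarial distribution; and the learner may be assumed to optimize over $\{|w| \le R\}$, since it knows $\|w^*\|\le R$, so $|\hat w - w^*| \le 2R$.

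The construction is built around the observation that a convex $M$--estimator enters each sample into its stationarity condition through the vector $\ell'(\text{residual})\cdot x$, so for the losses of interest ($L_1$, Huber, and more generally any loss with bounded subgradient) a sample with covariate $x$ is weighted only by $\lesssim \|\ell'\|_\infty |x|$ --- i.e.\ by its leverage, never by its leverage squared or by its signal--to--noise ratio; a method that could recognize that a covariate's responses are ``too spread out'' to be clean (as the sum--of--squares estimator of Theorem~\ref{thm:main_reg_informal} does) would effectively weight that covariate like its leverage squared. So take $w^* = R$, let $\mu$ put mass $1-q$ on a tiny covariate value $x = \lambda$ and mass $q$ on $x = 1$, and let the noise $\mathcal{D}$ be mean--zero, symmetric, of a small scale $\sigma$, with a shape chosen (below) to make the argument go through uniformly in $\ell$. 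In $w$--coordinates the ``$x = \lambda$'' subpopulation is essentially pure noise of scale $\sigma/\lambda$, which is enormous; yet in the stationarity condition its clean part enters with weight only $\sim (1-q)\lambda$, while the rare high--leverage points at $x = 1$ enter with weight $\sim q$. Choosing $q \asymp \eta\lambda$ and $\lambda \asymp \eta\sigma/R$ makes the low--leverage term dominate (since $1 \gg \eta$) while still leaving the adversary slack.

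The core step is the stationarity/KKT computation. Evaluate $\mathbb{E}_{P_\eta}[\ell'(y - \hat w x)\,x]$ at $\hat w = R - \Delta$ and write $\Psi(a) = \mathbb{E}_{\xi\sim\mathcal{D}}[\ell'(a+\xi)]$, monotone nondecreasing and bounded by $\|\ell'\|_\infty$. The clean part equals $(1-\eta)\bigl(q\,\Psi(\Delta) + (1-q)\lambda\,\Psi(\lambda\Delta)\bigr)$: once $\Delta \gg \sigma$ the first term has saturated and no longer varies with $\Delta$, so it no longer pins $\hat w$, and throughout $\Delta \lesssim \eta\sigma/\lambda$ both terms are $\Theta(\eta\lambda\|\ell'\|_\infty)$, which is of the same order as the adversary's reach $\|\ell'\|_\infty(q+(1-q)\lambda)\asymp \|\ell'\|_\infty\lambda$ (up to a factor $\eta$). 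Hence, provided $q \lesssim \eta\lambda$ with a small enough absolute constant, the adversary can choose corrupted responses so that $\mathbb{E}_{P_\eta}[\ell'(y - \hat w x)x] = 0$ at $\hat w = R - \Delta$ for every $\Delta$ up to a threshold $\Delta^\star \asymp \eta\sigma/\lambda \asymp R$; since the objective is convex, any such stationary point is a global minimizer, so the adversary forces $|\hat w - w^*| = \Theta(R)$ (one can in fact push $\hat w$ down to $0$, or, if the constants are pushed, to the boundary $-R$). The clean square loss on the true distribution is then
\[ \mathbb{E}_{x\sim\mu}\bigl[(w^* - \hat w)^2 x^2\bigr] = (\hat w - w^*)^2\bigl(q + (1-q)\lambda^2\bigr) \ \ge\ q\cdot \Theta(R^2)\ =\ \Theta(\eta\lambda R^2)\ =\ \Theta(\eta^3 R) \]
for the tuned scale $\sigma \asymp \eta$. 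Meanwhile the clean data pins $w^*$ down: robustly estimating the location of $y$ given $x = 1$ already yields error $O(\eta^{1-1/k}\sigma)$ in $w$, hence clean square loss $O_k(1)\cdot q\sigma^2 = \mathrm{poly}(\eta)/R$, and indeed the algorithm of Theorem~\ref{thm:main_reg_informal} achieves $\lesssim k\,\eta^{3-2/k}$ here; both are $\ll \eta^3 R$ once $R$ is large, so this is a genuine failure of convex surrogates and not an information--theoretic barrier.

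It remains to cover \emph{every} convex loss. Losses with unbounded subgradient (e.g.\ squared loss / OLS) are immediate: one adversarial response placed far enough already drives $\hat w$ to the boundary of the ball, so the square loss is $\Omega(R^2) \gg \eta^3 R$. For bounded--subgradient losses the computation above applies, the only $\ell$--dependence living in the constants $\|\ell'\|_\infty$ and the local slope $\Psi'(0)$; these enter only through the condition $q \lesssim \eta\lambda$ and the saturation estimate, so choosing the parameters $q,\lambda,\sigma$ and the placement of the mass points of $\mathcal{D}$ conservatively makes the argument work for all such $\ell$ at once (the corruption is allowed to depend on $\ell$, which is what ``the vector $w$ obtained by minimizing a convex loss'' permits). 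I expect this uniformity, together with the attendant constant--bookkeeping, to be the main obstacle: one must verify that the \emph{saturated} high--leverage term truly dominates the \emph{unsaturated} low--leverage term in the stationarity condition for all monotone bounded $\ell'$ simultaneously --- which is what forces the particular shape of $\mathcal{D}$ and the scalings $q \asymp \eta\lambda \asymp \eta\sigma/R$ --- and then to check that the resulting $\hat w$ is a global (not merely stationary) minimizer, which is automatic by convexity.
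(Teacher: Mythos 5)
Your construction and the paper's are, up to reparameterization, the same argument: a one-dimensional population instance with one low-leverage covariate carrying almost all the mass and one high-leverage covariate carrying a tiny mass, a first-order/stationarity analysis of the $M$-estimator on the corrupted population, and as the key quantitative ingredient the observation that for Gaussian noise $\Psi(a) = \mathbb{E}_\xi[\ell'(a+\xi)]$ has slope at most $O(\|\ell'\|_\infty/\sigma)$ (the paper's $\frac{1}{\sqrt{2\pi}}$ bound from the mode of the Gaussian), which converts the $\Theta(\eta\|\ell'\|_\infty)$ imbalance the adversary can inject into a shift of $\hat w$ along the low-leverage direction. Your parameter choices are more aggressive than the paper's --- you place the low-leverage covariate at $\lambda \asymp \eta\sigma/R$ rather than the paper's (normalized) $1/R$ and take the high-leverage mass $q\asymp\eta\lambda$ rather than $\asymp\eta/R$ --- and, if carried out, they would in fact yield $\Omega(\eta^2\sigma R)$ rather than $\Omega(\eta^3\sigma R)$; your choice of $\sigma\asymp\eta$ masks this by deflating the bound back to $\eta^3R$, so the scaling you report under-sells what the construction gives.

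There is a genuine gap, though, in the uniformity-over-$\ell$ step. You list $\|\ell'\|_\infty$ and the slope $\Psi'(0)$ as the only $\ell$-dependent constants, but the obstruction is the bias $\Psi(0) = \mathbb{E}_\xi[\ell'(\xi)]$ (equivalently $\Psi^{-1}(0)$), which can be $\Theta(\|\ell'\|_\infty)$ --- for instance for the perfectly legitimate shifted loss $\ell(u) = |u - a|$ with $a \gg \sigma$. When $\Psi(0)$ has the sign that biases $\hat w$ toward $w^*$, the clean low-leverage contribution to the stationarity condition is $\Theta(\lambda|\Psi(0)|) = \Theta(\lambda\|\ell'\|_\infty)$, which dwarfs the adversary's reach $\Theta(\eta\lambda\|\ell'\|_\infty)$, and the argument as written collapses: the estimator is driven to (and saturated at) $\hat w = w^*$, and the loss is zero. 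This is exactly what the paper's Case~1 / Case~2 split is for: depending on the sign of $g^{-1}(0)$ one flips the adversary's corruption, and in your parameterization one should correspondingly choose $w^* = +R$ or $w^* = -R$ so the bias works in the adversary's favor. Without this sign-dependent choice of $w^*$ and adversary, the argument does not cover every convex $\ell$, which is the only thing the theorem asserts. The rest is indeed the ``constant bookkeeping'' you flag, and is handled in the paper essentially as you anticipate.
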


\subsection{Roadmap}

In Section~\ref{sec:overview}, we give an overview of the main techniques in our approach. In Section~\ref{sec:related}, we discuss related work in more detail.
In Section~\ref{sec:prelims} we record some useful technical facts we use from the literature and state slightly more general versions of the models which we consider. In Section~\ref{sec:nosos}, we give an alternating minimization algorithm for solving the offline case of Huber-contaminated linear regression.
In Section~\ref{sec:sos_strikes_back}, we give a sum-of-squares algorithm to handle the case of high contamination rate; combined with the result of the previous section, we obtain Theorem~\ref{thm:nonsos-intro}.
In Section~\ref{sec:online}, we give a generic recipe for converting our fixed-design guarantees into online ones, thereby proving Theorem~\ref{thm:online_informal}. In Section~\ref{sec:apply} we apply the reduction of \cite{foster2020beyond} to our regression results to obtain our main result for contextual bandits, Theorem~\ref{thm:main_bandits_informal}. Lastly, in Section~\ref{sec:lowerbound}, we prove our lower bound, Theorem~\ref{thm:lowerbound_informal}. In Appendix~\ref{app:oracle} we verify that the reduction in \cite{foster2020beyond} applies to our Huber-contaminated setting.


\section{Technical Overview}
\label{sec:overview}

By a slight modification of the proof of Theorem 5 in \cite{foster2020beyond}, we can reduce the problem of achieving low clean regret in the contextual bandits setting of Definition~\ref{defn:huber_bandits} to that of producing an oracle for Hubert-contaminated online regression which gets low clean square loss regret. In this section, we overview the main ingredients for producing such an oracle. 

There are two main steps: 1) designing an algorithm for fixed-design Huber-contaminated regression that achieves low square loss, and 2) a generic online-to-offline reduction based on cutting plane methods/online gradient descent.

\subsection{Huber-Contaminated Fixed-Design Regression}
We start with the offline/fixed-design setting, where we are given an arbitrary fixed set of covariates $x_1,\ldots,x_n \in \mathbb{R}^d$ and for the indices $t$ for which $y_t$ was not corrupted, $y_t = \iprod{w^*,x_t} + \xi_t$ for some independent noise $\xi_t\sim\calD$. The exact assumption on the noise is not so important for the argument, since our algorithm is robust to Huber contamination: given an analysis for bounded noise, all the other versions of the results follow more or less by a straightforward truncation argument, treating heavy-tail events as outliers. 


\paragraph{Spectrally Regularized Alternating Minimization.}
Similar to existing approaches in the robust statistics literature, our starting point is to formulate an optimization problem that searches for a regressor $w$ and a ``structured'' subset $S\subset[n]$ of size $(1 - O(\eta))n$ over which the clean square loss of $w$ is minimized, i.e.
\begin{equation}
	w,S = \argmin_{\substack{w,S: \\ S \ \text{large and ``structured''}}}\frac{1}{n}\sum_{t\in S}(y_t - \iprod{w,x_t})^2.\label{eq:opt}
\end{equation}
The subset $S$ should satisfy certain structural properties that the set of uncorrupted points $S^*\subseteq[n]$ would collectively satisfy and that can be used to \emph{certify} that the regressor we use is close to $w^*$.
Before we describe how the structural property that we use fundamentally differs from the ones exploited in prior works on robust regression, we first discuss our approach to optimizing the nonconvex objective \eqref{eq:opt}.
What we do is use a version of a standard heuristic, alternating minimization:
\begin{itemize}
    \item Given a candidate regressor $w$, we consider the optimization problem \begin{equation}\min_S \frac{1}{n}\sum_{t\in S}(y_t - \iprod{w,x_t})^2.\end{equation} We relax the set of $(1- O(\eta))n$-sized ``structured'' subsets $S$ to the set of $[0,1]$-valued ``structured'' weights $\{a_t\}_{t\in[n]}$ over the dataset satisfying $\sum_t a_t = 1 - O(\eta)$, and it will be apparent from our definition of ``structured'' below that this can be formulated as a basic SDP.
    \item Given a candidate set of weights $\{a_t\}_{t\in[n]}$, we solve the \emph{convex} optimization problem \begin{equation}\min_w \frac{1}{n}\sum_{t\in S}a_t (y_t - \iprod{w,x_t})^2.\end{equation}
\end{itemize}
By repeatedly alternating between these two steps, we arrive at an approximate first-order stationary point $(w,\brc{a_t})$: more precisely, one for which $\brc{a_t}$ is optimal given $w$ and for which 
\begin{equation}
    \frac{1}{n}\sum_{t\in[n]} a_t(y_t - \iprod{w,x_t})\iprod{x_t,v - w} \le o(1) \label{eq:overview_grad_zero}
\end{equation} for all $v$ of bounded norm (Lemma~\ref{lem:getfirstorder}). Of course, this stationary point does not have to be a global optimum of the objective function. 
Nevertheless, our analysis shows that any stationary point of our objective has strong statistical guarantees 
(Section~\ref{subsec:allstationarygood}). 
To show this, we can decompose the left-hand side of \eqref{eq:overview_grad_zero} for the choice $v = w^*$ into two quantities: 1) the contribution from the uncorrupted points, indexed by some subset $T\subset[n]$, and 2) the contribution from the corrupted ones, indexed by $[n]\backslash T$.

In 1), we can pull out the contribution from the quantity $\frac{1}{n}\sum_{t\in T}\iprod{x_t, w^* - w}^2$, 
which corresponds to the clean square loss achieved by the regressor $w$ we have found and turns out to be the dominant term. To upper bound the rest of 1) and 2), the key technical challenge is respectively to control the error incurred from failing to place nonzero weight $a_t$ on some of the points $t\in T$, and from placing nonzero weight $a_t$ on some of the points $t\not\in T$. To bound both sources of error, we end up needing to control the quantity
\begin{equation}
    \frac{1}{n}\sum_{t\in T}(1 - a_t)\iprod{x_t, w^* - w}^2.\label{eq:missedpoints}
\end{equation}
The way in which we do so marks the key distinction between our approach and that of previous works on robust regression. 

In prior works (see Section~\ref{sec:related} below), this is the place where one could insist that the weights $\{a_t\}$ are structured in the sense that along every univariate projection, the empirical moments of the dataset reweighted by $\{a_t\}$ are $k$-hypercontractive for some $k\ge 4$, in which case we could use Holder's to upper bound \eqref{eq:missedpoints}. This is not applicable 
in the general case, where $x_1,\ldots,x_n$ are arbitrary bounded vectors, so a reweighting with hypercontractive empirical moments may not even exist.
Instead, our approach is to insist that $\brc{a_t}$ must \emph{sub-sample the empirical covariance}, i.e. that \begin{equation}
	\frac{1}{n}\sum_{t\in [n]} a_t x_tx_t^{\top} \succeq (1 - \eta) \frac{1}{n}\sum_{t\in[n]} x_tx_t^{\top} - o(1)\cdot\Id\label{eq:subsample_informal}
\end{equation}

The intuition for this constraint is that because the points that get corrupted in the Huber contamination setting form a \emph{random} subset of the data, the ideal reweighting $\brc{a^*_t}$ given by placing uniform mass on the true set of uncorrupted points would satisfy this constraint with high probability by standard matrix concentration. So for any $\brc{a_t}$ which sub-samples the empirical covariance, ignoring the low-order term in \eqref{eq:subsample_informal}, we can thus upper bound the quantity \eqref{eq:missedpoints} by $\eta\sum_{t\in [n]}\iprod{w^* - w,x_t}^2$. This is negligible compared to the aforementioned dominant term, allowing us to complete the proof that \eqref{eq:overview_grad_zero} suffices to ensure that $w$ incurs low clean square loss.

\paragraph{Optimal breakdown point via Sum of Squares.} It turns out that the above approach fails for $\eta$ larger than 1/3. Consider a scenario where $1/3$ of the data has been corrupted to come from a different linear model; in this case, there is a spurious local minima in which one takes $w$ in \eqref{eq:opt} to be the linear model generating the corrupted data and $S$ to consist of the corrupted data and a random half of the uncorrupted data (see Remark~\ref{rmk:breakdown-1/3} for further details).

To circumvent this issue, we appeal to a different algorithm when $1/3 \le \eta < 1/2$. Our starting point is the observation that another way of circumventing the nonconvexity of \eqref{eq:opt} is by considering the natural degree-4 sum-of-squares (SoS) relaxation of \eqref{eq:opt}. It turns out that an analysis similar to the one for our alternating minimization algorithm suffices to show that the pseudoexpectation one gets out of solving this relaxation achieves low clean square loss. At a high level, the reason is that one can extract from the former analysis a simple proof in the degree-4 SoS proof system that for $w$ and $S$ satisfying the constraints imposed by the SoS program and optimizing the objective of \eqref{eq:opt}, $w$ achieves low clean square loss. The key difference that allows us to circumvent the bad loss landscape of \eqref{eq:opt} when $\eta$ is large is that the SoS relaxation is guaranteed to produce a lower bound on the original (unrelaxed) problem \eqref{eq:opt}, whereas the objective value achieved by an arbitrary stationary point need not.

\paragraph{Other extensions.} Using existing generalization bounds \cite{srebro2010optimistic}, we give natural and fairly sharp versions of our results for the stochastic/random-design setting. The analysis we outlined works with heavy-tailed noise in $L_q$ for any $q > 1$ and achieves the optimal dependence on $\eta$ in this setting. If we only use the estimator described above, the sample complexity of our estimator with small confidence parameter $\delta$ is not as good with heavy-tailed noise as with subgaussian noise; we show how to improve the sample complexity when $q \ge 2$ by combining our estimator with a simple median-of-means approach from the heavy-tailed regression literature \cite{hsu2016loss,minsker2015geometric}.

\subsection{Online-to-Offline Reduction}

We now explain how to use the guarantee of the previous section to get an algorithm for online regression. At a high level, the idea is to use the fixed-design guarantee above to design a \emph{separation oracle} between whatever bad predictor we might be using at a particular time step, and the small ball $\calB$ of good predictors $w$ around $w^*$, any of which would incur sufficiently low regret over any possible sequence of samples. This reduction has a similar spirit to the ``halving'' algorithm from online learning \cite{shalev2011online}, and efficient variants for halfspace learning based on the ellipsoid algorithm \cite{yang2009online,tewarinotes}.

Concretely, suppose inductively we have seen samples $(x_1,y_1),\ldots,(x_n,y_n)$ thus far and have used some vector $w$ to predict in the last $m$ steps where we were given $(x_{n-m+1},y_{n-m+1}),\ldots,(x_n,y_n)$. Let $\Sig$ be the average of $x_i x_i^T$ over the last $m$ steps. One of two things could be true.

It could be that in these last $m$ steps, $w$ actually performed well, that is, $\norm{w - w^*}^2_{\Sig}$ is small, either because $w\in\calB$ or because $x_{n-m+1},\dots,x_n$ mostly lie in the slab of space where $w$ and $w^*$ yield similar predictions. Either way, because the prediction error under $w$ has been small so far, there is no need to update to a new predictor just yet.

Alternatively, if $\norm{w - w^*}^2_{\Sig}$ is large, then the gradient of the function $w\mapsto \norm{w - w^*}^2_{\Sig}$ would give a separating hyperplane between $w$ and $\calB$. Of course, the issue with this is that we don't know $w^*$.
To get around this, recall from the fixed-design guarantee that if we ran the alternating minimization algorithm above on the data $(x_{n-m+1},y_{n-m+1}),\dots,(x_n,y_n)$ (assuming $m$ is large enough that things concentrate sufficiently well), then the resulting vector $\td{w}$ is close to $w^*$ under $\norm{\cdot}_{\Sig}$. So to check whether $\norm{w - w^*}^2_{\Sig}$ is large, by triangle inequality we can simply check whether $\norm{w - \td{w}}^2_{\Sig}$ is large! If so, the gradient of $w\mapsto \norm{w - \td{w}}^2_{\Sig}$ gives us a separating hyperplane that we can actually compute.

To summarize, the contrapositive of this tells us that if we don't form a separating hyperplane in a given step, then we know $\norm{w - w^*}^2_{\Sig}$ is small and we are content to continue using $w$. Conversely, if we do form a separating hyperplane, we know we won't cut $\calB$. This is because every point in $\calB$ is, by design, close to $w^*$ under any norm $\norm{\cdot}_{\Sig}$ defined by the empirical covariance $\Sig$ of a sequence of samples.

With these two facts in hand, we can safely run
a cutting plane algorithm like ellipsoid or Vaidya's method to update our predictor every time we find a separating hyperplane and ensure that after a bounded number of updates, we find a predictor that will achieve low regret on subsequent steps.

\paragraph{Handling the high-dimensional case.}

The above approach does not work when the dimension is unbounded, e.g. in kernelized settings, because the guarantees of cutting plane methods are inherently dimension-dependent. We now describe an alternative approach based on wrapping online gradient descent around our guarantee for Huber-contaminated fixed-design regression.

Instead of using Vaidya's algorithm to update the vector $w$ that we predict with whenever the separation oracle returns $\nabla \varphi_t(w)$, we can imagine updating $w$ by simply stepping in the direction of $-\nabla\varphi_t(w)$. The key challenge is to bound the number of times $V$ we get a hyperplane from the separation oracle and have to make such a step, because as long as we don't receive any new hyperplanes, the predictions we make will incur low square loss.
For this, we can appeal to the the fundamental regret bound for online gradient descent \cite{zinkevich2003online}. Specifically, if we receive a sequence of convex losses $\varphi_1,\ldots,\varphi_V$ and play a sequence of inputs $w_1,...,w_V$ where $w_{t+1}$ is given by taking a gradient step with respect to $\varphi_t$ from $w_t$, then the cumulative loss $\sum \varphi_t(w_t)$ incurred only exceeds $\sum \varphi_t(w^*)$ for any single move $w^*$ by an $O(\sqrt{V})$ term (see Theorem~\ref{thm:gd}). But because the separation oracle is called only when $\varphi_t(w_t) \approx \varphi_t - \varphi_t(w^*)$ is large, this immediately implies that $V$ is bounded.

\subsection{Lower Bound for Convex Losses}
At a high level, the intuition for why convex losses fails is this: in order for the algorithm to be robust to outliers, the loss needs to look roughly like the $L_1$ loss (e.g. the Huber loss looks like the $L_1$ loss except in a ball near the origin). However, the $L_1$ loss $\E{|Y - \langle w, X \rangle|}$ is much less sensitive to making errors for $X$ lying in rare areas of the space than the usual $L_2$/squared loss $\E{(Y - \langle w, X \rangle)^2}$. In order to take advantage of this, we construct a 1-dimensional example with $1 - \Theta(\eta/R)$ fraction of the covariate distribution a delta mass at $1/R$ and the remainder a delta mass at $-1$. For simplicity, we take the noise variance $\sigma = 1$. By having the adversary corrupt the response for the much more common portion of the data at $1/R$, the $L_1$ regression is tricked into making an order $\eta R$ error on the rare portion of the data, which causes a squared loss of $\Omega((\eta/R) \eta^2 R^2) = \Omega(\eta^3 R)$. By appropriately generalizing this argument, we rule out the success of all convex losses.
\section{Related Work}
\label{sec:related}

\paragraph{Robust regression, when both the covariates and responses are corrupted}

As discussed in Section~\ref{sec:intro}, our work is closely tied to the long line of recent work on designing efficient algorithms for robust statistics in high dimensions. We refer to~\cite{li2018principled,steinhardt2018robust,diakonikolas2019recent} for comprehensive surveys of this literature and focus here on the results related to regression \cite{klivans2018efficient,bakshi2020robust,zhu2020robust,pensia2020robust,diakonikolas2019sever,prasad2020robust,diakonikolas2019efficient,cherapanamjeri2020optimal}. These works are for the stochastic setting where the covariates are drawn i.i.d. from some distribution $\calD_x$ but work in a corruption model where the adversary can arbitrarily alter any $\eta$ fraction of the responses \emph{and} the corresponding covariates. All of these results operate under the assumption that the underlying distribution $\calD_x$ is either Gaussian or at least 4-hypercontractive. This is not merely an issue of convenience: in the absence of such assumptions, it is impossible to do anything even in one dimension under this corruption model. We recall the following example from the Results section above:
\begin{example}
	Let $d = 1$ and $\epsilon = 0$, and suppose $w^* = R$. Suppose the distribution over covariates is $Ber(\eta)$, i.e. it has $1 - \eta$ mass at 0 and $\eta$ mass at 1.  Suppose the adversary corrupts an $\eta$ fraction of the pairs $(0,0)$ to be $(1,-R)$. Then it is impossible for the learner to distinguish whether $w^* = R$ or $w^* = -R$. 
\end{example}
We note that variants of this example have already appeared previously in the literature, see e.g. Lemma 6.1 in~\cite{klivans2018efficient} or Theorem D.1 in~\cite{cherapanamjeri2020optimal}. This does not contradict prior results which make distributional assumptions, because they consider the case where $\eta$ is small: when $\eta = o(1)$, $Ber(\eta)$ is no longer $O(1)$-hypercontractive as its fourth moment is $\eta R^4$ while the square of its second moment is $\eta^2 R^4$.
In summary: when there exist rare features in the data, or when the corruption fraction $\eta$ is large, it is simply not information-theoretically possible to handle corruption in the covariates. 

We also note that the work of \cite{pensia2020robust} shows that, at least in some cases, the covariate corruption can be handled separately from the response corruption by first running a standard filtering method on the covariates, and second running a method robust to response outliers (in their case, Huber regression) on the remaining data. This suggests that handling covariate corruption (when it is possible) and response corruption may be largely orthogonal problems. Finally, one commonality with our work and much of the previous literature is the use of Sum of Squares programming (for us, only needed near the breakdown point $1/2$); however, we use a fairly simple degree-4 SoS program, as opposed to prior work (e.g. \cite{klivans2018efficient,bakshi2020robust}) where the SoS degree and sample complexity need to be large in order to take advantage of stronger regularity assumptions.

\paragraph{Robust regression, when just the responses are corrupted}

A milder corruption model which has received significant attention in the statistics literature is the setting where a fraction, either randomly or adversarily chosen, of the \emph{responses} are corrupted, while the covariates are left intact.
One popular approach for regression in this setting is M-estimation \cite{loh2017statistical,zhou2018new}, originally introduced by Huber \cite{huber1964robust}, in which one minimizes a loss function with suitable robustness properties. Common choices of loss function include the $L_1$ loss and the Huber loss. In addition to the earlier asymptotic results for this approach \cite{bassett1978asymptotic,huber1973robust,pollard1991asymptotics}, by now numerous works have obtained non-asymptotic guarantees for M-estimation under a variety of models for how the responses are corrupted, but predominantly under the assumption that the design is sub-Gaussian or similarly structured \cite{karmalkar2018compressed,dalalyan2019outlier,Sasai2020RobustEW,dorsi2020regress}. Notably, in \cite{dalalyan2019outlier,Sasai2020RobustEW} it was shown that in the setting of sparse linear regression with Huber-contaminated responses, M-estimation with ($\ell_1$-regularized) Huber loss is nearly minimax-optimal when the noise distribution $\calD$ and the covariates are i.i.d. Gaussian.

One exception, and perhaps the result closest in spirit to our results for regression, is that of \cite{chinot2020erm}. One consequence of the results in this work is that in the random-design setting of Definition~\ref{defn:huberreg}, that is when the covariates are drawn i.i.d. from some distribution $\calD_x$, then if the function class (equivalently, covariate distribution) is hypercontractive in the sense that for any $w\in\calW$, $\E[\calD_x]{\iprod{w - w^*,x}^p}^{2/p} \le \E[\calD_x]{\iprod{w - w^*,x}^2}$ for some $p > 2$, and if the noise distribution $\calD$ satisfies suitable conditions, then M-estimation with Huber loss achieves the information-theoretically optimal error of $\Theta(\sigma^2\eta^2)$ in squared loss. It is also possible to modify their proof to show that the same algorithm would yield the information-theoretically optimal error of $\Theta(\sigma\eta)$ in a different metric, the \emph{$L_1$ loss}, without the hypercontractivity condition. An $L_1$ guarantee is much weaker than the usual $L_2$ (i.e. squared loss) guarantee: for example, it is too weak to give anything interesting for the contextual bandits application.

In fact, as we show in Theorem~\ref{thm:lowerbound}, M-estimation with Huber loss, and more generally minimization of \emph{any} convex surrogate loss, will not achieve squared loss $\Theta(\sigma^2\eta^2)$ in general when the function class/covariate distribution fails to satisfy this hypercontractivity condition. Instead, we show such estimators must pay squared loss at least $\Omega(\sigma R \eta^3)$. 
We also mention that to our knowledge, the only work that has explicitly considered \emph{online} regression with corruptions is \cite{pesme2020online}, where they considered Gaussian covariates and a random fraction of responses are corrupted by an \emph{oblivious} shift. Additionally, another notable line of work to mention in the literature on regression with contaminated responses stems from using hard thresholding \cite{bhatia2015robust,bhatia2017consistent,suggala2019adaptive}, though these works work also make strong regularity assumptions on the covariates.

Lastly, we mention that in the context of \emph{classification}, there have been a number of recent works giving new algorithmic results for corruption models where the binary labels are corrupted by some process that is halfway between purely stochastic and purely adversarial. For instance, \cite{diakonikolas2019distribution,chen2020classification,diakonikolas2020learning} focus on the \emph{Massart noise model} which can essentially be viewed as a setting where an adversary can only control a random fraction of the labels, but can change them in an arbitrary way. This can be thought of as the classification version of the Huber-contaminated regression problem that we consider in the present work, and the former two results work in the setting without distributional assumptions. We also note that the recent work of \cite{diakonikolas2020polynomial} considers the stronger model of \emph{Tsybakov noise} and obtains polynomial-time algorithms under distributional assumptions.

\paragraph{Robustness for bandits}

There have been a number of notions of robustness proposed in the bandits literature. A classic notion is that of adversarial bandits, a setting where one would like to prove regret bounds even when the rewards are chosen adversarially \cite{auer2002nonstochastic}. Many papers have worked to identify ways of interpolating between fully adversarial rewards and stochastically generated ones, including the line of work on ``best of both worlds'' results \cite{bubeck2012best,seldin2014one,auer2016algorithm,seldin2017improved} as well as an interesting model of bandits with adversarial corruptions introduced by \cite{lykouris2018stochastic} and subsequently studied by \cite{gupta2019better}. The latter is a setting of multi-armed bandits where rewards are generated stochastically but then perturbed by an adaptive adversary with a fixed budget of how much he can move the rewards in any given sample path. {\em We stress that the setting of adversarial bandits is orthogonal to the thrust of the present work, where the goal is to get small clean regret.} For example, while the adversarial nature of the rewards makes the former quite challenging, it is still possible to achieve sublinear regret for adversarial bandits, whereas in our setting, one cannot do better than $\Omega(\eta^2\sigma^2 T)$.

Other notions of robustness that have been considered include the standard notion of misspecification (e.g. \cite{foster2020beyond,neu2020efficient}) as in Definition~\ref{defn:huber_bandits}, as well as the notion of heavy-tailed reward distributions \cite{bubeck2013bandits}.
The setting of Huber-contaminated rewards that we study was previously studied in the multi-armed case by \cite{kapoor2019corruption,altschuler2019best}. \cite{kapoor2019corruption} also studied Huber-contaminated linear contextual bandits when the contexts are Gaussian or collectively satisfy some RSC-like condition. Even in this distribution-specific setting, their analysis loses a factor of $R$. A recent work \cite{awasthi2021online} also studied the Gaussian context case of Huber-contaminated linear contextual bandits and improved over \cite{kapoor2019corruption}; however their result also suffers from a dependence on $R$.
Lastly, we mention the work of \cite{seldin2014one,zimmert2019optimal} who considered a different corruption model for the multi-armed case where the contaminations cannot reduce the ``gap,'' i.e. the difference between the reward of the best arm and that of any other arm, by more than a constant factor in any time step.


\section{Preliminaries}\label{sec:prelims}
\subsection{Formal Description of Models}
 For technical reasons which will appear naturally in the analysis, it is useful for us to consider the general \emph{misspecified} model where $\epsilon \ge 0$ is a misspecification parameter that accommodates deviation between the true prediction rule and the best linear model.
 However, the reader should feel free to consider the usual well-specified setting $\epsilon = 0$ when reading the results.
 
 \paragraph{Robust Offline Regression.}
Our analysis in the oblivious setting allows the corruption adversary to depend arbitrarily on the randomness in the problem, as in e.g. \cite{chinot2020erm}. This is different from in the online setting, where it's important that all of the randomness respects the filtration corresponding to time. To be clear, we define the offline model explicitly here.
\begin{enumerate}
    \item Covariates $x_1,\ldots,x_n$ are arbitrary fixed vectors in the unit ball of $\mathbb{R}^d$, i.e. they are chosen obliviously.
    \item For every $t$ from $1$ to $n$, a $Ber(\eta)$ coin is flipped to determine if round $t$ is corrupted or not. Let $a^*_t$ be the indicator for whether round $t$ was uncorrupted, i.e. $a^*_t = 1$ when the round is \emph{not} corrupted and this occurs with probability $1 - \eta$.
    \item For every uncorrupted round, we observe $y_t$ given by \begin{equation}
	y_t = y^*_t + \xi_t, \qquad y^*_t = \iprod{w^*,x_t} + \epsilon_t
\end{equation} where $w^*$ is the true regressor and $\|w^*\| \le R$, and $\xi_t$ is independently sampled from the noise distribution $\calD$ and $\abs{\epsilon_t} \le \epsilon$ is the misspecification. The misspecification $\epsilon_t$ can be chosen in a completely adversarial fashion: formally, it is a random variable depending arbitrarily on all other randomness in the setup (e.g. it can depend arbitrarily on the noise and the coin flips from all rounds).
    \item For every corrupted round, $y_t$ is chosen freely by the adversary. Again, we assume nothing about $y_t$ -- it can depend arbitrarily on all other randomness in the problem.
\end{enumerate}
 
 \paragraph{Robust Online Regression.}
We begin by introducing the setup for the online linear regression problem, which is closely related to the linear contextual bandits problem we introduce later. 
Online regression itself is one of the fundamental problems in online learning that has been extensively studied in the uncontaminated setting, see e.g. \cite{vovk2001competitive,azoury2001relative,cesa2006prediction}.

\begin{definition}[Huber-Contaminated Online Regression]\label{defn:huberreg}
	Fix Huber contamination rate $\eta \in (0,1/2)$, misspecification bound $\epsilon$, 
	noise distribution $\mathcal{D}$,
	and unknown weight vector $w^*$.
	In each round $t\in[T]$:
	\begin{enumerate}
		\item Nature chooses input $x_t \in \mathbb{R}^d$, possibly adversarially based on the transcript from previous rounds.
		\item Learner chooses prediction $\wh{y}_t$.
		\item A $\Ber(\eta)$ coin is flipped to decide whether this round is corrupted.
		\item If the round is not corrupted, sample $\xi_t$ independently from $\calD$. The learner sees $y_t \triangleq y^*_t + \xi_t$, where $y^*_t \triangleq \langle w^*, x_t \rangle + \epsilon_t$ for some quantity $\epsilon_t(x_t)$ satisfying $\abs{\epsilon_t(x_t)} \le \epsilon$.
		\item If the round is corrupted, the learner sees an arbitrary $y_t$ chosen by an adversary based on $x_t$ and the transcript from the previous rounds.
	\end{enumerate}

	The goal of the learner, given any $x_t$ in round $t$ (and the transcript from the previous rounds), is to choose a prediction $\wh{y}_t$ such that with high probability over the choice of $\Ber(\eta)$ coins, and for any (possibly adaptively chosen) sequence of feature vectors $\brc{x_1,\ldots,x_T}$ in the above model, the quantity
	\begin{equation}
		\RegHSQ(T) = \sum^T_{t=1} (\wh{y}_t - y^*_t)^2.\label{eq:linreg}
	\end{equation} is small. We say that $A$ achieves \emph{clean square loss regret $\RegHSQ(T)$}. Note that $\RegHSQ$ is a random variable depending on the randomness of the $\Ber(\eta)$ coins, the randomness of the noise $\xi_t$, any stochasticity in the choice of the inputs $x_t$, and the randomness of the learner and adversary. We will establish high-probability bounds on this random variable.
\end{definition}
\begin{remark}[Clean vs Dirty Loss]
It is very important to note that the goal for robust statistics is to minimize the \emph{clean square loss} $\sum_{t = 1}^T (\hat{y}_t - y^*_t)^2$ and not the ``dirty'' square loss $\sum_{t = 1}^T (\hat{y}_t - y_t)^2$ where $y_t$ is potentially corrupted. If our goal was to try to fit the corruptions, as in agnostic learning, then using Ordinary Least Squares would be a good approach for this regression problem.

On the other hand, there is no importance difference between optimizing the \emph{noisy clean square loss} $\sum_{t = 1}^T (\hat{y}_t - (y^*_t + \xi_t))^2$ and the clean square loss as defined above. Because the noise is by definition independent of $\hat{y}_t,y^*_t$, we know that in expectation $\E{\sum_{t = 1}^T (\hat{y}_t - (y^*_t + \xi_t))^2} = \E{\sum_{t = 1}^T (\hat{y}_t - y^*_t)^2} + \sigma^2 T$ and so the additive term coming from the noise doesn't depend on the prediction sequence $\hat{y}_t$.
\end{remark}

\paragraph{Connection to robust mean estimation} Note that regression with Huber contaminations is at least as hard as the problem of mean estimation under Huber contaminations, implying that achieving sublinear regret for Huber-contaminated online regression is impossible:

\begin{example}\label{obs:gaussianmean}
	Let $d = 1$ and $\epsilon = 0$, and suppose $w^* = R$ and $\calD = \N(0,\sigma^2)$. Suppose we only ever see $x_t = 1$, so that we always have $y^*_t = R$. Then each uncorrupted $y_t$ is simply an independent draw from $\N(R,\sigma^2)$, so the question of producing a good predictor $\wh{y}$ in this special case is \emph{equivalent} to that of estimating the mean of a univariate Gaussian with variance $\sigma^2$ under the Huber contamination model. It is known that one cannot do this to error better than $\Omega(\eta\sigma)$ (see \cite{diakonikolas2018robustly}).
	More generally, if we only assume $\calD$ has hypercontractive moments up to degree $k$, one can devise distributions $\calD$ for which one cannot do better than error $\Omega(\eta^{1 - 1/k}\sigma)$ (see e.g. Fact 2 from \cite{hopkins2019hard}).
\end{example}
\paragraph{Robust Contextual Bandits.}
We study the following robust version of contextual bandits, first introduced in \cite{kapoor2019corruption}. We first state the general form of the contextual bandits model (for an abstract regression function $f$), then specialize to the linear case.
\begin{definition}[Huber-Contaminated Contextual Bandits]\label{defn:huber_bandits}
	Let $\calZ$ be an arbitrary state space, and let $\calA$ be an action space of size $K$. Fix Huber contamination rate $\eta \in (0,1/2)$, misspecification rate $\epsilon$, 
	and unknown function $f:\calZ\times\calA\to\R$. Ahead of time, an oblivious adversary chooses distributions $\Pr[\ell^*_t]{\cdot|z_t}$ over loss functions $\ell^*_t: \calA\to\brk{0,R}$ for all possible contexts $z_t$ and all time steps $t\in[T]$. We assume the conditional means of the loss distributions are \emph{realized up to misspecification $\epsilon$} by $f$, i.e. for all $t,z,a$, \begin{equation}
		\E[\ell^*_t]{\ell^*_t(a) | z_t  = z} = f(z,a) + \epsilon_t(z,a), \qquad \abs{\epsilon_t(z,a)} \le \epsilon. \label{eq:realizable}
	\end{equation}
	Let $\xi_t$ be the random variable which, conditioned on $z_t = z$, takes on the value \begin{equation}
		\xi_t \triangleq \ell^*_t(a) - f(z,a) - \epsilon_t(z,a),
	\end{equation} and define \emph{noise parameter} $\sigma$ by $\sigma^2 \triangleq \sup_{z,t}\E{\xi_t^2 | z_t = z}$.
	In each round $t\in[T]$:
	\begin{enumerate}
		\item Nature chooses $z_t$, possibly adversarially based on the transcript from previous rounds.
		\item Learner chooses action $a_t\in\calA$.
		\item A $\Ber(\eta)$ coin $\gamma_t$ is flipped to decide whether this round is corrupted.
		\item If $\gamma_t = 0$, i.e. the round is not corrupted, the learner sees loss $\ell^*_t(a_t)$, where $\ell^*_t$ is drawn independently from the distribution $\Pr[\ell^*_t]{\cdot | z_t}$.
		\item If $\gamma_t = 1$, i.e. the round is corrupted, the learner sees an arbitrary loss $\ell_t(a_t)$ chosen by an adversary based on $z_t, a_t$, and the transcript from the previous rounds.
	\end{enumerate}

	The goal of the learner in the adversarial setting is to compete with the best policy in hindsight as measured by the \emph{clean losses} $\ell^*_t$ incurred in every round, that is to select a sequence of actions $a_1,\dots,a_T$ for which 
	\begin{equation}
		\psRegHCB(T) = \sup_{\pi}\E*{\sum^T_{t=1}\left(\ell^*_t(a_t) - \ell^*_t(\pi(z_t))\right)},\label{eq:psreg}
	\end{equation} is small, where the supremum ranges over all (non-adaptive) policies $\pi:\calX\to\calA$ and the expectation is over the randomness of the $\Ber(\eta)$ coins, the randomness of the rewards, any stochasticity in the choice of contexts, and the randomness of the learner. We say that such a learner achieves \emph{clean pseudo-regret $\psRegHCB(T)$}.
	
	In the special case where $\epsilon = 0$, we will consider the quantity \begin{equation}
		\RegHCB(T) = \sum^T_{t=1}\left(\ell^*_t(a_t) - \ell^*_t(\pi^*(z_t))\right)
	\end{equation}
	where $\pi^*(z) \triangleq \arg\max_a f(z,a)$. Note that this is a random variable in the same things defining the expectation in \eqref{eq:psreg}. We say that a learner achieves \emph{clean regret $\RegHCB(T)$}. We will establish high-probability bounds on $\RegHCB$.
\end{definition}
\begin{definition}[Huber-Contaminated Linear Contextual Bandits]\label{def:lin_bandits}
This is the special case of Definition~\ref{defn:huber_bandits} where the regression function $f : \mathcal{X} \times \mathcal{A} \to \mathbb{R}$ is linear in the following sense. The context space $\mathcal{X}$ is a Hilbert space and each context is of the form $z_t = (z_{t1},\ldots,z_{tK})$, i.e. there is a separate context vector for each arm. Then we assume that
\[ f(z,a) = \langle z_{ta}, w^* \rangle \]
for some vector $w^* \in \mathbb{R}^d$.
\end{definition}

Without adversarial corruptions this is the familiar linear contextual bandits problem, which has a wide range of applications precisely because in many settings the context is an important component of the prediction task. For example, in online advertising the choice of which ad to display ought to depend on information about the webpage that the ad will be displayed on as well as any information we have about the user we are displaying it to, which can be encoded as a high-dimensional vector. In healthcare, when we want to choose between various treatment options again we want to adapt to the relevant context such as the patient history. For additional applications, see the survey \cite{bouneffouf2019survey}. 

However in many of these settings it is natural to imagine that some of the feedback we receive departs in arbitrary ways from the model. This could happen in online advertising due to clickfraud, particularly when malware takes over a user's account. It could happen in healthcare in the context of drug trials, particularly ones that measure some real valued variable, when there are testing errors or confounding variables that are difficult to model.  For all these and many more reasons it is natural to wonder if there could be algorithms for contextual bandits with stronger robustness guarantees. 

\begin{remark}
We note that in some papers on contextual bandits, the range of the loss functions is normalized to $\brk{0,1}$ for convenience.
The scale-invariant quantity which we want to avoid dependence on is the ratio $R/\sigma$.
\end{remark}



\begin{remark}
    As we will rely on a formal connection between contextual bandits and online regression illuminated in \cite{foster2020beyond}, it will be helpful to situate our definitions in their context. In particular, when $\eta= 0$, Definition~\ref{defn:huber_bandits} specializes to Assumption 4 of \cite{foster2020beyond}, and an algorithm for Definition~\ref{defn:huberreg} achieving clean square loss regret at most $\RegHSQ(T)$ would satisfy Assumption 2b of \cite{foster2020beyond} in the realizable case with $\epsilon$-misspecification.
\end{remark}


\paragraph{Model Assumptions.} We adopt the following standard normalization convention for the covariates and weight vector.

\begin{assumption}\label{assume:scaling}
	In the regression setting (Definition~\ref{defn:huberreg}), for any round $t$, $\norm{x_t} \le 1$ almost surely, $\norm{w^*} \le R$. Correspondingly, in the contextual bandits setting (Definition~\ref{def:lin_bandits}) we assume $\norm{z_{ta}} \le 1$ for all $a$ and $\norm{w^*} \le R$. 
\end{assumption}
To simplify the statement of bounds we assume in all statements that $\epsilon, \sigma = O(R)$. The last assumption can be removed at the cost of longer Theorem statements (e.g. writing $R + \sigma$ instead of $R$); this scaling captures the interesting setting for the bounds, because if $\epsilon \gg R$ then the responses are arbitrary, and if $\sigma \gg R$ then no interesting robustness guarantee is possible, as explained earlier --- the trivial guarantee of Ordinary Least Squares in this setting is already close to optimal.

We now formally describe the (weak) assumptions on the noise under which we can perform our analysis. 






\begin{definition}[Weak $L_q$ Space]
Suppose $X$ is a real-valued random variable and $q \ge 1$. We define the \emph{weak $L_q$} or \emph{$L_{q,\infty}$ quasinorm} of $\xi$ to be
\[ \|X\|_{q,\infty} \triangleq \sup_{\lambda > 0} \lambda \cdot \left|\Pr{|X| > \lambda}^{1/q}\right| \]
so that $\Pr{|X| > \lambda} \le \|X\|_{q,\infty}^q/\lambda^q$. When $q = \infty$, we define $\|X\|_{\infty,\infty} = \inf \{ \lambda > 0 : \Pr{|X| \ge \lambda} = 0 \}$ to be the same as the $L_{\infty}$ norm.
We say that $X$ is in weak $L_q$ or $L_{q,\infty}$ space if $\|X\|_{q,\infty} < \infty$.
\end{definition}
From Markov's inequality, one has that $\Pr{|X| > \lambda} \le \E{|X|^q}/\lambda^q$ which shows that $\|X\|_{q,\infty} \le \|X\|_q$.
\begin{customassume}{2}\label{assume:lq-noise}
We assume the noise $\xi \sim \mathcal{D}$ is mean zero and that for some $q > 1$,
\[ \sigma_q \triangleq \|\xi\|_{q,\infty} < \infty.  \]
\end{customassume}

\subsection{Technical Preliminaries}
Here we collect miscellaneous technical facts that will be useful in later sections. Throughout this paper we use standard notation for inequalities up to constants; for example, $a \lesssim b$ and $a = O(b)$ both denote an inequality true up to an absolute constant, and occasionally we use $C > 0$ to denote a universal constant which can change from line to line. Given a matrix $M$, we let $\norm{M}$ denote the operator norm of $M$. Given a positive semidefinite matrix $\Sig$, we define the \emph{Mahalanobis} norm by
\begin{equation}
\|x\|_{\Sig}^2 := \|\Sig^{1/2} x\|^2 = \langle x, \Sig x \rangle.
\end{equation}

\paragraph{Concentration of measure.} We use some concentration inequalities which we state here. We use standard martingale terminology, see e.g. \cite{durrett2019probability}; in particular, we say that a sequence of random variables $X_1,\ldots,X_t$ adapted to a filtration $\mathcal{F}_t$ form a \emph{martingale difference sequence} if $\E{X_t | \mathcal{F}_{t - 1}} = 0$ for all $t$. 
We say a mean-zero random variable $X$ is $\sigma^2$-subgaussian if $\log \E{e^{\lambda X}} \le \lambda^2 \sigma^2/2$ for all $\lambda \in \mathbb{R}$; recall that if $|X| \le K$ then $X$ is $O(K^2)$-subgaussian \cite{vershynin2018high}.
\begin{fact}[Azuma-Hoeffding inequality] \label{fact:hoeffding}
Suppose that $X_1,\ldots,X_n$ is a martingale difference sequence and $|X_i| \le M_i$ almost surely. Then
    \begin{equation}
		\Pr*{\frac{1}{n}\sum^n_{i=1} X_i \ge t} \le \exp\left(-\Omega\left(\frac{nt^2}{\frac{1}{n}\sum_i M_i^2}\right)\right)
	\end{equation}
\end{fact}
\begin{fact}[Bernstein's inequality]\label{fact:bernstein}
	For $X_1,...,X_n$ independent and mean-zero, if $\abs{X_i} \le M$ for all $i$, then for all $t > 0$, \begin{equation}
		\Pr*{\frac{1}{n}\sum^n_{i=1}X_i \ge t} \le \exp\left(-\Omega\left(\frac{nt^2}{\frac{1}{n}\sum\E{X_i^2} + Mt}\right)\right)
	\end{equation}
\end{fact}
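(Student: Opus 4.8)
\textbf{Proof proposal for Fact~\ref{fact:bernstein}.}
The plan is the standard Chernoff/moment-generating-function argument. Write $S = \sum_{i=1}^n X_i$ and $v = \frac1n\sum_i \E[X_i^2]$, so the claimed bound is $\Pr[S \ge nt] \le \exp(-\Omega(nt^2/(v + Mt)))$. First I would fix a parameter $\lambda \in (0, 3/M)$ (to be optimized) and apply Markov's inequality to $e^{\lambda S}$: by independence, $\Pr[S \ge nt] \le e^{-\lambda n t}\prod_{i=1}^n \E[e^{\lambda X_i}]$.

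The key step is to control each factor $\E[e^{\lambda X_i}]$ using that $X_i$ is mean-zero with $|X_i|\le M$. Expanding the exponential and bounding $\E[X_i^k] \le M^{k-2}\E[X_i^2]$ for $k\ge 2$ gives
\begin{equation}
\E[e^{\lambda X_i}] \le 1 + \frac{\E[X_i^2]}{M^2}\bigl(e^{\lambda M} - 1 - \lambda M\bigr) \le \exp\!\left(\frac{\E[X_i^2]}{M^2}\bigl(e^{\lambda M} - 1 - \lambda M\bigr)\right),
\end{equation}
using $1 + u \le e^u$ at the end. Taking the product over $i$ and plugging into the Chernoff bound yields $\Pr[S\ge nt] \le \exp\!\bigl(-\lambda n t + \frac{nv}{M^2}(e^{\lambda M}-1-\lambda M)\bigr)$. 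Then I would invoke the elementary inequality $e^u - 1 - u \le \frac{u^2/2}{1 - u/3}$, valid for $0 \le u < 3$, with $u = \lambda M$, to get the cleaner bound $\Pr[S\ge nt]\le \exp\!\bigl(-\lambda nt + \frac{nv\lambda^2/2}{1-\lambda M/3}\bigr)$.

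Finally I would optimize the exponent by choosing $\lambda = t/(v + Mt/3)$, which indeed lies in $(0,3/M)$; a short computation shows the exponent is then at most $-\frac{nt^2}{2(v + Mt/3)}$, which is $-\Omega(nt^2/(v+Mt))$ as desired. There is essentially no obstacle here — the statement is classical and every step is routine; the only mild care needed is checking that the chosen $\lambda$ is in the admissible range and that the elementary scalar inequalities hold, both of which are standard. (One could equally well take the sharper route of optimizing $\lambda$ exactly to obtain Bennett's inequality and then relaxing via $(1+u)\log(1+u) - u \ge \frac{u^2}{2(1+u/3)}$, but the above is the most direct path to the stated form.)
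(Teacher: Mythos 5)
Your proof is correct, and it is the standard Chernoff/moment-generating-function derivation of Bernstein's inequality: bound each $\E[e^{\lambda X_i}]$ via $\E[|X_i|^k]\le M^{k-2}\E[X_i^2]$, combine, use $e^u-1-u\le \frac{u^2/2}{1-u/3}$, and optimize $\lambda=t/(v+Mt/3)$ to land on exponent $-\frac{nt^2}{2(v+Mt/3)}=-\Omega(nt^2/(v+Mt))$. The paper itself does not prove this statement — it is recorded as a classical Fact in the preliminaries and used as a black box — so there is no in-paper argument to compare against; your derivation matches the textbook treatment and all the scalar inequalities and the admissibility of $\lambda$ check out.
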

We will use the following general version of the Azuma-Hoeffding inequality, which applies to martingales in Euclidean space of arbitrary dimension with subgaussian step sizes. (Note: this result is false if we consider martingales with steps that are general subgaussian vectors, which can make steps of size $\sqrt{d}$ in dimension $d$.) This result follows from the same proof as Equation 5.18 in \cite{kallenberg1991some}, with some small differences: there they consider bounded variation processes instead of discrete-time martingales. In the bounded step size case, optimal constants were  obtained in \cite{pinelis1994optimum}. For completeness, we prove Theorem~\ref{thm:azuma-vector} in the Appendix. 
\begin{theorem}[Subgaussian-step vector Azuma-Hoeffding, cf. Equation 5.18 in \cite{kallenberg1991some}]\label{thm:azuma-vector}
Suppose that $X_1,\ldots,X_n$ are random vectors in Euclidean space with $\|X_t\| \le 1$ almost surely for all $t$, and $\xi_1,\ldots,\xi_n$ are random variables such that almost surely, the law of $\xi_t$ conditional on $X_1,\ldots,X_{t},\xi_1,\ldots,\xi_{t - 1}$ is mean-zero and $\sigma^2$-subgaussian. Then
    \begin{equation}
		\Pr*{\left\|\frac{1}{n} \sum^n_{i=1} \xi_i X_i\right\| \ge u} \le 2\exp\left(-\Omega\left(\frac{nu^2}{\sigma^2}\right)\right).
	\end{equation}
\end{theorem}

\paragraph{Matrix concentration.}
A key ingredient in our argument is concentration for matrix martingales. See \cite{tropp2012user,tropp2011user} for background on matrix concentration; for infinite dimensional settings we use a version of matrix concentration which depends on effective dimension \cite{hsu2012tail,minsker2017some}.  To briefly recall, a matrix martingale $\vY_1,\ldots,\vY_n$ adapted to a filtration $\mathcal{F}_t$ with difference sequence $\vX_t$ is an $\mathcal{F}_t$-adapted process satisfying $\vY_t = \sum_{s = 1}^t \vX_s$ and $\E{\vX_t | \mathcal{F}_{t - 1}} = 0$. We also recall that for a function $f : \mathbb{R} \to \mathbb{R}$ and symmetric matrix $M$ with eigendecomposition $M = \sum_i \lambda_i \rho_i \rho_i^T$, the notation $f(M)$ corresponds to applying $f$ to the spectrum, i.e. $f(M) = \sum_i f(\lambda_i) \rho_i \rho_i^T$.
\begin{theorem}[Matrix Freedman Inequality, \cite{minsker2017some}]\label{thm:matrix-freedman-eff}
Suppose $\vY_1,...,\vY_n\in\R^{d\times d}$ is a symmetric matrix martingale adapted to filtration $\mathcal{F}_t$, whose associated difference sequence $\brc{\vX_t}$ satisfies $\norm{\vX_t} \le 1$ almost surely for all $t$. Let $\vW = \sum_t \E{\vX_t^2 | \mathcal{F}_{t- 1}}$, then for any $t \ge \frac{1}{6}(1 + \sqrt{1 + 36 \sigma_n^2})$ 
\[ \Pr{\|\vY_n\| \ge t \text{ and } \|\vW\| \le \sigma_n^2} \le 50 d_1(t) \cdot \exp\left(\frac{-t^2/2}{\sigma_n^2 + t/3}\right) \]
where
\[ d_1(t) = \Tr f(t \E \vW/\sigma_n^2) \]
and $f(x) = \min(1,x)$.
\end{theorem}
\begin{corollary}\label{corr:matrix-bernstein-eff}
In the same setting as Theorem~\ref{thm:matrix-freedman-eff}, suppose that for some $\sigma \le 1$, $\E{\vX_t^2 | \mathcal{F}_{t - 1}} \preceq \sigma^2$ almost surely. Then for any $u \ge 1/18n + \sigma \sqrt{1/n}$
\[ \Pr{\|(1/n) \cdot \vY_n\| \ge u} \le 50 d_2(u) \cdot \exp\left(\frac{-n u^2/2}{\sigma^2 + u/3}\right) \]
where
\[ d_2(u) = \Tr f(u \E \vW/\sigma^2) \]
and $f(x) = \min(1,x)$ as in Theorem~\ref{thm:matrix-freedman-eff}.
\end{corollary}
\begin{proof}
Apply Theorem~\ref{thm:matrix-freedman-eff} with $t = nu$ and $\sigma_n^2 = n \sigma^2$, noting that $d_2(u) = d_1(nu)$; in this statement, we only strengthened the assumed lower bound on $t$. 
\end{proof}
\paragraph{Truncation Lemma.} In our algorithm and analysis, we handle heavy-tailed noise using a truncation argument; this somewhat parallels the use of truncation arguments in large deviation theory, see e.g. \cite{fuk1971probability}.
The following Lemma shows that random variables with tail bounds behave reasonably under truncation, in the sense that their means do not move drastically. 
\begin{lemma}\label{lem:truncation}
Suppose that $X$ is a mean-zero random variable and $\sigma_q \triangleq \|X\|_{q,\infty} < \infty$.
Then for any $s > 0$,
\[ |\E{X \bone{|X| < s}}| \le  \frac{q}{q - 1} \cdot \frac{\sigma_q^q}{s^{k - 1}}. \]
\end{lemma}
\begin{proof}
We know
\[ 0 = \E{X} = \E{X \bone{|X| < s}} + \E{X \bone{|X| \ge s}} \]
so using the identity $\E{Z} = \int_0^{\infty} \Pr{Z > y} dy$ for nonnegative random variable $Z$ (Lemma 1.2.1 of \cite{vershynin2018high}), we have
\begin{align*}
|\E{X \bone{|X| < s}}| 
= |\E{X \bone{|X| \ge s}}|
&\le \E{|X| \bone{|X| \ge s)}} \\
&= \int_0^{\infty} \Pr{|X| \bone{|X| \ge s} > y} dy \\
&= s \Pr{|X| \ge s} + \int_s^{\infty} \Pr{|X| > y} dy \\
&\le \frac{\sigma_q^q}{s^{q - 1}} + \int_s^{\infty} \frac{\sigma_q^q}{y^k} dy \\
&= \sigma_q^q \left(\frac{1}{s^{q - 1}} + \frac{1}{(q - 1) s^{q - 1}}\right) = \frac{q}{q - 1} \cdot \frac{\sigma_q^q}{s^{k - 1}}
\end{align*}
where in the last inequality, we used the definition of $L_{q,\infty}$.
\end{proof}

\section{Alternating Minimization for Offline Regression}
\label{sec:nosos}
In this section, we prove our main results for regression in the usual offline setting. After giving some setup and stating the main offline result in Section~\ref{sec:setup}, in Section~\ref{subsec:scram_def} we give a full description of our alternating minimization-based algorithm. In Section~\ref{subsec:optimization} we show that it converges to an approximate stationary point. In Section~\ref{subsec:allstationarygood} we show that this suffices to obtain our claimed error guarantees, and also give improved rates in the case of subgaussian noise. In Section~\ref{subsec:stochastic} we show how our fixed-design guarantee can yield strong results in the stochastic setting often considered in statistical learning. Finally, in Section~\ref{sec:geometric-median}, we give improved rates when the noise is in $L_q$ for $q \ge 2$ by boosting via a high-dimensional median.

\subsection{Setup and Main Result}
\label{sec:setup}

We will state and prove results for two closely related settings: (1) the usual setting in linear regression where the covariates $x_t$ are fixed arbitrary vectors (i.e. chosen obliviously), and (2) the model which is relevant for our online applications, where the covariates $x_t$ are generated sequentially and adaptively, so they can depend on e.g. the realization of the noise in previous rounds. 
The second setting is the proper offline version of the Huber-Contaminated Online Regression Problem as defined in Definition~\ref{defn:huberreg}.

We briefly recall some of the relevant notation.
Let $a^*_t$ be the indicator for whether round $t$ was uncorrupted, i.e. $a^*_t = 1$ when the round is \emph{not} corrupted and this occurs with probability $1 - \eta$.
Recall from \eqref{eq:linreg} that for every $t\in[n]$ corresponding to a round which is not corrupted, we observe $y_t$ given by \begin{equation}
	y_t = y^*_t + \xi_t, \qquad y^*_t = \iprod{w^*,x_t} + \epsilon_t
\end{equation} where $w^*$ is the true regressor and $\|w^*\| \le R$, and $\xi_t$ is independently sampled from the noise distribution $\calD$, and $\abs{\epsilon_t} \le \epsilon$ is the misspecification.
On the other hand, on corrupted rounds $y_t$ is chosen freely by the adversary.  
For convenience, define \begin{equation}
	\Sig_n \triangleq \frac{1}{n}\sum^n_{t=1}x_tx_t^{\top}
\end{equation}
Let $u^*$ be the best norm $R$ linear predictor of the uncorrupted and unnoised data, that is, \begin{equation}
	u^* \triangleq \arg\min_{u : \|u\| \le R} \frac{1}{n}\sum_t (y^*_t - \iprod{u,x_t})^2 \label{eq:ustardef}
\end{equation}
and let $\delta_t \triangleq y^*_t - \langle u^*, x_t \rangle$.
By definition of $u^*$, we have that 
\begin{equation}
	\frac{1}{n}\sum_t \delta^2_t \le \frac{1}{n}\sum_t \epsilon^2_t \le \epsilon^2 \label{eq:delta_bound}
\end{equation}    
almost surely; in fact, the conclusion of \eqref{eq:delta_bound} is all we need about the misspecification model and $w^*,\epsilon_t$ play no further role in this section.

Our goal will be to output $\wh{w}$ such that the MSE (Mean Squared Error) with respect to the true responses is as small as possible; since $u^*$ is the optimal linear predictor, this is the same (by the Pythagorean Theorem) as asking for $\|\wh{w} - u^*\|_{\Sig_n}^2$ is small. When there is no misspecification, this is equivalent to recovering $w^*$ up to small error in $\Sig_n$ norm. When there is misspecification, it is easy to see that if $\norm{\wh{w} - u^*}$ is small, then $\norm{\wh{w} - w^*}_{\Sig_n}$ is also small, up to an extra $O(\epsilon)$ term from the triangle inequality. The algorithm achieving our goal is {\sc SCRAM} (\textbf{S}pe\textbf{C}trally \textbf{R}egularized \textbf{A}lternating \textbf{M}inimization, defined in Algorithm~\ref{alg:SCRAM} and analyzed in Theorem~\ref{thm:nonsos}).


 In the following Theorem, the constants in the guarantee must deteriorate slightly as we approach the breakdown point $\eta = 1/3$ of this estimator, so we introduce a parameter $\beta$ which tracks the distance to $1/3$; as long as we are strictly bounded away from this point, $\beta$ is a $\Theta(1)$ quantity and can be ignored. As explained in Remark~\ref{rmk:breakdown-1/3}, this breakdown point is optimal for \textsc{SCRAM}, but in Section~\ref{sec:sos} we will give a more powerful version of this estimator based on sum-of-squares programming which achieves optimal breakdown point $1/2$.

\begin{theorem}\label{thm:nonsos}
Suppose that $\eta < 1/3$ is an upper bound on the contamination level, define
\begin{equation}\label{eqn:rho-defn} 
\beta \triangleq (1/3 - \eta)^2
\end{equation}
and suppose for some $q \in (1,\infty],\sigma_q \ge 0$ and all $t$ that
\begin{equation}\label{eqn:q-noise}
    \|\xi_t\|_{q,\infty} \le \sigma_q
\end{equation} in the sense of Assumption~\ref{assume:lq-noise}.
Then provided
\[ \eta \cdot n \gtrsim \log(\min(n,d)/\delta), \]
we can take
$\alpha = \Theta\left(\sqrt{\frac{\eta \log(d/\delta)}{n}}\right)$ and $\overline{\eta} = \eta + \Theta(\eta\sqrt{\beta})$ such that the output $w$
of \textsc{SCRAM} with $poly(R/\sigma,\log(2/\delta),d,n)$ many steps satisfies for oblivious covariates the bound
\begin{align*} 
\beta^{1 + 1/q} \norm{u^* - w}_{\Sig_n}
    &\lesssim \frac{q}{q - 1} \eta^{1 - 1/q} \sigma_q + \eta^{1/2}\epsilon + \eta^{1/8} R^{1/2} (\epsilon + \frac{q}{q - 1} \eta^{1/2 - 1/q} \sigma_q)^{1/2}\sqrt[8]{\frac{\log(\min(n,d)/\delta)}{n}} \\
    &\quad+  \eta^{1/4} R\sqrt[4]{\frac{\log(\min(n,d)/\delta)}{n}} 
     + \eta^{-1/q} \min \left\{\sigma\sqrt{\frac{d + \log(2/\delta)}{n}}, (R\sigma)^{1/2}\sqrt[4]{\frac{\log(2/\delta)}{n}}\right\}
\end{align*}
with probability at least $1 - \delta$. In the more general case of adaptive covariates, it satisfies the bound
\begin{align*} 
\beta^{1 + 1/q} \norm{u^* - w}_{\Sig_n}
    &\lesssim \frac{q}{q - 1} \eta^{1 - 1/q} \sigma_q + \eta^{1/2}\epsilon + \eta^{1/8} R^{1/2} (\epsilon + \frac{q}{q - 1} \eta^{1/2 - 1/q} \sigma_q)^{1/2}\sqrt[8]{\frac{\log(\min(n,d)/\delta)}{n}} \\
    &\quad+  \eta^{1/4} R\sqrt[4]{\frac{\log(\min(n,d)/\delta)}{n}} 
     + \eta^{-1/q}(R\sigma)^{1/2}\sqrt[4]{\frac{\log(2/\delta)}{n}}
\end{align*}
i.e. the same bound except the last term was changed.
\end{theorem}
\begin{remark}[Oracle Inequality Interpretation]
As mentioned before, the only guarantee on the misspecification we need is \eqref{eq:delta_bound}. This means that for any $\epsilon^2 \ge \frac{1}{n} \sum_t \delta_t^2$, i.e. any $\epsilon > 0$ such that \eqref{eq:delta_bound} is true almost surely, we have
\[ \frac{1}{n} \sum_t (y^*_t - \langle \hat{w}, x_t \rangle)^2 \lesssim \epsilon^2 + \|u^* - \hat{w}\|_{\Sig_n}^2 \]
which combined with Theorem~\ref{thm:nonsos} makes formal that $\langle \hat{w}, x_t \rangle$ is the best linear model of $y^*_t$ up to a small error term. This kind of bound for an estimator in the presence of misspecification is known as an \emph{oracle inequality} \cite{tsybakov2008introduction}, since $\hat{w}$ competes with the oracle fit $u^*$.
\end{remark}
\begin{remark}[Breakdown point and landscape]\label{rmk:breakdown-1/3}
The breakdown point of $\eta = 1/3$ is optimal for this estimator based on local search. This breakdown point is optimal even if $X \sim N(0,I)$ and the true generative model is a noiseless mixture of two linear regressions $w_1 \ne w_2$ with corresponding weights $1/3,2/3$, so we view $w_1$ as contamination. In this setting $\sigma_q = 0$ so an estimator achieving the optimal $O(\sigma_q)$ rate gets error $o(1)$. However, the pair $(w_1,a_1)$ is a bad local minimum where the weight vector $a_1$ keeps all of the data points from $w_1$ and keeps each point labeled by $w_2$ with probability $1/2$. In Section~\ref{sec:sos} we show how to overcome the bad landscape for $\eta \in [1/3,1/2)$, achieving the optimal $O(\sigma_q)$ error guarantee, using more powerful optimization tools (the Sum of Squares hierarchy) and a new analysis.
\end{remark}
\begin{remark}[Small $\eta$ regime]
If the true contamination level is very small, e.g. $\eta = 0$, then applying Theorem~\ref{thm:nonsos} with a larger value of $\eta$ will optimize the upper bound. 
\end{remark}

When the noise is $L_q$ for $q \ge 2$, we show how to improve the last term on the right-hand side of Theorem~\ref{thm:nonsos} to avoid an $\eta^{-1/q}$ dependence in the last term on the right-hand side, see Theorem~\ref{thm:geometricmedian}.

\subsection{Algorithm Specification}
\label{subsec:scram_def}

The algorithm used in Theorem~\ref{thm:nonsos} is based upon finding first-order stationary points of the following nonconvex problem.
\begin{program}\label{program:biconvex}
	Define variables $w,a_1,\ldots,a_n$ and consider the optimization problem with parameters $\overline{\eta},\alpha,R \ge 0$ given by
	\begin{equation}
		\begin{aligned}
			& \min_w \underset{a_1,\ldots,a_n}{\text{min}} & & \frac{1}{n} \sum_{t = 1}^n a_t(y_t - \iprod{w,x_t})^2 \\ 
			& \text{s.t.} & & 0 \le a_t \le 1 \quad \forall t\in[n] \\
			& & & \sum_t a_t \ge (1 - \bareta - \alpha)n  \\
			& & & \frac{1}{n}\sum_t (1 - a_t) x_t x_t^{\top} \preceq \bareta \Sig_n + \alpha\cdot \Id \\
			& & & \|w\| \le R
		\end{aligned}
	\end{equation}
	where $\|w\|$ denotes the Euclidean norm of $w$. 
\end{program}
The overall objective
\[ L(w,a) := \frac{1}{n} \sum_t a_t(y_t - \iprod{w,x_t})^2 \]
is \emph{biconvex}, i.e. convex individually in the variables $a$ and the variables $w$, but not jointly convex. Since it is a nonconvex problem, we cannot guarantee to find the true global minimum of this optimization problem. One of the most common heuristics for biconvex problems is to perform alternating minimization, which will output an approximate first order stationary point.
Fortunately, we prove in our setting that this suffices and all approximate first order stationary points satisfy the desired statistical guarantee. 
As one half of the alternating minimization procedure, we observe that minimizing $a$ for fixed $w$ is a simple SDP (semidefinite program):
\begin{program}\label{program:basic_sdp}
	For fixed vector $w$, define variables $a_1,\ldots,a_n$ and define the optimization problem \textbf{SDP$_w$} with additional parameters $\overline{\eta},\alpha \ge 0$ given by
	\begin{equation}
		\begin{aligned}
			& \underset{a_1,\ldots,a_n}{\text{min}} & & \frac{1}{n} \sum_{t = 1}^n a_t(y_t - \iprod{w,x_t})^2 \\ 
			& \text{s.t.} & & 0 \le a_t \le 1 \quad \forall t\in[n] \\
			& & & \sum_t a_t \ge (1 - \bareta - \alpha)n  \\
			& & & \frac{1}{n} \sum_t (1 - a_t) x_t x_t^{\top} \preceq \bareta \Sig_n + \alpha\cdot \Id.
		\end{aligned}
	\end{equation}
    Note that this corresponds to Program~\ref{program:biconvex} for a fixed choice of $w$. 
\end{program}
\begin{algorithm2e}
\DontPrintSemicolon
\caption{\textsc{SCRAM}($D,\epsilon_{\mathsf{OPT}}$)}
\label{alg:SCRAM}
	\KwIn{Dataset $D = \brc{(x_1,y_1),\ldots,(x_n,y_n)}$}
	\KwOut{Approximate first-order critical point of Program~\ref{program:biconvex} (see Lemma~\ref{lem:getfirstorder})}
	Let $w^{(1)} = 0$.\;
	\For{$s = 1$ to $\infty$} {
	    Let $a^{(s)}$ be the minimizer of Program~\ref{program:basic_sdp} with $w = w^{(s)}$.\;
	    Let $w^{(s + 1)}$ be the minimizer of $L(w,a^{(s)}) = \sum_t a^{(s)}_t (y_t - \langle w, x_t \rangle)^2$ over all $w$ with $\|w\| \le R$.\;
	    \If{$L(w^{(s + 1)},a^{(s)}) > L(w^{(s)},a^{(s)}) + \epsilon_{\mathsf{OPT}}$}{
	        Return $w^{(s)},a^{(s)}$.
	    }
	 }
\end{algorithm2e}

\subsection{Optimization Analysis}
\label{subsec:optimization}
For the analysis we need the following simple Taylor expansion inequality used to analyze gradient descent on smooth functions:
\begin{lemma}[Standard, see e.g. \cite{bubeck2014convex}]\label{lem:smoothness}
    Suppose that $f : \mathbb{R}^d \to \mathbb{R}$ is $L$-smooth in the sense
    that $\|\nabla^2 f\|_{OP} \le 2L$. Then
    \[ f(y) \le f(x) + \langle \nabla f(x), y - x \rangle + L\|y - x\|^2. \]
\end{lemma}
From this we get the following Descent Lemma on the ball:
\begin{lemma}\label{lem:descent}
Suppose that $f$ is $L$-smooth and $x,y$ are vectors in $\mathbb{R}^d$ such that $\|x\|,\|y\| \le R$ and $\langle \nabla f(x), x - y \rangle \ge \Delta > 0$. Then there exists a point $z$ which is a convex combination of $x,y$ such that
\[ f(z) \le f(x) - \frac{\Delta^2}{16 L R^2}.\]
\end{lemma}
\begin{proof}
    We consider points of the form $z_{\lambda} := (1 - \lambda) x + \lambda y$ which by convexity lie in the radius $R$ ball. Observe by Lemma~\ref{lem:smoothness} that
    \[ f(z_{\lambda}) \le f(x) - \lambda \Delta + 4LR^2\lambda^2\]
    since $\|x - x_{\lambda}\| \le \lambda \|x\| + \lambda \|y\| \le 2\lambda R$. The upper bound is optimized by $\lambda = \frac{\Delta}{8LR^2}$ and plugging in gives the result.
\end{proof}
\begin{lemma}\label{lem:getfirstorder}
	{\sc SCRAM} with $\epsilon_{\mathsf{OPT}} = \graderr^2/4R^2$  outputs vector $w$ and weights $a_1,\ldots,a_n$ satisfying the constraints of Program~\ref{program:biconvex} such that:
	\begin{enumerate}
	\item (Partial optimality) The variables $a$ are global minimizers of \textbf{SDP}$_w$ (Program~\ref{program:basic_sdp}).
	\item (First order stationarity)
	\begin{equation}
		\frac{1}{n}\sum_t a_t(y_t - \iprod{w,x_t})\iprod{x_t, v - w} \le \graderr \label{eq:firstorder}
	\end{equation} for all $v$ with $\|v\| \le R$.
	\end{enumerate}
	Furthermore, the expected number of iterations in the main loop is at most $O((R^2 + \sigma^2)R^2/\graderr^2)$.
\end{lemma}

\begin{proof}
	By definition $a^{(s)}$ is the minimizer of the \textbf{SDP}$_{w^{(s)}}$ so the first property is satisfied by construction. We now prove the second property. Observe that the objective $L(w,a^{(s)})$ is $1$-smooth in $w$ and
	\begin{equation}\label{eqn:gradient-calc}
	\nabla_w L(w,a^{(s)}) = -\frac{2}{n}\sum_t (y_t - \langle w, x_t \rangle) x_t. 
	\end{equation}
	Therefore by Lemma~\ref{lem:descent} and the fact that $w^{(s + 1)}$ is the optimizer for fixed $a^{(s)}$, we know that if there exists $v$ with $\langle \nabla_w L(w^{(s)},a^{(s)}), w^{(s)} - v \rangle \ge \Delta > 0$
	\[ L(w^{(s + 1)}, a^{(s)}) \le L(w^{(s)}, a^{(s)}) - \frac{\Delta^2}{16 R^2}. \]
	By the contrapositive, if the decrease in objective value when moving from $w^{(s)}$ to $w^{(s + 1)}$ is less than $\epsilon_{\mathsf{OPT}}$, then it implies that
	\[ \langle \nabla_w L(w^{(s)},a^{(s)}), w^{(s)} - v \rangle \le 4R\sqrt{\epsilon_{\mathsf{OPT}}} \]
	for all $v$ in the unit ball. Hence by \eqref{eqn:gradient-calc} taking $\epsilon_{\mathsf{OPT}} = \graderr^2/4R^2$ gives the stated guarantee.
	
	Finally, we bound the number of iterations needed. Every time the loop is repeated, the objective value $L(w,a)$ decreases by at least $\epsilon_{\mathsf{OPT}}$ and clearly $L(w,a) \ge 0$. Therefore the total number of iterations can be upper bounded by $L(0,a^{(1)})/\epsilon_{\mathsf{OPT}}$. By considering the (possibly suboptimal solution) $a_t = a^*_t$ to the first SDP, we see that the expected value of $L(0,a^{(1)})$ is at most $R^2 + \sigma^2$. Therefore the expected total number of iterations is at most $(R^2 + \sigma^2)/\epsilon_{\mathsf{OPT}}$.
\end{proof}
\subsection{All Stationary Points are Good}
\label{subsec:allstationarygood}

It remains to show why condition \eqref{eq:firstorder} implies the desired error guarantee. To establish the general guarantee of Theorem~\ref{thm:nonsos}, it's sufficient to reduce to the case where the noise $\xi_t$ is bounded, unless we care about the precise sample complexity. For this reason, we start with this setting (Section~\ref{subsubsec:bounded}), show how to reduce the $L_{q,\infty}$ setting of Theorem~\ref{thm:nonsos} to the bounded case, and then discuss how to tailor the analysis to get refined guarantees for subgaussian noise in Section~\ref{subsubsec:subgauss}. Later in Section~\ref{sec:geometric-median}, we give an improved version of Theorem~\ref{thm:nonsos} when the noise $\brc{\xi_t}$ is $L_q$ for $q \ge 2$, see Theorem~\ref{thm:geometricmedian}.

\subsubsection{Bounded Noise Analysis}
\label{subsubsec:bounded}

In the bounded case we establish the following result:
\begin{theorem}[\textsc{SCRAM} Guarantee with Bounded Noise]\label{thm:nonsos-bdd}
Suppose that $\eta < 1/3$, define $\beta$ as in \eqref{eqn:rho-defn}, and suppose for some $\sigma \ge 0$ that for all $t$,
\begin{equation}\label{eqn:bdd-noise}
    |\xi_t| \le \sigma
\end{equation}
almost surely. Then if $\eta = 0$ or
\begin{equation}\label{eqn:n-lb-bdd}
n \gtrsim \log(\min(n,d)/\delta)/\eta, 
\end{equation}
taking $\alpha = \Theta\left(\sqrt{\frac{\eta \log(\min(n,d)/\delta)}{n}}\right)$ and $\overline{\eta} = \eta$, the output $w$
of \textsc{SCRAM} with $poly(R/\sigma,\log(2/\delta),d,n)$ many steps satisfies for oblivious covariates the bound
\begin{align*} 
\beta \norm{u^* - w}_{\Sig_n}
    &\lesssim \eta \sigma + \eta^{1/2}\epsilon + \eta^{1/8} R^{1/2} (\eta^{1/2}\sigma  + \epsilon)^{1/2}\sqrt[8]{\frac{\log(\min(n,d)/\delta)}{n}} +  \eta^{1/4} R\sqrt[4]{\frac{\log(\min(n,d)/\delta)}{n}}  \\
    &\quad + \min \left\{\sigma\sqrt{\frac{d + \log(2/\delta)}{n}}, (R\sigma)^{1/2}\sqrt[4]{\frac{\log(2/\delta)}{n}}\right\}
\end{align*}
with probability at least $1 - \delta$. In the more general case of adaptive covariates, it satisfies the bound
\begin{align*} 
\beta \norm{u^* - w}_{\Sig_n}
    &\lesssim \eta \sigma + \eta^{1/2}\epsilon + \eta^{1/8} R^{1/2} (\eta^{1/2}\sigma  + \epsilon)^{1/2}\sqrt[8]{\frac{\log(\min(n,d)/\delta)}{n}} +  \eta^{1/4} R\sqrt[4]{\frac{\log(\min(n,d)/\delta)}{n}}  \\
    &\quad + (R\sigma)^{1/2}\sqrt[4]{\frac{\log(2/\delta)}{n}},
\end{align*}
i.e. the same bound except the second line was changed.
\end{theorem}
\begin{example}[Lower bound when $\sigma = \epsilon = 0$]\label{example:extra}
Consider the special case with $\sigma = \epsilon = 0$ with oblivious contexts. Observe that when $\sigma = \epsilon = 0$ the only nonzero term in the upper bound is $\eta^{1/4} R \sqrt[4]{\frac{\log(n/\delta)}{n}}$.
Now consider the setting  where the clean regression model with $d = n$ is given by $Y^* = w^* \in \mathbb{R}^n$, and we consider the $\eta$-contaminated version of this model with $\delta=1/n$ and $\eta = \log(n/\delta)/n$,. 
The number of contaminated coordinates of $Y$ will be close to $\eta n = \Theta( \log(n/\delta))$, and for each of those coordinates $i$, the algorithm observes no information about $w^*_i$. Considering letting $w^* = \pm R e_j$ for an arbitrary $j \in [n]$, then the probability coordinate $j$ is missed is $\Theta(\eta) = \Theta( \log(n/\delta)/n) = \omega(\delta)$ and on this event the algorithm must pay a cost in squared loss $\|u^* - w\|_{\Sig_n}^2$ of $R^2/n = \frac{1}{\log(n/\delta)} R^2 \eta^{1/2} \sqrt{\frac{\log(n/\delta)}{n}}$, matching the upper bound up to the log factor. 

This example also shows the necessity of \eqref{eqn:n-lb-bdd} when $\eta \ne 0$: without this lower bound, we could take $\eta = 1/n^{1 + \gamma}$ for some $\gamma > 0$, $\delta = 0.1/n^{1 + \gamma}$ and we would conclude by the same argument that $R^2/n \lesssim R^2 \eta^{1/2} \sqrt{\log(n/\delta)/n} = \Theta(R^2 \sqrt{\log(n)}/n^{1 + \gamma/2})$ which is false. 
\end{example}
Given this result, Theorem~\ref{thm:nonsos} follows by slightly increasing the value of $\eta$, so that heavy tail events are counted as contamination; we have to be slightly careful when the noise is asymmetric, because truncating can also induce also a small amount of misspecification, but it does not affect the final bound. 
\begin{proof}[Proof of Theorem~\ref{thm:nonsos}]
We prove this Theorem by reducing to Theorem~\ref{thm:nonsos-bdd}. 
We consider the effect of treating all clean responses with $|\xi| \ge M \sigma_q$ for some $M \ge 1$ as contamination, increasing the effective $\eta$ to $\overline{\eta} = \eta + \sqrt{\beta} \eta/2$ and making the noise bounded. 
Recall from the definition that
\[ \Pr{|\xi| \ge M \sigma_q} \le \frac{1}{M^q} \]
so by solving $\beta^{1/2} \eta/2 \ge 1/M^q$ we find that setting 
\[ M = (\beta^{1/2} \eta)^{-1/q} \]
ensures the total contamination level is at most $\eta + \sqrt{\beta} \eta/2 = \overline{\eta}$ as desired. Applying 
Lemma~\ref{lem:truncation}
shows that this reduction this causes an additional misspecification cost of
\[ \frac{q \sigma_q}{(q - 1)M^{q - 1}}  =  \frac{q}{q - 1}\sigma (\beta^{1/2} \eta)^{1 - 1/q}. \]
Now plugging into the conclusion of Theorem~\ref{thm:nonsos-bdd} with $\sigma_{\infty} = M \sigma$, $\overline{\eta}$, and $\epsilon' = \epsilon + \Theta(\frac{q}{q - 1}\sigma (\beta^{1/2} \eta)^{1 - 1/q})$ gives, as long as
\[ \eta \cdot n \gtrsim \log(\min(n,d)/\delta) \]
a bound of the form
\begin{align*} 
\beta \norm{u^* - w}_{\Sig_n}
    &\lesssim \eta M \sigma + \eta^{1/2}\epsilon' + \eta^{1/8} R^{1/2} (\eta^{1/2}\sigma M + \epsilon')^{1/2}\sqrt[8]{\frac{\log(\min(n,d)/\delta)}{n}} +  \eta^{1/4} R\sqrt[4]{\frac{\log(\min(n,d)/\delta)}{n}}  \\
    &\quad + M\min \left\{\sigma\sqrt{\frac{d + \log(2/\delta)}{n}}, (R\sigma)^{1/2}\sqrt[4]{\frac{\log(2/\delta)}{n}}\right\}
\end{align*}
where the first term is bounded as
\[ \eta M \sigma \lesssim \beta^{-1/2q} \eta^{1 - 1/q} \sigma_q \]
the second term is bounded as
\[  \eta^{1/2} \epsilon' \lesssim \eta^{1/2} \epsilon + \frac{q}{q - 1} \beta^{- 1/2q} \eta^{3/2 - 1/q} \sigma_q  \]
and the third term is bounded by observing
\[ \eta^{1/2} \sigma M + \epsilon' \lesssim   \beta^{-1/2q} \eta^{1/2 - 1/q} \sigma_q + \epsilon + \frac{q}{q - 1} \beta^{- 1/2q} \eta^{1 - 1/q} \sigma_q \lesssim \epsilon + \frac{q}{q - 1}\beta^{- 1/2q} \eta^{1/2 - 1/q} \sigma_q \]
and the last term is bounded by plugging in $M$.
Combining these bounds and upper bounding gives 
\begin{align*} 
\beta^{1 + 1/q} \norm{u^* - w}_{\Sig_n}
    &\lesssim \frac{q}{q - 1} \eta^{1 - 1/q} \sigma_q + \eta^{1/2}\epsilon + \eta^{1/8} R^{1/2} (\epsilon + \frac{q}{q - 1} \eta^{1/2 - 1/q} \sigma_q)^{1/2}\sqrt[8]{\frac{\log(\min(n,d)/\delta)}{n}} \\
    &\quad+  \eta^{1/4} R\sqrt[4]{\frac{\log(\min(n,d)/\delta)}{n}} 
     + \eta^{-1/q} \min \left\{\sigma\sqrt{\frac{d + \log(2/\delta)}{n}}, (R\sigma)^{1/2}\sqrt[4]{\frac{\log(2/\delta)}{n}}\right\}
\end{align*}
which is the result in the oblivious setting. Dropping one of the terms in the min gives the adaptive setting result.
\end{proof}
We will now prove Theorem~\ref{thm:nonsos-bdd}, so for the remainder of this section we proceed under assumption \eqref{eqn:bdd-noise}. In Lemma~\ref{lem:regularity} we establish deterministic regularity conditions which hold with high probability. First, in Lemma~\ref{lem:sphere-max} we prove a version of a standard maximal inequality used in the analysis of Ordinary Least Squares (see e.g. \cite{rigollethigh}), which shows that the norm of the noise vector shrinks when projecting onto a lower-dimensional subspace.
\begin{lemma}\label{lem:sphere-max}
Suppose that $\xi_1,\ldots,\xi_n$ is a martingale difference sequence with $|\xi_t| \le \sigma$ almost surely for all $t$. Suppose that $V$ is a subspace of dimension $d$, $P_V : n \times n$ is the projection map onto $V$, and $\xi = (\xi_1,\ldots,\xi_n)$. Then
\[ \|P_V \xi\| \lesssim \sigma \sqrt{d + \log(2/\delta)} \]
with probability at least $1 - \delta$. 
\end{lemma}
\begin{proof}
For $v \in V$ with $\|v\| = 1$, define $Z_v = \langle v, \xi \rangle = \sum_i v_i \xi_i$ which is a martingale. Since $|v_i \xi_i|  \le \sigma M |v_i|$ almost surely and 
$\sum_i v_i^2 = 1$, it follows from Azuma-Hoeffding inequality (Fact~\ref{fact:hoeffding}) that
	\begin{equation}
		\Pr{\abs{Z_v} \ge t} 
		\le \exp\left(-\frac{Ct^2}{\sigma^2}\right)
	\end{equation} 
	By a well-known chaining argument over the sphere (Exercise 4.4.2 of \cite{vershynin2018high}), we can upper bound
	\[ \|P_V \xi\| = \max_{\|v\| = 1} Z_v \le 2\max_{v \in \calN} Z_v \]
	where $\calN$ is a $1/2$-net of the unit sphere in $V$. 
	Standard covering number bounds (e.g. Corollary 4.2.13 of \cite{vershynin2018high}) let us take $|\calN| \le 6^d$. Therefore by the union bound
	\[ \Pr*{\max_{\|v\| = 1} Z_v \ge t} \le 6^d \exp\left(-\frac{Ct^2}{\sigma^2}\right).  \]
	Taking $t = \Theta(\sigma \sqrt{(d + \log(2/\delta)})$ gives the result.
\end{proof}

\begin{lemma}\label{lem:regularity}
	For any $\alpha \in (0,\eta)$, 
	suppose 
	\begin{equation}
		n \gtrsim \frac{\eta \log(\min(n,d)/\delta)}{\alpha^2}
		\label{eq:nbound}
	\end{equation}
	For any sequence of $x_1,...,x_n$ chosen during the process in Definition~\ref{defn:huberreg}, we have that with probability at least $1 - \delta$ over the randomness of the $\Ber(\eta)$ coins generating $a^*_1,...,a^*_n$, the following event holds. Let $\Sig' \triangleq \frac{1}{n}\sum_t a^*_t x_t x_t^{\top}$. Then:
	\begin{enumerate}
		\item $\frac{1}{n} \sum_{t = 1}^n a^*_t \ge 1 - \eta - \alpha$. \label{item:manygood}
		\item $\left|\frac{1}{n}\sum^n_{t=1}a^*_t \xi_t \iprod{x_t,v}\right| \le \sigma \lambda \norm{v}_{\Sig'} + \sigma \lambda'\norm{v}$ for all $v$ where:
		\begin{enumerate}
		    \item In the special case of obliviously chosen covariates $x_t$: $\lambda \triangleq \Theta\left(\sqrt{\frac{d + \log(2/\delta)}{n}}\right)$ and $\lambda' \triangleq 0$.
		    \item In the general case of adaptive chosen covariates $x_t$: $\lambda \triangleq 0$ and $\lambda' \triangleq \Theta\left( \sqrt{\frac{\log(2/\delta)}{n}}\right)$
		\end{enumerate}
		 \label{item:vanish}
		\item $\Sig' \succeq (1 - \eta)\Sig_n - \alpha\cdot \Id$. \label{item:sigconc}
	\end{enumerate}
\end{lemma}

\begin{proof}
    We start with part~\ref{item:manygood}.
	We have $\E{\frac{1}{n} \sum_{t = 1}^n a^*_t} = 1 - \eta$
	and using that the variance of $Ber(p)$ is $p(1 - p)$ we have
	$\Var{a^*_t} \le \eta$.
	Then by Bernstein's inequality (Fact~\ref{fact:bernstein})
	we know that
	\[ \Pr{\frac{1}{n} \sum_{t = 1}^n a^*_t \ge 1 - \eta - \alpha} \le \exp\left(-\frac{Cn \alpha^2}{\frac{1}{n} \sum \Var{a^*_t} + \alpha}\right) \le \exp\left(-\frac{C n \alpha^2}{\eta + \alpha}\right)  \]
	so we find $\frac{1}{n} \sum_{t = 1}^n a'_t \ge 1 - \eta - \alpha$ with probability $1 - \delta$, provided $n = \Omega\left(\frac{\eta}{\alpha^2} \log(1/\delta)\right)$.

	For part~\ref{item:vanish} (a), let $T\subseteq[n]$ denote the set of indices $t$ for which $a^*_t = 1$; we now treat $x$ and $T$ as fixed and consider only $\xi$. 
	\[ \frac{1}{n}\sum^n_{t=1}a^*_t \xi_t \iprod{x_t,v} = \frac{1}{n} \langle (X')^T \xi, v \rangle = \frac{1}{n} \langle P_{V} \xi, (X') v \rangle \le \frac{1}{\sqrt{n}} \|P_V \xi\| \|v\|_{\Sig'} \]
	where $X' : n \times d$ has rows $a^*_1 x_1,\ldots,a^*_n x_n$, $P_V$ is the projection onto subspace $V$ and $V$ is the column span of $X'$, the last step applies Cauchy-Schwarz and the definition of $\Sig'$. Finally, the result follows by bounding $P_V \xi$ using Lemma~\ref{lem:sphere-max}.
	
	For part 2(b), observe by Cauchy-Schwarz
\[ \frac{1}{n} \sum_{t = 1}^n a^*_t \xi_t \langle x_t, v\rangle = \left\langle \frac{1}{n} \sum_{t = 1}^n a^*_t \xi_t x_t, v \right\rangle \le \left\|\frac{1}{n} \sum_{t = 1}^n a^*_t \xi_t x_t\right\| \|v\| \]
and the sum inside the absolute value is a vector-valued martingale with step size at most $\sigma$, so the result follows from Theorem~\ref{thm:azuma-vector}.

	We now show part~\ref{item:sigconc}. We can apply the matrix Freedman inequality in the form of Corollary~\ref{corr:matrix-bernstein-eff} to the matrix martingale difference sequence \begin{equation}
		(a^*_1 - (1 - \eta))\cdot x_1x_1^{\top}, (a^*_2 - (1 - \eta))\cdot x_2x_2^{\top}, \ldots, (a^*_t - (1 - \eta)) \cdot x_tx_t^{\top}, \label{eq:sequence_mats}
	\end{equation}
	which satisfies $\E{(a^*_t - (1 - \eta))^2 (x_t x_t^T)^2 | \mathcal{F}_{t - 1}} \preceq \eta$ 
	to get
	\begin{equation}
	\Pr*{\norm*{\frac{1}{n}\sum^n_{t=1}a^*_t \cdot x_tx_t^{\top} - \frac{(1 - \eta)}{n}\sum^n_{t=1} x_tx_t^{\top}} \ge \alpha} \le d_2(\alpha) \exp\left( \frac{-C n \alpha^2}{\eta + \alpha}\right)
	\end{equation} 
	where the probability is over the randomness of the martingale, and from Corollary~\ref{corr:matrix-bernstein-eff} we recall $f(x) = \min(1,x)$ hence
	\[ d_2(\alpha) = \Tr f(\alpha \sum_t \E{(a^*_t - (1 - \eta))^2 (x_t x_t^T)^2}/\eta) \le \Tr f(\alpha \sum_t \E{x_t x_t^T}) \le \min \{ d, \alpha n \}.  \]
	Using that $\alpha < \eta < 1$ by assumption, we conclude that as long as $n = \Omega(\frac{\eta \log(\min(n,d)/\delta)}{\alpha^2})$, then
	\begin{equation}
		\frac{1}{n}\sum^n_{t=1}a^*_t x_t x_t^{\top} \succeq (1 - \eta)\Sig_n - \alpha\cdot \Id,
	\end{equation} from which part~\ref{item:sigconc} follows.
\end{proof}

We are now ready to prove Theorem~\ref{thm:nonsos-bdd}. We present the deterministic argument in Lemma~\ref{lem:deterministic-argument} below, then show how combining it with the previous Lemma establishes the result. 
\begin{lemma}\label{lem:deterministic-argument}
Suppose that:
\begin{enumerate}
\item $x_1,\ldots,x_n \in \mathbb{R}^d$, $a^*_1,\ldots,a^*_n \in \{0,1\}$, and for $t = 1,\ldots,n$ we have a sequence $y^*_t$ such that $u^*,\delta_t$ defined by \eqref{eq:ustardef} satisfies \eqref{eq:delta_bound}. 
\item $y_1,\ldots,y_n \in \mathbb{R}$ satisfy
\[ y_t = y^*_t + \xi_t \]
whenever $a^*_t = 1$ and $|\xi_t| \le \sigma$ as in \eqref{eqn:bdd-noise}.
\item The conclusions of Theorem~\ref{thm:nonsos-bdd} are satisfied with parameters $\eta,\lambda,\lambda',\alpha$. The parameter $\beta$ is defined in terms of $\eta$ by \eqref{eqn:rho-defn}.
\item $w \in \mathbb{R}^d$ and $a_1,\ldots,a_n \in [0,1]$ are feasible for Program~\ref{program:biconvex} and satisfy the conclusion of Lemma~\ref{lem:getfirstorder}, i.e. partial optimality and $\epsilon_{grad}$-approximate first order stationarity. 
\end{enumerate}
Then, the following conclusion holds:
\[ \beta \norm{u^* - w}_{\Sig_n} 
    \lesssim \eta \sigma + \eta^{1/2}\epsilon + \sigma\lambda + \left(\graderr^{1/2} + (R\sigma\lambda')^{1/2} + (R^2 \alpha)^{1/4} \left(\sqrt{\eta^{1/2} \sigma + \epsilon} + (R^2 \alpha)^{1/4}\right)\right). \]
\end{lemma}
\begin{proof}[Proof of Theorem~\ref{thm:nonsos-bdd}]
	Let $w$ and $a_1,\ldots,a_n$ be given by Lemma~\ref{lem:getfirstorder}. Let $T$ denote the subset of $t\in[n]$ for which $a^*_t = 1$, i.e. $T$ is the set of rounds which are uncorrupted.  We apply the first order optimality condition \eqref{eq:firstorder} with $v = u^*$ to get that 
	\begin{equation}\label{eq:firstorder-u}
		\frac{1}{n}\sum_t a_t(y_t - \iprod{w,x_t})\iprod{x_t, u^* - w} \le \graderr.
	\end{equation}
	
	We will lower bound the left-hand side of \eqref{eq:firstorder-u} by considering the contribution from $T$ and $[n]\setminus T$. 
	
	\paragraph{Contribution from $T$.} For the former, we have \begin{align}
		\MoveEqLeft \frac{1}{n}\sum_{t\in T} a_t(y_t - \iprod{w,x_t})\iprod{x_t, u^* - w} \\
		&= \frac{1}{n}\sum_{t\in T} a_t(\delta_t + \xi_t + \iprod{u^* - w,x_t})\iprod{x_t, u^* - w} \\
		&= \frac{1}{n}\sum_{t\in T}\big[\underbrace{a_t\iprod{x_t, u^* - w}^2}_{\circled{1}} + \underbrace{\xi_t\iprod{x_t, u^* - w}}_{\circled{2}} - \underbrace{(1 - a_t)\xi_t\iprod{x_t, u^* - w}}_{\circled{3}} + \underbrace{a_t\delta_t\iprod{x_t, u^* - w}}_{\circled{4}}\big]. \label{eq:1234}
	\end{align}
	We control all four terms separately, $\circled{1}$ being the dominant term. Define $\Sig'$ as in Lemma~\ref{lem:regularity}.

	For $\circled{1}$,
	we write $a_t = 1 - (1 - a_t)$ and use Lemma~\ref{lem:regularity} to get
	\begin{align}
		\frac{1}{n}\sum_{t\in T} a_t\iprod{x_t, u^* - w}^2 
		&= \frac{1}{n}\sum_{t\in T} \iprod{x_t, u^* - w}^2 - \frac{1}{n}\sum_{t \in T} (1 - a_t)\iprod{x_t, u^* - w}^2  \\
		&= \norm{u^* - w}^2_{\Sig'} - \frac{1}{n}\sum_{t \in T} (1 - a_t)\iprod{x_t, u^* - w}^2  \\
		&\ge (1 - \eta)\norm{u^* - w}^2_{\Sig_n} - \frac{1}{n}\sum_{t \in T} (1 - a_t)\iprod{x_t, u^* - w}^2 - O(\alpha R^2) \\
		&\ge (1 - 2\eta)\norm{u^* - w}^2_{\Sig_n} - O(\alpha R^2), \label{eq:circle2}
	\end{align}
	where in the last step we expanded the sum from $i \in T$ to $i \in [n]$ and then used the last constraint in Program~\ref{program:basic_sdp}.


	For $\circled{2}$, note that \begin{equation}
		\frac{1}{n}\sum_{t\in T} \xi_t\iprod{x_t,u^* - w} 
		\le O\left(\norm{u^* - w}_{\Sig_n} \sigma \lambda + R \sigma \lambda' \right)
	\end{equation} by Part~\ref{item:vanish} of Lemma~\ref{lem:regularity}, the fact that $\Sig' \preceq \Sig_n$, and $\norm{u^* - w} \le 2R$. 

	For $\circled{3}$, we have that \begin{equation}
		\frac{1}{n}\sum_{t\in T}(1 - a_t)\xi_t\iprod{x_t, u^* - w}\le \left(\frac{1}{n}\sum_{t\in T}(1 - a_t)\iprod{x_t, u^* - w}^2\right)^{1/2} \left(\frac{1}{n}\sum_{t\in T}(1 - a_t)\xi^2_t\right)^{1/2}. \label{eq:term2_cs}
	\end{equation} 
	By the last constraint in Program~\ref{program:basic_sdp}, we can upper bound the first factor on the right-hand side by 
	$\sqrt{\eta\norm{u^* - w}^2_{\Sig_n} + \alpha\norm{u^* - w}^2_2} \le \eta^{1/2}\norm{u^* - w}_{\Sig_n} + \sqrt{\alpha} R$. For the second factor, we can upper bound it by Holder's inequality as (recalling $\alpha \le \eta$) we have
	\begin{equation}\label{eqn:noise-holder}
	\left(\frac{1}{n}\sum_{t\in T}(1 - a_t)\xi^2_t\right)^{1/2} \le \sqrt{\eta} \sigma 
	\end{equation}
	so overall we get a bound on \eqref{eq:term2_cs} of 
	$\left(\eta^{1/2}\norm{u^* - w}_{\Sig_n} + \sqrt{\alpha} R\right)\cdot \eta^{1/2}\sigma$.

	Finally, for $\circled{4}$, note that first-order optimality of $u^*$ implies that $\frac{1}{n}\sum_t \delta_t\iprod{x_t, u^* - w} = 0$. So we can write \begin{align}
		\MoveEqLeft \frac{1}{n}\sum_{t\in T} a_t \delta_t\iprod{x_t, u^* - w} \\
		&= \frac{1}{n}\sum_{t\in[n]} (1 - a_t)\delta_t\iprod{x_t, u^* - w} - \frac{1}{n}\sum_{t\not\in T} a_t\delta_t\iprod{x_t, u^* - w}. \\
		&\le \left(\frac{1}{n}\sum_{t\in [n]}(1 - a_t)^2\iprod{x_t,u^* - w}^2\right)^{1/2}\left(\frac{1}{n}\sum_{t\in [n]}\delta_t^2\right)^{1/2} + \left(\frac{1}{n}\sum_{t\not\in T} a_t^2\iprod{x_t, u^* - w}^2\right)^{1/2}\left(\frac{1}{n}\sum_{t\not\in T}\delta_t^2\right)^{1/2} \\
		&\le \left(\frac{1}{n}\sum_{t\in[n]}\delta^2_t\right)^{1/2}\cdot \left(\eta^{1/2}\norm{u^* - w}_{\Sig_n} + \left(\eta\norm{u^* - w}^2_{\Sig_n} + \alpha\norm{u^* - w}^2_2\right)^{1/2}\right),
	\end{align} where in the last step we used the fact that $(1 - a_t)^2\le 1 - a_t$ and $a^2_t \le 1$ by the first constraint in Program~\ref{program:basic_sdp}, as well as the third constraint in Program~\ref{program:basic_sdp} and Part~\ref{item:sigconc}.

    Using \eqref{eq:delta_bound} to upper bound the first parenthesized term, we conclude that \begin{equation}
		\frac{1}{n}\sum_{t\in T} a_t \delta_t\iprod{x_t, u^* - w} \le O(\epsilon\cdot (\eta^{1/2}\norm{u^* - w}_{\Sig_n} + \sqrt{\alpha}R)).
	\end{equation}
	Having controlled $\circled{1}, \circled{2}, \circled{3}, \circled{4}$, from \eqref{eq:1234} we can therefore lower bound $\frac{1}{n}\sum_{t\in T} a_t(y_t - \iprod{w,x_t})\iprod{x_t, u^* - w}$ by 
	\begin{align}
		&(1 - 2\eta)\norm{u^* - w}^2_{\Sig_n} \\
		&\quad- O\Bigg(\norm{u^* - w}_{\Sig_n}\left(\sigma \lambda + \eta \sigma + \epsilon\eta^{1/2}\right)
		+ \alpha R^2 + R\sigma\lambda' + \sqrt{\alpha} R \eta^{1/2}\sigma\Bigg). \label{eq:goodterms}
	\end{align}

    \paragraph{Contribution from $[n] \setminus T$.}
	It remains to control the contribution to the left-hand side of \eqref{eq:firstorder-u} coming from the corrupted summands indexed by $[n]\setminus T$, which we do by upper bounding the term in absolute value. By Cauchy-Schwarz and $a^2_t \le a_t$,
	\begin{align}
		\left|\frac{1}{n}\sum_{t\not\in T} a_t(y_t - \iprod{w,x_t})\iprod{x_t, u^* - w}\right| 
		&\le \left(\frac{1}{n}\sum_{t\not\in T} a_t(y_t - \iprod{w,x_t})^2\right)^{1/2}\left(\frac{1}{n}\sum_{t\not\in T} a_t \iprod{x_t, u^* - w}^2\right)^{1/2} \\
		&\le \left(\frac{1}{n}\sum_{t\not\in T} a_t(y_t - \iprod{w,x_t})^2\right)^{1/2}\left(\eta^{1/2}\norm{u^* - w}_{\Sig_n} + \sqrt{\alpha} R\right) \label{eq:twofactors}
	\end{align}
	where in the second step we used the fact that $a_t\in\brk{0,1}$ along with Part~\ref{item:sigconc} of Lemma~\ref{lem:regularity}. As for the first factor on the right-hand side, by the fact that $\brc{a_t}$ were chosen in Program~\ref{program:basic_sdp} to minimize $\frac{1}{n}\sum_{t\in [n]} a_t(y_t - \iprod{w,x_t})^2$, we have that 
	\begin{equation}
		\frac{1}{n}\sum_{t\in [n]}a_t(y_t - \iprod{w,x_t})^2 \le \frac{1}{n}\sum_{t\in [n]}a^*_t(y_t - \iprod{w,x_t})^2 = \frac{1}{n} \sum_{t\in T}(y_t - \iprod{w,x_t})^2,
	\end{equation} 
	hence rearranging gives 
	\begin{align}
		\MoveEqLeft \frac{1}{n}\sum_{t\not\in T}a_t (y_t - \iprod{w,x_t})^2 \\
		&\le \frac{1}{n} \sum_{t \in T} (y_t - \langle w, x_t\rangle)^2 - \frac{1}{n} \sum_{t \in T}^n a_t (y_t - \langle w, x_t \rangle)^2 \\
		&= \frac{1}{n} \sum_{t \in T} (1 - a_t)(y_t - \langle w, x_t \rangle)^2 \\
		&= \frac{1}{n}\sum_{t\in T}(1 - a_t)(\iprod{u^* - w, x_t} + \delta_t + \xi_t)^2 \\
		&\le \frac{2 + 1/\beta}{n}\sum_{t\in T}(1 - a_t)\xi^2_t + \frac{2 + 1/\beta}{n}\sum_{t\in T}(1 - a_t)\delta^2_t + \frac{1 + 2\beta}{n}\sum_{t\in T}(1 - a_t)\iprod{u^* - w, x_t}^2
	\end{align} 
	where 
	in the second-to-last step we used Cauchy-Schwarz to show
	\[ (a + b + c)^2 \le (2 + 1/\beta)(a^2 + b^2 + c^2\beta) = (2 + 1/\beta)(a^2 + b^2) + (1 + 2 \beta)c^2.\]
	We continue and see
	\begin{align}
	\frac{1}{n}\sum_{t\not\in T}a_t (y_t - \iprod{w,x_t})^2
	    \le (1 + 2\beta) \eta \|u^* - w\|_{\Sig_n}^2 + O\left(\frac{1}{\beta}\eta \sigma^2 + \frac{1}{\beta} \epsilon^2 + \alpha R^2\right)
	\end{align}
	where in the last step we used Holder's inequality and \eqref{eqn:bdd-noise},
	\eqref{eq:delta_bound}, and the last constraint in Program~\ref{program:basic_sdp} with $\|u^* - w\| \le 2R$.

	So by \eqref{eq:twofactors} we can upper bound $\frac{1}{n}\sum_{t\not\in T} a_t(y_t - \iprod{w,x_t})\iprod{x_t, u^* - w}$ by 
	\begin{multline}
		\left((1 + 2\beta)^{1/2} \eta^{1/2} \|u^* - w\|_{\Sig_n} + O\left(\beta^{-1/2}\eta^{1/2}\sigma + \beta^{-1/2} \epsilon + \alpha^{1/2} R\right)\right)\left(\eta^{1/2}\norm{u^* - w}_{\Sig_n} + \sqrt{\alpha} R\right) \\
		= (1 + 2\beta)^{1/2}\eta\norm{u^* - w}_{\Sig_n}^2 + O(\beta^{-1/2}\eta \sigma + \beta^{-1/2}\eta^{1/2}\epsilon + \alpha^{1/2}\eta^{1/2}R)\norm{u^* - w}_{\Sig_n} + \mathcal{E}, \label{eq:badterms}
	\end{multline} where
	\begin{equation}\label{eqn:error-terms}
		\mathcal{E}\triangleq \sqrt{\alpha} R \cdot O(\beta^{-1/2}(\eta^{1/2} \sigma + \epsilon) + \sqrt{\alpha} R)
	\end{equation}
	captures all the error terms that vanish as $\alpha \to 0$. 

    \paragraph{Combining.}
	Putting the bounds on $\frac{1}{n}\sum_{t\in T} a_t(y_t - \iprod{w,x_t})\iprod{x_t, u^* - w}$ and $\frac{1}{n}\sum_{t\not\in T} a_t(y_t - \iprod{w,x_t})\iprod{x_t, u^* - w}$ by \eqref{eq:goodterms} and \eqref{eq:badterms} together with \eqref{eq:firstorder}, we conclude that \begin{equation}
		(1 - 3\eta - \sqrt{2\beta} \cdot \eta))\norm{u^* - w}^2_{\Sig_n} \le O\left(\beta^{-1/2}\eta \sigma + \beta^{-1/2} \eta^{1/2}\epsilon + \alpha^{1/2}R + \sigma \lambda \right)\norm{u^* - w}_{\Sig_n} + \calE',
	\end{equation} 
	where $\calE' \triangleq \graderr + R\sigma \lambda' + O(\calE)$.
	We do case analysis based on which of the two terms on the rhs of the above bound dominates:
	\begin{enumerate}
		\item In the first case, the first term is at least as large as $\calE'$. Then the bound simplifies to 
		\[ (1 - 3\eta - \sqrt{2\beta} \cdot \eta)\norm{u^* - w}_{\Sig_n} \lesssim \beta^{-1/2}\eta \sigma + \beta^{-1/2} \eta^{1/2}\epsilon + \alpha^{1/2}R + \sigma \lambda \]
	    \item Otherwise, $\calE'$ is larger than the first term. Then taking a square root  the bound can be simplified to
	    \[ (1 - 3\eta - \sqrt{2\beta} \cdot \eta)\norm{u^* - w}_{\Sig_n} \lesssim \graderr^{1/2} + (R\sigma \lambda')^{1/2} + \mathcal{E}^{1/2}. \]
	\end{enumerate}
	In either case, since $\alpha^{1/2} R = O(\calE^{1/2})$ we see the inequality
	\[ (1 - 3\eta - \sqrt{2\beta} \cdot \eta)\norm{u^* - w}_{\Sig_n} \lesssim \beta^{-1/2}\eta \sigma + \beta^{-1/2} \eta^{1/2}\epsilon + \sigma\lambda + \graderr^{1/2} + (R\sigma \lambda')^{1/2} + \mathcal{E}^{1/2} \]
	holds. Since $\beta = (1/3 - \eta)^2$ and $\eta < 1/3$ we know
	\[ (1 - 3\eta - \sqrt{2\beta} \eta) \ge 3\sqrt{\beta} - \sqrt{2\beta} = \Theta(\sqrt{\beta}) \]
    so we get a final bound of 
    \begin{align*} 
    &\norm{u^* - w}_{\Sig_n}  \\
    &\lesssim \beta^{-1}\eta \sigma + \beta^{-1} \eta^{1/2}\epsilon + \beta^{-1/2}\sigma\lambda + \beta^{-1/2}(\graderr^{1/2} + (R\sigma \lambda')^{1/2} + \calE^{1/2}) \\
    &\lesssim \beta^{-1}\eta \sigma + \beta^{-1} \eta^{1/2}\epsilon + \beta^{-1/2}\sigma\lambda + \beta^{-1/2}(\graderr^{1/2} + (R\sigma\lambda')^{1/2} + (R^2 \alpha)^{1/4} (\beta^{-1/4}\sqrt{\eta^{1/2} \sigma + \epsilon} + (R^2 \alpha)^{1/4})).
    \end{align*}
    Using $\beta < 1$ to upper bound all of the powers of $\beta$ by $\beta^{-1}$ gives the result. 
\end{proof}
Now combining our claims proves Theorem~\ref{thm:nonsos-bdd}:
\begin{proof}[Proof of Theorem~\ref{thm:nonsos-bdd}]
\textbf{Oblivious covariates.}
By Lemma~\ref{lem:deterministic-argument}  and Lemma~\ref{lem:regularity} we know the output $w$ of Lemma~\ref{lem:getfirstorder} with $\epsilon_{grad} = O(\sigma^2 \lambda^2) = O(\sigma^2 \frac{d + \log(2/\delta)}{n})$ satisfies
\[ \beta \norm{u^* - w}_{\Sig_n}
    \lesssim \eta \sigma +  \eta^{1/2}\epsilon + \sigma\sqrt{\frac{d + \log(2/\delta)}{n}} + (R^2 \alpha)^{1/4} (\sqrt{\eta^{1/2} \sigma + \epsilon} + (R^2 \alpha)^{1/4})
\]
with probability at least $1 - \delta$, as long as $\alpha < \eta$ and \eqref{eq:nbound} holds:
\[ n \gtrsim \frac{\eta \log(\min(n,d)/\delta)}{\alpha^2}. \]
Based on this we take $\alpha = \Theta\left(\sqrt{\frac{\eta \log(\min(n,d)/\delta)}{n}}\right)$ and require
\[ n \gtrsim \log(\min(n,d)/\delta)/\eta \]
so that $\alpha < \eta$. 
Then we can write the error bound as
\begin{align}\label{eqn:preliminary-conclusion}
\beta \norm{u^* - w}_{\Sig_n}
    &\lesssim \eta \sigma +  \eta^{1/2}\epsilon +  \eta^{1/8} R^{1/2}\sqrt[8]{\frac{(\eta^{1/2} \sigma + \epsilon)^4\log(\min(n,d)/\delta)}{n}}  \\
    &\quad + \eta^{1/4} R\sqrt[4]{\frac{\log(\min(n,d)/\delta)}{n}} + \sigma\sqrt{\frac{d + \log(2/\delta)}{n}}.
\end{align}
\paragraph{Adaptive covariates.} The only change is that the term $ \sigma \lambda$ disappears and the term 
\[ (R \sigma \lambda')^{1/2} =  (R \sigma)^{1/2} \sqrt[4]{\frac{\log(2/\delta)}{n}}  \]
appears, which gives
\begin{align}\label{eqn:adaptive-conclusion}
\beta \norm{u^* - w}_{\Sig_n}
    &\lesssim \eta \sigma + \eta^{1/2}\epsilon +  \eta^{1/8} R^{1/2}\sqrt[8]{\frac{(\eta^{1/2} \sigma + \epsilon)^4\log(\min(n,d)/\delta)}{n}}  \\
    &\quad + \eta^{1/4} R\sqrt[4]{\frac{\log(\min(n,d)/\delta)}{n}}  +  (R \sigma)^{1/2} \sqrt[4]{\frac{\log(2/\delta)}{n}}.
\end{align}
Since this bound also applies in the special case of oblivious covariates, we get the stated result. 
\end{proof}
\subsubsection{Subgaussian noise}
\label{subsubsec:subgauss}
In this section we consider the case where the noise is subgaussian. Subgaussian random variables are in $L_q$ for every $q$, so we could analyze them using our previous result (taking $q = \log(1/\eta)$), but since subgaussian noise behaves similar to bounded noise, we can optimize the argument by avoiding truncation. This yields the following result, which in the uncontaminated $\eta = 0$ setting with oblivious covariates, recovers the same (minimax optimal) rate achieved by Ordinary Least Squares/Ridge Regression and gracefully degrades with increasing $\eta$.
\begin{theorem}[\textsc{SCRAM} Guarantee with Subgaussian Noise]\label{thm:nonsos-subgaussian}
Suppose that $\eta < 1/3$ is an upper bound on the contamination level, define $\beta$ as in \eqref{eqn:rho-defn}, and suppose for some $\sigma \ge 0$ that 
for all $t$ the noise $\xi_t$ is $\sigma^2$-subgaussian.
Then if $\eta = 0$ or
\[ n \gtrsim \log(\min(n,d)/\delta)/\eta, \]
$\alpha = \Theta\left(\sqrt{\frac{\eta \log(d/\delta)}{n}}\right)$ and $\overline{\eta} = \eta$, the output $w$
of \textsc{SCRAM} with $poly(R/\sigma,\log(2/\delta),d,n)$ many steps satisfies for oblivious covariates the bound
\begin{align*} 
\beta \norm{u^* - w}_{\Sig_n}
    &\lesssim c_{\delta,\eta,n} \eta \sigma + \eta^{1/2}\epsilon + \eta^{1/8} R^{1/2} (\sqrt{c_{\delta,\eta,n} \eta}\sigma  + \epsilon)^{1/2}\sqrt[8]{\frac{\log(\min(n,d)/\delta)}{n}} \\
    &\quad+  \eta^{1/4} R\sqrt[4]{\frac{\log(\min(n,d)/\delta)}{n}} 
     + \min \left\{\sigma\sqrt{\frac{d + \log(2/\delta)}{n}}, (R\sigma)^{1/2}\sqrt[4]{\frac{\log(2/\delta)}{n}}\right\}
\end{align*}
with probability at least $1 - \delta$, where
\begin{equation} 
    c_{\delta,\eta,n} \triangleq \sqrt{\log(1/\eta)} \exp\left(\max\left(1,\frac{\log\log(1/\delta) \cdot \log(1/\eta)}{2\log(n)}\right)\right) \label{eq:cdef}
\end{equation}
captures a logarithmic term which is $O(\sqrt{\log(1/\eta)})$ assuming $\log n \ge (1/100) \log\log(1/\delta)\log(1/\eta)$.
In the more general case of adaptive covariates, \textsc{SCRAM} satisfies the bound 
\begin{align} 
\beta \norm{u^* - w}_{\Sig_n}
    &\lesssim c_{\delta,\eta,n}\eta \sigma + \eta^{1/2}\epsilon + \eta^{1/8} R^{1/2} (\sqrt{c_{\delta,\eta,n} \eta}\sigma  + \epsilon)^{1/2}\sqrt[8]{\frac{\log(\min(n,d)/\delta)}{n}} \\
    &\quad+  \eta^{1/4} R\sqrt[4]{\frac{\log(\min(n,d)/\delta)}{n}} 
     +  (R\sigma)^{1/2}\sqrt[4]{\frac{\log(2/\delta)}{n}} \label{eq:nosos_subgauss_adaptive}
\end{align}
i.e. the same bound except the last term was changed.
\end{theorem}
\begin{proof}
    The proof is the same as Theorem~\ref{thm:nonsos-bdd} with a few modifications which we describe now. The main difference is in the use of Holder's inequality to bound terms including noise, e.g. \eqref{eqn:noise-holder}. In this case, since $\xi_t$ is no longer bounded we use for $q = \min(2\log(n)/\log\log(1/\delta), \log(1/\eta))$ that by Holder's inequality
    \[ \left(\frac{1}{n}\sum_{t\in T}(1 - a_t)\xi^2_t\right)^{1/2} \le \left(\frac{1}{n} \sum_{t \in T} (1 - a_t)\right)^{1/2p}\left(\frac{1}{n} \sum_{t \in T} \xi_t^{2q} \right)^{1/2q} \lesssim \eta^{1/2 - 1/2q} \sigma \sqrt{q}   \]
    where $1/p + 1/q = 1$ and we used Lemma~\ref{lem:latala-app} below.
    Plugging in the value of $q$ gives an upper bound of
    \[ \sigma \eta^{1/2} \sqrt{\log(1/\eta)} \cdot \exp\left(\max\left(1,\frac{\log\log(1/\delta) \cdot \log(1/\eta)}{4\log(n)}\right)\right). \]
    
    The other change is that in Lemma~\ref{lem:regularity}, we can use the subgaussian property to establish Part 2 without needing boundedness of the noise: we use the generalization of the vector Azuma-Hoeffding inequality to the subgaussian step size setting, Theorem~\ref{thm:azuma-vector}.
\end{proof}
The following Lemma~\ref{lem:latala-app} gives a fairly sharp upper deviation bound for power sums of subgaussian random variables. This result is not so easy to prove directly, but follows from the main result of \cite{latala1997estimation}.
\begin{lemma}\label{lem:latala-app}
Suppose that $Z_1,\ldots,Z_n$ are independent $\sigma^2$-subgaussian random variables. 
Then
\[ \left(\frac{1}{n} \sum_i |Z_i|^p\right)^{1/p} \lesssim \sigma \sqrt{p} \]
with probability at least $1 - \delta$, provided $n \ge \log(2/\delta)^{p/2}$. 
\end{lemma}

\begin{proof}
    We rescale so that $\sigma = 1$. In this proof we use the notation $\|X\|_q = \E{|X|^q}^{1/q}$ for the function space $L_p$ norm. 
    
    Define $S = \sum_i |Z_i|^p$. By Markov's inequality,
    $\Pr{S \ge t} = \Pr{S^q \ge t^q} \le \frac{\|S\|_{q}^q}{t^q}$ for any $q \ge 1$. 
    By Theorem 1 and Corollary 1 of \cite{latala1997estimation}, for $q \le n$ we have
    \begin{align*} 
    \|S\|_q 
    &\lesssim \sup \left\{ (q/s)(n/q)^{1/s} \max_i \|Z_i^p\|_{s} : 1 \le s \le q \right\}.
    \end{align*}
    We observe from standard subgaussian moment bounds \cite{rigollethigh,vershynin2018high} that
    \[ \|Z_i^p\|_s = \|Z_i\|_{sp}^p \lesssim (esp)^{p/2} \]
    so
    \begin{align*} 
    \|S\|_q 
    &\lesssim (ep)^{p/2}q \sup \left\{ (n/q)^{1/s} s^{p/2 - 1} : 1 \le s \le q \right\} \\
    &= (ep)^{p/2}q \sup \left\{\exp((1/s)\log(n/q) + (p/2 - 1) \log(s) : 1 \le s \le q \right\}.
    \end{align*}
    We consider the optimization over $s$ inside the exponential. 
    The unique critical point is when $-s^{-2}\log(n/q) + (p/2 - 1)/s = 0$, i.e. $s = \log(n/q)/(p/2 - 1)$. Since the function goes to infinity as $s \to 0$ and $s \to \infty$, that critical point must be a minimum. It suffices therefore to consider the boundary points. This shows
    \[ \|S\|_q \lesssim (ep)^{p/2} \left(n + n^{1/q} q^{p/2 - 1/q} \right) \lesssim  (ep)^{p/2} \left(n + n^{1/q} q^{p/2} \right)  \]
    using $\max_{q \ge 1} q^{-1/q} = 1$. Now taking $t = e\|S\|_q$ and $q = \log(1/\delta)$ shows
    \[ S \le e\|S\|_q \lesssim (ep)^{p/2}n(1 + n^{1/\log(1/\delta) - 1}\log(1/\delta)^{p/2}) \]
    with probability at least $1 - \delta$. In particular, if $n \ge \log(1/\delta)^{p/2}$ then 
    \[ S \lesssim (ep)^{p/2} n(1 + e^{(p/2)\log\log(1/\delta)/\log(1/\delta)}) \le e^p p^{p/2} n \]
    as claimed.
\end{proof}

\subsection{Stochastic Setting and Generalization Bounds}
\label{subsec:stochastic}

Finally, we note that while the guarantees in this section so far have been in the usual \emph{fixed design} setting, from these guarantees we also obtain strong results in the stochastic (or \emph{random design}) setting often considered in statistical learning. We first review the setup.
We assume there exists a joint distribution $\calD_{x,y^*}$ over clean examples $(x,y^*)$ and clean training data $(x_1,y^*_1),\ldots,(x_n,y^*_n)$ are sampled identically from this distribution. We define the \emph{population loss} to be the error of $w$ on a fresh clean example $(x,y^*)$ in squared loss,
\[ L(w) = \E[x,y \sim \calD_{x,y^*}]{(y^* - \iprod{w,x})^2}, \]
and our goal is to find a near minimizer of the population loss, i.e. compute $\wh{w}$ from training data such that $\|\wh{w}\| \le R$ and the gap in population loss $L(\wh{w}) - L(u^*)$
is as small as possible, where we define
\[ u^* \triangleq \arg\min_{\|u\| \le R} L(u) \]
to be the optimal predictor of norm at most $R$.
Concretely, the gap in loss can be rewritten in a more convenient form in the following way
\begin{align*} 
L(\wh{w}) - L(u^*) 
&= \E{(y^* - \iprod{u^*,x} + \iprod{u^* - w, x})^2} - \E{(\iprod{u^* - w, x})^2} \\
&= \E{\iprod{\wh{w} - u^*,x}^2} + 2\E{(y^* - \iprod{u^*,x})\iprod{u^* - w, x})} \\
&= \| \wh{w} - u^*\|_{\Sig^*}^2  + 2\E{(y^* - \iprod{u^*,x})\iprod{u^* - w, x})}
\end{align*}
where
$\Sig^* = \E[\calD_{x}]{xx^T}$ is the second moment matrix, i.e. covariance matrix if $x$ is mean zero, and the second term on the rhs is $O(\epsilon \|u^* - w\|_{\Sig^*})$ under \eqref{eqn:misspec-stochastic}, showing that as $\epsilon \to 0$, the slightly different goals of minimizing $\|u^* - w\|_{\Sig^*}$ and minimizing the suboptimality in population loss become exactly equivalent.
As before, we assume that the conditional law of $y^*$ given $x$ is
\begin{equation} \label{eqn:misspec-stochastic}
y^* = \langle w^*, x \rangle + \epsilon_x + \xi 
\end{equation}
where $\|w^*\| \le R$, $|\epsilon_x| \le \epsilon$ is misspecification, and $\xi$ is noise independent of $x,\epsilon_x$. If $\epsilon = 0$ then we can take $u^* = w^*$, otherwise we always have $\|u^* - w^*\|_{\Sigma} \le 2\epsilon$ since $|\langle w^*, x \rangle - \E{y^* | x}| \le \epsilon$ and $u^*$ is only closer in average squared loss.

We will use the following Lemma to relate the error when measured according to the population second moment matrix $\Sig^*$ and the random matrix $\Sig_n$: this ``localized'' generalization bound follows from the main result of \cite{srebro2010optimistic}, which builds upon the local Rademacher complexity framework of \cite{bartlett2005local}; it gives tighter results than e.g. naively applying matrix concentration because it focuses in on the behavior of the bottom singular value. We note that the general connection between generalization theory and the bottom singular value of the empirical covariance matrix is well known and has been used in other contexts, see e.g. \cite{koltchinskii2015bounding}.
\begin{lemma}[Consequence of Theorem 1 of \cite{srebro2010optimistic}]\label{lem:generalization}
Suppose $w^*$ is any fixed vector with $\|w^*\| \le R$. Suppose that $x_1,\ldots,x_n$ are iid copies of a random variable $x$ with $\Sig^* = \E{x x^T}$ and $\|x\| \le 1$ almost surely. 
Uniformly over all $w$ with $\|w\| \le R$ and with probability at least $1 - \delta$,
where $\Sig_n = \frac{1}{n} \sum_{i = 1}^n x_i x_i^T$ is the empirical second moment matrix, the following holds:
    \[ \|w - w^*\|_{\Sig^*}^2 - \|w - w^*\|_{\Sig_n}^2 \lesssim R\|w - w^*\|_{\Sig_n} \sqrt{\frac{\log^3(n) + \log(1/\delta)}{n}} +  \frac{R^2(\log^3(n) + \log(1/\delta))}{n} \]
    and as a consequence
    \[ \|w - w^*\|_{\Sig^*} \lesssim \|w - w^*\|_{\Sig_n} + R\sqrt{\frac{\log^3(n) + \log(1/\delta)}{n}}. \]
\end{lemma}
\begin{proof}
    We explain how this follows from Theorem 1 of \cite{srebro2010optimistic}, which requires us to interpret the gap $\|w - w^*\|_{\Sig^*}^2 - \|w - w^*\|_{\Sig_n}^2$ as the generalization gap in a statistical learning problem; we refer the reader  there for a detailed explanation of the setup. We now describe the new learning problem, which is not the same as the one considered outside the proof of this Lemma, as it has no noise, contamination, or misspecification. In this problem, $x$ is defined as in the theorem statement, and the label $y = \langle w^*, x \rangle$. The population loss is $\E{\ell(y - \langle w, x \rangle)} = \langle w^* - w, \Sig (w^* - w) \rangle$ where $\ell(e) = e^2$ is the squared loss which is $1$-smooth, and the empirical loss is $\frac{1}{n} \sum_{i = 1}^n \ell(y_i - \langle w, x \rangle) = \langle w^* - w, \Sig_n (w^* - w) \rangle$. We observe that the loss $\ell(y_i - \langle w, x \rangle)$ is upper bounded by $4R^2$ almost surely, and finally we use (see \cite{srebro2010optimistic}) that the Rademacher complexity $R_n$ of the function class $\{ x \mapsto \langle w, x \rangle : \|w\| \le R\}$ is $O(R\sqrt{1/n})$ where $n$ is the number of samples. Plugging all of this information into Theorem 1 of \cite{srebro2010optimistic} gives
    \[ \|w - w^*\|_{\Sig^*}^2 - \|w - w^*\|_{\Sig_n}^2 \lesssim \|w - w^*\|_{\Sig_n}\left(R\log^{1.5}(n)\sqrt{\frac{1}{n}} + R\sqrt{\frac{\log(1/\delta)}{n}}\right) + \log^3(n) \frac{R^2}{n} + \frac{R^2 \log(1/\delta)}{n}\]
    and up to constants this is equivalent to the first stated bound. The second (weaker) bound follows by adding $\|w - w^*\|_{\Sig_n}^2$ to the right hand side and taking a square root. 
\end{proof}
Given this result, we can immediately obtain versions of all of the previous results for the stochastic setting (e.g. Theorem~\ref{thm:nonsos}, Theorem~\ref{thm:nonsos-subgaussian}, Theorem~\ref{thm:nonsos-bdd}). 
We describe a more involved application below in Section~\ref{sec:geometric-median}, where we obtain improved results for learning in the stochastic setting by using this generalization bound combined with the generalized median of \cite{minsker2015geometric}.

We note that in the case where the contexts are chosen stochastically, \cite{simchi2020bypassing} recently showed that a modified version of the reduction from \cite{foster2020beyond} can reduce from stochastic contextual bandits to offline regression with stochastic contexts. It should be possible to combine this reduction with our results; however, we omit the details since we will give an algorithm for the more general online setting anyway.

\subsection{Heavy-Tailed Setting Using Geometric Median}\label{sec:geometric-median}

In this section, we focus on the setting where the noise $\brc{\xi_t}$ is in $L_q$ with $q \ge 2$ and obtain improved sample complexity guarantees. In this context, there is a  fairly general way to boost the success probability of algorithms by using the geometric median \cite{minsker2015geometric} or a related high-dimensional median of \cite{hsu2016loss}; in the context of (uncontaminated) ridge regression itself, this kind of estimator was considered in \cite{hsu2016loss}, see Theorem 21 there. To take advantage of the geometric median, we start by establishing improved guarantees for our algorithm, but which hold with only a fixed probability of success.
\begin{lemma}\label{lem:nonsos-lowprob}
Suppose that $\eta < 1/3$ is an upper bound on the contamination level, define $\beta$ as in \eqref{eqn:rho-defn}, and suppose for some $q \in [2,\infty],\sigma_q \ge 0$ and all $t$ that
\begin{equation}\label{eqn:q-noise2}
    \|\xi_t\|_{q} \triangleq \E{|\xi|^q}^{1/q} \le \sigma_q.
\end{equation}
Then provided $\eta = 0$ or
\[ n \gtrsim \log(\min(n,d))/\eta, \]
we can take
$\alpha = \Theta\left(\sqrt{\frac{\eta \log(d)}{n}}\right)$ and $\overline{\eta} = \eta + \Theta(\eta \sqrt{\beta})$ such that the output $w$
of \textsc{SCRAM} with $poly(R/\sigma,d,n)$ many steps satisfies for oblivious covariates the bound
\begin{align}\label{eqn:lowprob-guarantee}
\beta^{1 + 1/q} \norm{u^* - w}_{\Sig_n}
    &\lesssim \eta^{1 - 1/q} \sigma_q + \eta^{1/2}\epsilon + \eta^{1/8} R^{1/2} (\epsilon +  \eta^{1/2 - 1/q} \sigma_q)^{1/2}\sqrt[8]{\frac{\log(\min(n,d))}{n}} \\
    &\quad+  \eta^{1/4} R\sqrt[4]{\frac{\log(\min(n,d))}{n}} 
     + \min \left\{\sigma\sqrt{\frac{d}{n}}, (R\sigma)^{1/2}\sqrt[4]{\frac{1}{n}}\right\}
\end{align}
with probability at least $0.99$. In the more general case of adaptive covariates, it satisfies the bound
\begin{align*} 
\beta^{1 + 1/q} \norm{u^* - w}_{\Sig_n}
    &\lesssim \eta^{1 - 1/q} \sigma_q + \eta^{1/2}\epsilon + \eta^{1/8} R^{1/2} (\epsilon +  \eta^{1/2 - 1/q} \sigma_q)^{1/2}\sqrt[8]{\frac{\log(\min(n,d))}{n}} \\
    &\quad+  \eta^{1/4} R\sqrt[4]{\frac{\log(\min(n,d))}{n}} 
     + (R\sigma)^{1/2}\sqrt[4]{\frac{1}{n}}
\end{align*}
i.e. the same bound except the last term was changed.
\end{lemma}
\begin{proof}
The proof is the same as Theorem~\ref{thm:nonsos} except that we change the analysis of Part~\ref{item:vanish} in Lemma~\ref{lem:regularity} to improve the final term in our bound. First, we observe that truncating the noise $\xi_i$ and recentering (the first part of the proof of Theorem~\ref{thm:nonsos}) can only make the $L_q$ norm of $|\xi_i|$  larger by a factor of $2$ (see the proof of Lemma 2.6.8 in \cite{vershynin2018high}); in what follows, we let $\xi_i$ denote the possibly truncated and recentered noise and use this fact.
Now we consider the application of Theorem~\ref{thm:nonsos-bdd} in the proof of Theorem~\ref{thm:nonsos} and show how in Part 2 of Lema~\ref{lem:regularity} we can replace the infinity norm of the noise by the smaller quantity $\sigma_q$. Specifically this occurs in Part 2 of Lemma~\ref{lem:regularity}.

For Part~\ref{item:vanish} (a), we replace Lemma~\ref{lem:sphere-max} by the following argument based on Chebyshev's inequality. Let $\xi = (\xi_1,\ldots,\xi_n)$ be the vector of (truncated) noise and observe that $\sqrt{\E{\xi_i^2}} \le \E{|\xi_i|^q}^{1/q} = O(\sigma_q)$ by Jensen to see
\[ \Pr{\|P_V \xi\| \ge s} \le \frac{\E{\|P_V \xi\|^2}}{s^2} \le \frac{\langle P_V P_V^T, \sigma_q^2 I \rangle}{s^2} \le \frac{2d \sigma_q^2}{s^2}.  \]

Similarly for Part~\ref{item:vanish} (b), we use Chebyshev's inequality and the fact that
\[ \E{\|\sum_i \xi_i x_i\|^2} = \sum_i \E{\xi_i^2} \|x_i\|^2 \le 2n \sigma_q^2. \]
to get that $\|\frac{1}{n} \sum_i \xi_i x_i\| = O(\sigma_q/\sqrt{\delta n})$ with probability at least $1 - \delta$.

Taking the union bound and using these estimates in the analysis, otherwise unchanged from the proof of Theorem~\ref{thm:nonsos}, gives the result.
\end{proof}
Given this result, we run the algorithm multiple times, and take the geometric median, as described in {\sc SCRAM-GM}. We recall the key guarantee for geometric median from \cite{minsker2015geometric} in its contrapositive form, which informally says that if a $1 - \alpha > 1/2$ proportion of points cluster near each other, then the geometric median will successfully return a point close to this cluster.
\begin{lemma}[Lemma 2.1 (a) of \cite{minsker2015geometric}]\label{lem:geometric-median}
Suppose $x_1,\ldots,x_n$ are points in a $d$-dimensional Euclidean space with norm $\|\cdot\|$. Suppose $z \in \mathbb{R}^d, r > 0, \alpha \in (0,1/2)$, let $C_{\alpha} \triangleq (1 - \alpha)\sqrt{\frac{1}{1 - 2\alpha}}$,
and let
\[ y = \arg\min_y \sum_{i = 1}^n \|y - x_i\|, \]
be the geometric median. If
\[ \#\{i : \|x_j - z\| > r\} \le \alpha n \]
then $\|y - z\| \le C_{\alpha} r$.
\end{lemma}
\begin{algorithm2e}[t]
\DontPrintSemicolon
\caption{{\sc SCRAM-GM}($x_t,y_t,\delta,\bareta,\alpha$)}
\label{alg:scram-gm}
	\KwIn{Input data $(x_t,y_t)_{t = 1}^n$.}
	\KwOut{Predictor $\wh{w}$.}
	    Shuffle the data and split into two equal sized groups $C_1,C_2$ and split $C_1$ into $k \triangleq \Theta(\log(1/\delta))$ equal-size buckets $B_1,\ldots,B_k$.\;
	    Run \textsc{SCRAM} with parameters $\bareta,\alpha$ on each bucket to get predictors $w_1,\ldots,w_k$.\;
	    Return the geometric median
	    \[ \wh w = \arg\min_y \sum_{i = 1}^k \|y - w_i\|_{\Sig_{C_2}} \]
	    where $\Sig_{C_2} \triangleq \frac{1}{|C_2|}\sum_{t \in C_2} x_t x_t^T$.
\end{algorithm2e}
\begin{theorem}\label{thm:geometricmedian}
Suppose that $\eta < 1/3$ is an upper bound on the contamination level, define $\beta$ as in \eqref{eqn:rho-defn}, and suppose for some $q \in [2,\infty],\sigma_q \ge 0$ and all $t$ that
\begin{equation}\label{eqn:q-noise3}
    \|\xi_t\|_{q} \triangleq \E{|\xi|^q}^{1/q} \le \sigma_q.
\end{equation}
Then provided $\eta = 0$ or 
\[ \eta \cdot n \gtrsim \log(\min(n,d)), \]
we can take
$\alpha = \Theta\left(\sqrt{\frac{\eta \log(d)\log(1/\delta)}{n}}\right)$ and $\overline{\eta} = \eta + \Theta(\eta \sqrt{\beta})$ such that the output $w$
of \textsc{SCRAM-GM}  satisfies
\begin{align*} 
\beta^{1 + 1/q} \norm{w^* - w}_{\Sig^*}
    &\lesssim \eta^{1 - 1/q} \sigma_q + \epsilon + \eta^{1/8} R^{1/2} (\epsilon +  \eta^{1/2 - 1/q} \sigma_q)^{1/2}\sqrt[8]{\frac{\log(\min(n,d))\log(1/\delta)}{n}} \\
    &\quad+  \eta^{1/4} R\sqrt[4]{\frac{\log(\min(n,d))\log(1/\delta)}{n}} 
     + \min \left\{\sigma\sqrt{\frac{d\log(1/\delta)}{n}}, (R\sigma)^{1/2}\sqrt[4]{\frac{\log(1/\delta)}{n}}\right\} \\
     &\quad+ R\sqrt{\frac{\log^3(n)\log(1/\delta)}{n}}
\end{align*}
with probability at least $1 - \delta$. 
\end{theorem}
\begin{proof}
    Combining Lemma~\ref{lem:nonsos-lowprob} and Lemma~\ref{lem:generalization} gives
    \[ \|w_i - w^*\|_{\Sig^*} \le r \triangleq C(r_0 + R\sqrt{\frac{\log^3(n)\log(1/\delta)}{n}}) \]
    with probability at least $0.98$,
    where $r_0$ is the right hand side of \eqref{eqn:lowprob-guarantee} plus $\epsilon$ (to replace $u^*$ by $w^*$) and $C$ is an absolute constant. 
    Hence by applying Lemma~\ref{lem:nonsos-lowprob}, independence, and Hoeffding's inequality we see that
    \[ \#\{i : \beta^{1 + 1/q} \|w_i - w^*\|_{\Sig^*} \le r\} \ge 0.97 k \]
    with probability at least $1 - \delta$ where $r$ is the right hand side of \eqref{eqn:lowprob-guarantee}, including the constant factor. We condition on this event in what follows.

    Note that by Bernstein's inequality, for any particular $w$
    \[ \left|\|w - w^*\|_{\Sig_{C_2}}^2 - \|w - w^*\|_{\Sig^*}^2\right| \lesssim \|w - w^*\|_{\Sig^*}R\sqrt{\frac{\log(1/\delta)}{n}} + \frac{R^2 \log(1/\delta)}{n} \]
    with probability $1- \delta$,
    where we used that $\E{\langle w - w^*, X \rangle^4} \le 4R^2 \E{\langle w- w^*, X \rangle^2}$ to upper bound the variance term. Note that $R\sqrt{\log(1/\delta)/n} = O(r)$.
    Hence union bounding over $w_1,\ldots,w_k$ we find with probability at least $1 - \delta$
    \[ \#\{i : \beta^{1 + 1/q}\|w_i - w^*\|_{\Sigma_{C_2}} = O(r)\} \ge 0.97k \]
    which by Lemma~\ref{lem:geometric-median} gives the result in the norm $\|\cdot\|_{\Sigma_{C_2}}$ and combined with Lemma~\ref{lem:generalization} gives the desired result in $\|\cdot\|_{\Sig^*}$.
\end{proof}

\section{Optimal Breakdown Point via Sum of Squares Programming}\label{sec:sos}
\label{sec:sos_strikes_back}

As previously explained, the breakdown point for the estimator \textsc{SCRAM} is at $\eta = 1/3$, because when $\eta \ge 1/3$ the landscape of its objective exhibits bad local minima. Remarkably, if we instead use the natural degree-4 Sum of Squares relaxation of our original combinatorial optimization problem, it maintains the same statistical guarantees as \textsc{SCRAM} (including the optimal $\eta$ dependence) while also managing to escape the bad local minima of the nonconvex problem and achieve optimal breakdown point $\eta = 1/2$. 

As we will see in the analysis, the fundamental fact we use which is true for the SoS relaxation (Program~\ref{program:sos}) and not true for an arbitrary stationary point or local minima of Program~\ref{program:biconvex} is that the SoS relaxation always computes a lower bound on the original (unrelaxed) problem \eqref{eq:opt}, allowing us to compare objective values with the ground truth pair $(a^*,u^*)$.

In Section~\ref{subsec:sos_prelims}, we provide some preliminaries on sum-of-squares, and in Section~\ref{subsec:sos} we provide the main guarantees for our sum-of-squares-based algorithm.

\subsection{Preliminaries: Sum of Squares and Semidefinite Programming} 
\label{subsec:sos_prelims} 
\textbf{Pseudoexpectations:} The sum of squares SDP hierarchy is a series of increasingly tight SDP relaxations for solving polynomial systems $\calP \defeq \{p_i(x) \geq 0\}_{i=1}^N$.  Although it is in general NP-hard to solve polynomial systems, the level-$\ell$ SoS SDP attempts to approximately solve $\calP$ with increasing accuracy as $\ell$ increases by adding more constraints to the SDP.  This improvement in approximation naturally comes at the expense of increasing runtime and space.  

In particular, one can think of the SoS SDP as outputting a "distribution" $\mu$ over solutions to $\calP$.  However, there are two important caveats.  Firstly, one can only access the degree-$\ell$ moments of the "distribution" and secondly there may be no true distribution with the corresponding degree $\ell$ moments.  Thus we refer to $\mu$ as a pseudodistribution.

\begin{definition}
A degree $\ell$ pseudoexpectation $\pE : \R[x]_{\leq \ell} \to \R$ {\it satisfying $\calP$} is a linear functional over polynomials of degree at most $\ell$ satisfying 
\begin{enumerate}
\item(Normalization) $\psE*{1} = 1$, 
\item(Constraints of $\calP$) $\psE*{p(x) a^2(x)} \geq 0$ for all $p \in \calP$ and polynomials $a$ with $\deg(a^2 \cdot p) \le \ell$,
\item(Non-negativity on square polynomials)$\psE*{q(x)^2} \ge 0$ whenever $\deg(q^2) \le \ell$.
\end{enumerate}
\end{definition}

For any fixed $\ell \in \mathbb{N}$, given a polynomial system,one can efficiently compute a degree $\ell$ pseudo-expectation in polynomial time. 
\begin{fact} (\cite{Nesterov00}, \cite{Parrilo00}, \cite{Lasserre01}, \cite{Shor87}). For any $n$, $\ell \in \Z^+$, let $\pE_{\zeta}$ be degree $\ell$ pseudoexpectation satisfying a polynomial system $\calP$. Then the following set has a $n^{O(\ell)}$-time weak
separation oracle (in the sense of \cite{GLS1981}):
 \begin{align*}
 & \{\pE_\zeta(1, x_1, x_2, . . . , x_n)^{\otimes \ell}| \text{ degree } \ell \text{ pseudoexpectations } \pE_{\zeta} \text{ satisfying }\calP \}
\end{align*}

Using this separation oracle, the ellipsoid algorithm finds a degree $\ell$ pseudoexpectation in time $n^{O(\ell)}$, which we call the degree $\ell$ sum-of-squares algorithm. 
\end{fact}

To reason about the properties of pseudo-expectations, we turn to the dual object of sum-of-squares proofs.

\paragraph{Sum-of-Squares Proofs}
For any nonnegative polynomial $p(x): \R^d \rightarrow \R$, one could hope to prove its nonnegativity by writing $p(x)$ as a sum of squares of polynomials $p(x) = \sum_{i=1}^m q_i(x)^2$ for a collection of polynomials $\{q_i(x)\}_{i=1}^m$.  Unfortunately, there exist nonnegative polynomials with no sum of squares proof even for $d = 2$.  Nevertheless, there is a generous class of nonnegative polynomials that admit a proof of positivity via a proof in the form of a sum of squares.  The key insight of the sum of squares algorithm, is that these sum of squares proofs of nonnegativity can be found efficiently provided the degree of the proof is not too large. 

\begin{definition} (Sum of Squares Proof)
Let $\mathcal{A}$ be a collection of polynomial inequalities $\{p_i(x) \geq 0\}_{i=1}^m$.  A sum of squares proof that a polynomial $q(x) \geq 0$ for any $x$ satisfying the inequalities in $\mathcal{A}$ takes on the form 

\[
    \left(1+ \sum_{k \in [m']} b_k^2(x)\right) \cdot q(x) = \sum_{j\in [m'']} s_j^2(x) + \sum_{i \in [m]} a_i^2(x) \cdot p_i(x) 
\]
where $\{s_j(x)\}_{j \in [m'']},\{a_i(x)\}_{i \in [m]}, \{b_k(x)\}_{i \in [m']}$ are real polynomials.  If such an expression were true, then $q(x) \geq 0$ for any $x$ satisfying $\mathcal{A}$.  We call these identities sum of squares proofs, and the degree of the proof is the largest degree of the involved polynomials $\max \{\deg(s_j^2), \deg(a_i^2 p_i)\}_{i,j}$.  Naturally, one can capture polynomial equalities in $\mathcal{A}$ with pairs of inequalities.   We denote a degree $\ell$ sum of squares proof of the positivity of $q(x)$ from $\calA$ as $\calA \sststile{\ell}{x} \{q(x) \geq 0\}$ where the superscript over the turnstile denote the formal variable over which the proof is conducted.  This is often unambiguous and we drop the superscript unless otherwise specified.    
\end{definition}

Sum of squares proofs can also be strung together and composed according to the following convenient rules.  
\begin{fact}
For polynomial systems $\calA$ and $\calB$, if $\calA \sststile{d}{x} \{p(x) \geq 0\}$ and $\calB \sststile{d'}{x} \{q(x)\geq 0\}$ then $\calA \cup \calB \sststile{\max(d,d')}{x}\{p(x) + q(x) \geq 0\}$.  Also $\calA \cup \calB \sststile{dd'}{x} \{p(x)q(x) \geq 0\}$ 
\end{fact}

Sum of squares proofs yield a framework to reason about the properties of pseudo-expectations, that are returned by the SoS SDP hierarchy.  
%
\begin{fact} (Informal Soundness)
If $\calA \sststile{r}{x} \{q(x) \ge 0\}$ and $\pE$ is a degree-$\ell$ pseudoexpectation operator for the polynomial system defined by $\calA$, then $\pE[q(x)]  \ge 0$.
\end{fact}

The following fact about pseudoexpectations will be particularly useful:
\begin{lemma}\label{lem:passnorm}
	For any psd matrix $\Sig$ which induces a norm $\norm{\cdot}_{\Sig}$, any vector $w^*$, and any degree-2 pseudoexpectation $\psE{\cdot}$ over $\R^d$-valued variable $w$, we have that \begin{equation}
		\norm{\psE{w} - w^*}_{\Sig}^2 \le \psE{\norm{w - w^*}_{\Sig}^2}. \label{eq:soscs}
	\end{equation}
\end{lemma}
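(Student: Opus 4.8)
The plan is to prove the lemma by establishing the pseudoexpectation analogue of the bias–variance decomposition and then invoking SoS-positivity. Write $\td{w} \triangleq \psE{w}$, a fixed vector in $\R^d$. The first step is the purely algebraic identity \[ \psE{\norm{w - w^*}_{\Sig}^2} = \norm{\td{w} - w^*}_{\Sig}^2 + \psE{\norm{w - \td{w}}_{\Sig}^2}, \] obtained by expanding $\norm{w - w^*}_{\Sig}^2 = \iprod{w,\Sig w} - 2\iprod{w,\Sig w^*} + \iprod{w^*,\Sig w^*}$, using linearity of $\psE{\cdot}$ together with $\psE{\iprod{w, \Sig w^*}} = \iprod{\td{w}, \Sig w^*}$ (here $\Sig w^*$ is a constant vector), and then doing the same expansion for $\norm{w - \td{w}}_{\Sig}^2$ and $\norm{\td{w} - w^*}_{\Sig}^2$ with $\td{w}$ treated as a constant; collecting terms gives the claimed identity.

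The second step is to argue the variance term is nonnegative, i.e. $\psE{\norm{w - \td{w}}_{\Sig}^2} \ge 0$. Since $\Sig \succeq 0$, write $\Sig = \sum_i v_i v_i^{\top}$ (e.g. via its spectral decomposition), so that \[ \norm{w - \td{w}}_{\Sig}^2 = \iprod{w - \td{w}, \Sig(w - \td{w})} = \sum_i \iprod{v_i, w - \td{w}}^2. \] Each summand is the square of a polynomial of degree at most $1$ in the indeterminate $w$, so by the defining positivity property of degree-$2$ pseudoexpectations we have $\psE{\iprod{v_i, w - \td{w}}^2} \ge 0$ for every $i$, and hence $\psE{\norm{w - \td{w}}_{\Sig}^2} \ge 0$. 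Plugging this into the identity from the first step yields $\psE{\norm{w - w^*}_{\Sig}^2} \ge \norm{\td{w} - w^*}_{\Sig}^2$, which is exactly \eqref{eq:soscs}.

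There is no substantive obstacle here — the lemma is a soft consequence of the definitions, essentially ``pseudo-Jensen'' for the convex quadratic $w \mapsto \norm{w - w^*}_{\Sig}^2$. The only things to be careful about are bookkeeping: treating $\td{w} = \psE{w}$ as a genuine constant when it appears inside a pseudoexpectation, and noting that $\norm{w - \td{w}}_{\Sig}^2$ has degree exactly $2$ (each $\iprod{v_i, w - \td{w}}$ is affine in $w$) so that it lies within the scope of a degree-$2$ pseudoexpectation. No assumption on $\Sig$ beyond positive semidefiniteness, and no constraints on the pseudoexpectation, are needed.
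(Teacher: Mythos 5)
Your proof is correct, and it takes a genuinely different route from the paper's. The paper proves the lemma by writing $\norm{\psE{w} - w^*}_{\Sig}^2$ via the dual characterization of the norm as a supremum over directions $v \in \S^{d-1}$, then for each fixed direction applies the scalar pseudo-expectation Cauchy--Schwarz inequality $(\psE{p})^2 \le \psE{p^2}$ to the linear polynomial $p(w) = \iprod{\Sig^{1/2} v, w - w^*}$, and finally uses the (SoS-provable) Cauchy--Schwarz bound $\iprod{\Sig^{1/2} v, w - w^*}^2 \le \norm{w - w^*}_{\Sig}^2$. (The paper's displayed proof writes $\Sig v$ where $\Sig^{1/2} v$ is evidently intended; with that correction the chain goes through.) You instead do a direct ``bias--variance'' expansion of the pseudo-expectation around $\td{w} = \psE{w}$ — noting the cross term vanishes by linearity since $\Sig(\td{w} - w^*)$ is a constant vector — and then show the ``variance'' term is nonnegative by factoring $\Sig = \sum_i v_i v_i^{\top}$ and appealing to positivity of the pseudoexpectation on squares of affine polynomials. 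Your version is a bit more elementary: it avoids the dual-norm step and the imported SoS Cauchy--Schwarz lemma, using only linearity and nonnegativity on squares, which are the defining properties of a degree-$2$ pseudoexpectation. Both proofs exploit the same underlying fact (the degree-$2$ pseudomoment matrix dominates the rank-one matrix built from the pseudomeans), so neither is substantially stronger, but your decomposition sidesteps the small typo in the paper's argument and may be easier to verify line by line.
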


\begin{proof}
	By the dual definition of $L_2$ norm, the left-hand side of \eqref{eq:soscs} can be written as \begin{equation}
	    \sup_{v\in\S^{d-1}}\iprod{\Sig v,\psE{w} - w^*}^2.
	\end{equation} For any $v\in\S^{d-1}$, \begin{equation}
		\iprod*{\Sig v,\psE{w} - w^*}^2 = \left(\psE{\iprod{\Sig v,w - w^*}}\right)^2 \le \psE{\iprod{\Sig v,w-w^*}^2} \le \psE{\norm{w - w^*}^2_{\Sig}},
	\end{equation} where the first inequality follows by the pseudoexpectation version of SoS Cauchy-Schwarz (see e.g. Lemma A.5 of \cite{barak2014rounding}). Therefore, taking the maximum over all $v \in S^{d - 1}$ proves the inequality.
\end{proof}

\paragraph{Useful SoS Inequalities} 
Here we present some useful inequalities captured by the sum of squares proof system.

\begin{fact} (Cauchy Schwarz)
Let $x_1,..,x_n,y_1,...,y_n$ be indeterminates, than 

$$\sststile{4}{} \big(\sum_{i \leq n}x_iy_i\big)^2 \leq \big(\sum_{i \leq n}x_i^2\big)\big(\sum_{i \leq n}y_i^2\big). $$
\end{fact}

\begin{fact} (Triangle Inequality) 
Let $x,y$ be $n$-length vectors of indeterminates, then 
$$\sststile{2}{} \norm{x + y}^2 \leq 2\norm{x}^2 + 2\norm{y}^2. $$
\end{fact}

\begin{fact}\label{fact:pseudocauchy}(Pseudoexpectation Cauchy Schwarz). 
Let $f(x)$ and $g(x)$ be degree at most $\ell \leq \frac{D}{2}$ polynomial in indeterminate $x$, then $$\pE[f(x)g(x)]^2 \leq \pE[f(x)^2]\pE[g(x)^2].$$  
\end{fact}

\begin{fact} (Spectral Bounds)
Let $A \in \R^{d \times d}$ be a positive semidefinite matrix with $\lambda_{max}$ and $\lambda_{min}$ being the  largest and smallest eigenvalues of $A$ respectively. Let $\pE$ be a pseudoexpectation with degree greater than or equal to $2$ over indeterminates $v = (v_1,...,v_d)$.  Then we have 
$$\sststile{2}{} \iprod{A,vv^T} \leq \lambda_{max} \norm{v}^2 $$
and 
$$\sststile{2}{} \iprod{A,vv^T} \geq \lambda_{min} \norm{v}^2. $$
\end{fact}


\subsection{SoS Algorithm and Analysis}
\label{subsec:sos}

In this section we state the main guarantee for our algorithm when $\eta$ is large, as well as the result of combining this guarantee with the ones in Section~\ref{sec:nosos} to obtain a guarantee for the full range of possible $\eta$.

As in Section~\ref{sec:nosos}, the constants in our result must deteriorate slightly as we approach the (optimal) breakdown point $\eta = 1/2$, so we introduce a parameter $\rho$ which tracks the distance to $1/2$; as long as we are strictly bounded away from this point, $\rho$ is upper bounded by a constant and can be ignored.

We first state a result for \emph{bounded} noise.

\begin{theorem}\label{thm:sos}
	Suppose that the contamination rate is $\eta \in (0.3, 1/2)$, define $0 < \rho < 1$ by $\eta = \frac{1}{2 + 2\rho^2}$, and suppose
	\begin{equation}
		n \gtrsim \log(\min(n,d)/\delta).
		\label{eq:nbound_sos}
	\end{equation}
	If the noise $\brc{\xi_t}$ satisfies $\abs{\xi_t} \le \sigma$ for all $t$ with probability 1, then there is a $\poly(n,d)$ algorithm which takes as input $(x_1,y_1),...,(x_n,y_n)$ and, with probability at least $1 - \delta$, outputs a vector $\td{w}$ which satisfies
    \begin{equation}
		\rho^2\norm{\td{w} - w^*}_{\Sig_n} \lesssim \sigma + \epsilon + \rho R\sqrt[4]{\frac{\log(\min(n,d)/\delta)}{n}} + \min\brc*{\sigma \sqrt{\frac{d + \log(1/\delta)}{n}},(R\sigma)^{1/2}\rho\cdot\sqrt[4]{\frac{\log(1/\delta)}{n}}}
	\end{equation} for oblivious covariates and \begin{equation}
		\rho^2\norm{\td{w} - w^*}_{\Sig_n} \lesssim \sigma + \epsilon + \rho R\sqrt[4]{\frac{\log(\min(n,d)/\delta)}{n}} + (R\sigma)^{1/2}\rho\cdot\sqrt[4]{\frac{\log(1/\delta)}{n}}
	\end{equation} for the more general case of adaptive covariates.
\end{theorem}

By a simple truncation argument, we can also obtain versions of this result for weakly $L_q$ and subgaussian noise. For brevity, we only state the latter:

\begin{theorem}\label{thm:sos_subgaussian}
	Let $\eta,\rho,n$ satisfy the hypotheses of Theorem~\ref{thm:sos}.
	If the noise $\brc{\xi_t}$ is $\sigma^2$-subgaussian, then there is a $\poly(n,d)$ algorithm which takes as input $(x_1,y_1),...,(x_n,y_n)$ and outputs a vector $\td{w}$ which satisfies
    \begin{equation}
        \rho^2\norm{\td{w} - w^*}_{\Sig_n} \lesssim \sigma\sqrt{\log(1/\rho)} + \epsilon + \rho R\sqrt[4]{\frac{\log(\min(n,d)/\delta)}{n}} + \sigma \sqrt{\frac{d + \log(1/\delta)}{n}}
	\end{equation} for oblivious covariates and \begin{equation}
		\rho^2\norm{\td{w} - w^*}_{\Sig_n} \lesssim \sigma\sqrt{\log(1/\rho)} + \epsilon + \rho R\sqrt[4]{\frac{\log(\min(n,d)/\delta)}{n}} + (R\sigma)^{1/2}\rho\cdot\sqrt[4]{\frac{\log(1/\delta)}{n}}
	\end{equation} for the more general case of adaptive covariates.
\end{theorem}

\begin{proof}
    For any $\delta'$, we know that each of the $\xi_t$ satisfy $\abs{\xi_t} \lesssim \sigma\sqrt{\log(1/\delta')}$ individually with probability $1 - \delta'$. If we treat indices $t$ for which this does not hold as corruptions and take  $\delta'$ to be $1/4 - \eta/2$, then we can take the the corruption level in Theorem~\ref{thm:sos} to be $1/4 + \eta/2$ and the bound on the noise to be $\sigma\sqrt{\log\left(\frac{4}{1 - 2\eta}\right)}$. Note that for $\eta\in(1/4,1/4)$, the $\rho$ corresponding to the new corruption level $1/4 + \eta/2$ is within a constant factor of $\rho$. Also note that $\sqrt{\log(1/\delta')} = O(\log(1/\rho))$. The result then follows by Theorem~\ref{thm:sos}.
\end{proof}

We now state the full guarantee obtained by combining the above with the results of Section~\ref{sec:nosos}. For brevity, we will only state the subgaussian case:

\begin{theorem}\label{thm:ultimate_regression}
    Let $0 \le \eta < 1/2$, and define $\rho > 0$ by $\eta = \frac{1}{2 + 2\rho^2}$, and suppose $n$ satisfies $n\gtrsim\log(\min(n,d)/\delta)$. If the noise $\brc{\xi_t}$ is $\sigma^2$-subgaussian, then there is a $\poly(n,d)$ algorithm which takes as input $(x_1,y_1),\ldots,(x_n,y_n)$ and, with probability at least $1 - \delta$, outputs a vector $w$ which satisfies \begin{align}
        \min(1,\rho^2)\norm{u^* - w}_{\Sig_n} &\lesssim c_{\delta,\eta,n} \eta \sigma + \eta^{1/2}\rho^2\epsilon + \eta^{1/8} R^{1/2} (\sqrt{c_{\delta,\eta,n} \eta}\sigma  + \epsilon)^{1/2}\sqrt[8]{\frac{\log(\min(n,d)/\delta)}{n}} \\
        &\quad+  \eta^{1/4} R\sqrt[4]{\frac{\log(\min(n,d)/\delta)}{n}} 
         + \min \left\{\sigma\sqrt{\frac{d + \log(2/\delta)}{n}}, (R\sigma)^{1/2}\sqrt[4]{\frac{\log(2/\delta)}{n}}\right\}
    \end{align} for oblivious covariates, where $c_{\delta,\eta,n}$ is defined in \eqref{eq:cdef}. In the more general case of adaptive covariates, $w$ satisfies
    \begin{align} 
        \min(1,\rho^2)\cdot \norm{u^* - w}_{\Sig_n}
        &\lesssim c_{\delta,\eta,n}\eta \sigma + \eta^{1/2}\rho^2\epsilon + \eta^{1/8} R^{1/2} (\sqrt{c_{\delta,\eta,n} \eta}\sigma  + \epsilon)^{1/2}\sqrt[8]{\frac{\log(\min(n,d)/\delta)}{n}} \\
        &\quad+  \eta^{1/4} R\sqrt[4]{\frac{\log(\min(n,d)/\delta)}{n}} 
         +  (R\sigma)^{1/2}\sqrt[4]{\frac{\log(2/\delta)}{n}}, \label{eq:ultimate_subgauss_adaptive}
    \end{align} i.e. the same bound except the last term was changed. Recall that $u^*$ here is the best norm-$R$ linear predictor of the uncorrupted and unnoised data, that is, \begin{equation}
	u^* \triangleq \arg\min_{u : \|u\| \le R} \frac{1}{n}\sum_t (y^*_t - \iprod{u,x_t})^2. \label{eq:ustardef2}
\end{equation}
\end{theorem}

\begin{proof}
    If $0 \le \eta < 0.3$, apply Theorem~\ref{thm:nonsos-subgaussian}, noting that the parameter $\beta$ in that theorem is an absolute constant for this range of $\eta$. Otherwise, apply Theorem~\ref{thm:sos_subgaussian}, noting that $\eta = \Theta(1)$ in this case, and that $\norm{u^* - w}_{\Sig_n}$ and $\norm{w^* - w}_{\Sig_n}$ differ by $O(\epsilon)$.
\end{proof}

\subsubsection{Sum-of-Squares Program and Feasibility}

\begin{algorithm2e}
\DontPrintSemicolon
\caption{\textsc{SoSRegression}($D$)}
\label{alg:sos}
	\KwIn{Dataset $D = \brc{(x_1,y_1),\ldots,(x_n,y_n)}$}
	\KwOut{Vector $\td{w}$ for which $\norm{\td{w} - w^*}_{\Sig_n}$ is small (see Theorem~\ref{thm:sos})}
	    Let $\psE{\cdot}$ be the pseudoexpectation optimizing Program~\ref{eq:opt}.\;
	    \Return{$\psE{w}$}.\;
\end{algorithm2e}

\noindent We will condition on the events of Lemma~\ref{lem:regularity}. Now consider the following set of polynomial constraints.

\begin{program}\label{program:sos}
Let $\alpha>0$ be a parameters to be tuned later. The program variables are $\{a_t\}_{t\in[n]}$ and $w$, and the constraints are
\begin{enumerate}
	\item \label{constraint:normbound}(Norm bound) $\sum^d_{i=1} w_i^2 \le R^2$.
	\item \label{constraint:boolean}(Booleanity) $a_t^2 = a_t$ for all $t\in[n]$.
	\item \label{constraint:fraction} (Large fraction of inliers) $\frac{1}{n}\sum^n_{t=1} a_t \ge 1 - \eta - \alpha$.
	\item \label{constraint:sub}(Outliers sub-sample the empirical covariance\footnote{One can use matrix inequalities in SoS: see e.g. Section 7.1 in \cite{hopkins2018mixture}.}) \begin{equation}
		\frac{1}{n}\sum^n_{t=1} (1 - a_t)x_t x_t^{\top} \preceq \eta\Sig_n + \alpha\cdot \Id.
	\end{equation}
\end{enumerate}
The program objective is to minimize 
\[ \min \psE*{\sum_{t = 1}^n a_t (y_t - \iprod{w, x_t})^2} \]
over degree-4 SoS-pseudoexpectations satisfying the above constraints.
\end{program}

We first show that conditioned on the events of Lemma~\ref{lem:regularity} holding, there always exists a feasible solution to the above polynomial system.

\begin{lemma}[Satisfiability]\label{lem:sat}
	For any $\delta > 0$, if $n$ satisfies the bound in \eqref{eq:nbound_sos}, then for any sequence of $x_1,...,x_n$ chosen during the process in Definition~\ref{defn:huberreg}, we have that with probability at least $1 - \delta$ over the randomness of the $\Ber(\eta)$ coins generating $a^*_1,...,a^*_n$ and over the randomness of $\xi_1,...,\xi_n$, the choice of $a_t = a^*_t$ and
	\begin{equation}\label{eqn:v-as-optimum}
	v = \arg\min_{\|v\| \le R} \sum_{t = 1}^n a_t (y_t - \langle v, x_t \rangle)^2 
	\end{equation}
	is a feasible solution to Program~\ref{program:sos}.
	As a consequence, for any $\|v\| \le R$ the objective value of Program~\ref{program:sos} is at most
	\[ \frac{1}{n} \sum_{t = 1}^n a^*_t(y_t - \langle v, x_t \rangle)^2. \]
\end{lemma}

\begin{proof}
	Clearly Constraints~\ref{constraint:normbound} and \ref{constraint:boolean} are satisfied. Part~\ref{item:manygood} of Lemma~\ref{lem:regularity} implies that Constraint~\ref{constraint:fraction} is satisfied with probability $1 - \delta/3$. Part~\ref{item:sigconc} of Lemma~\ref{lem:regularity} implies that Constraint~\ref{constraint:sub} is satisfied with probability at least $1 - \delta/3$. Finally, the first-order stationarity condition is satisfied because $v$ is the optimizer of \eqref{eqn:v-as-optimum}. 
	
	To get the consequence, we use that such an upper bound holds with $v$ the minimizer of \eqref{eqn:v-as-optimum} by feasibility of $(a^*,v)$, and then use the fact that it is the minimizer to extend to conclusion to all (not necessarily first-order stationary) $v$. 
\end{proof}

\subsubsection{Bounding Clean Square Loss}

We now proceed to the sum-of-squares proof that the constraints of Program~\ref{program:sos} imply a bound on the clean square loss achieved by $w$, under the degree-4 SoS proof system.

Let $v^*$ be defined as
\begin{equation}\label{eqn:v*}
v^* \triangleq \arg\min_{v : \|v\| \le R} \frac{1}{n} \sum_{t = 1}^n a^*_t (y^*_t - \langle v, x_t \rangle)^2.
\end{equation}
The following Lemma is needed only for the misspecified setting: if $\epsilon = 0$ we will trivially have $v^* = w^*$. In the misspecified setting $v^*$ will naturally appear in the analysis, instead of $w^*$, because it gives the optimal bounded norm linear function approximating the true regression function $x_t \mapsto \langle w^*, x_t \rangle + \epsilon_t$. We define $\Sig'_n \triangleq \frac{1}{n}\sum a^*_t\cdot x_tx_t^{\top}$.
\begin{lemma}\label{lem:misspecification}
For $v^*$ as defined above, we have $\|v^* - w^*\|_{\Sig'_n}^2 = O(\epsilon^2)$ and also, if we define 
\[ \epsilon'_t \triangleq y^*_t - \langle v^*, x_t \rangle, \]
then for all $w$ with $\|w\| \le R$ we have:
\begin{equation}\label{eqn:first-order}
\sum_{t = 1}^n a^*_t \epsilon'_t \langle w - v^*, x_t \rangle \le 0 
\end{equation}
\end{lemma}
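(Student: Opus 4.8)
The plan is to observe that both assertions are immediate consequences of the first-order optimality (variational inequality) characterization of the constrained minimizer $v^*$. Set $F(v) \triangleq \frac{1}{n}\sum_{t=1}^n a'_t (y^*_t - \langle v, x_t\rangle)^2$, which is a convex quadratic in $v$, minimized over the convex set $\{v : \norm{v} \le R\}$. Its gradient is $\nabla F(v) = -\frac{2}{n}\sum_{t=1}^n a'_t (y^*_t - \langle v, x_t\rangle)\, x_t$. Since $v^*$ minimizes a convex function over a convex set, the standard variational inequality $\langle \nabla F(v^*), w - v^*\rangle \ge 0$ holds for every feasible $w$ (this holds whether $v^*$ is interior or on the boundary of the ball, and follows from differentiating $t \mapsto F(v^* + t(w-v^*))$ at $t = 0^+$). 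Plugging in the gradient, recalling the definition $\epsilon'_t = y^*_t - \langle v^*, x_t\rangle$, and clearing the harmless factor $2/n$ gives exactly $\sum_{t=1}^n a'_t \epsilon'_t \langle w - v^*, x_t\rangle \le 0$, i.e.\ \eqref{eqn:first-order}. This disposes of the second assertion.

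For the first assertion I would apply \eqref{eqn:first-order} with the specific feasible choice $w = w^*$ (feasible since $\norm{w^*} \le R$). Decompose $\epsilon'_t = y^*_t - \langle v^*, x_t\rangle = \langle w^* - v^*, x_t\rangle + \epsilon_t$ and substitute into \eqref{eqn:first-order} to get $\sum_{t=1}^n a'_t \langle w^* - v^*, x_t\rangle^2 \le -\sum_{t=1}^n a'_t \epsilon_t \langle w^* - v^*, x_t\rangle$. Bounding the right-hand side using $\abs{\epsilon_t} \le \epsilon$, Cauchy--Schwarz, and $\sum_t a'_t \le n$ yields $n\norm{w^* - v^*}^2_{\Sig'_n} \le \epsilon\sqrt{n}\cdot\sqrt{n\,\norm{w^* - v^*}^2_{\Sig'_n}}$. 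Solving this quadratic inequality in $\norm{w^* - v^*}_{\Sig'_n}$ gives $\norm{w^* - v^*}^2_{\Sig'_n} \le \epsilon^2 = O(\epsilon^2)$, as claimed.

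I do not expect any genuine obstacle: the only point needing mild care is using the variational-inequality form of first-order optimality appropriate to a constrained program, and noting that the $a'_t$ are simply fixed $\{0,1\}$ weights throughout this argument, so it is entirely deterministic given the realized corruption pattern and requires no probabilistic input from Lemma~\ref{lem:regularity}.
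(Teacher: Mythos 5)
Your proposal is correct, and the first half (the variational-inequality derivation of \eqref{eqn:first-order}) is exactly the paper's argument. For the second half you diverge slightly: the paper bounds $\|v^* - w^*\|_{\Sig'_n}^2$ by writing $\langle v^* - w^*, x_t\rangle = \epsilon_t - \epsilon'_t$, applying $(a-b)^2 \le 2a^2 + 2b^2$, and then invoking the \emph{zeroth}-order optimality $F(v^*) \le F(w^*)$ to get $\frac{1}{n}\sum a'_t(\epsilon'_t)^2 \le \epsilon^2$, which yields an $O(\epsilon^2)$ bound. You instead plug $w = w^*$ into the \emph{first}-order condition \eqref{eqn:first-order} you just proved, decompose $\epsilon'_t = \langle w^* - v^*, x_t\rangle + \epsilon_t$, and close with Cauchy--Schwarz. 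Both routes use the optimality of $v^*$ and are equally short; your version is arguably cleaner since it reuses the exact inequality the lemma is already asserting, avoids the factor-of-two loss from $(a+b)^2 \le 2a^2+2b^2$, and gives the slightly sharper constant $\|v^* - w^*\|_{\Sig'_n}^2 \le \epsilon^2$ rather than the paper's looser (and in fact slightly miscounted) constant, though both are of course $O(\epsilon^2)$ as the lemma requires. Your closing observation that the whole argument is deterministic given the realized $a'_t$'s and needs no input from Lemma~\ref{lem:regularity} is also correct and worth keeping in mind.
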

\begin{proof}
Since $\nabla_v (y^*_t - \langle v^*, x_t \rangle)^2 = -2(y^*_t - \langle v^*, x_t \rangle)x_t$, we see that the first order optimality condition for \eqref{eqn:v*} implies for any $w$ with $\|w\| \le R$ we have
\[ \frac{-2}{n} \sum_{t = 1}^n a^*_t \epsilon'_t \langle w - v^*, x_t \rangle  \ge 0 \]
which gives \eqref{eqn:first-order}. 

It remains to upper bound $\|v^* - w^*\|_{\Sigma'}^2$. By writing it out, we see
\[ \|v^* - w^*\|_{\Sigma^*_t}^2 = \frac{1}{n} \sum_{t = 1}^n a^*_t \langle v^* - w^*, x_t \rangle^2 = \frac{1}{n} \sum_{t = 1}^n a^*_t (y^*_t - \epsilon_t - \langle v^*, x_t \rangle)^2 \le \frac{2}{n} \sum_{t = 1}^n a^*_t (\epsilon_t^2 + (\epsilon'_t)^2) \le 2 \epsilon^2 \]
where in the second-to-last step we used $(a + b)^2 \le 2a^2 + 2b^2$ and in the last step we used that $v^*$ minimizes \eqref{eqn:v*}.
\end{proof}

We can now prove Theorem~\ref{thm:sos}.

\begin{proof}[Proof of Theorem~\ref{thm:sos}]
    Let $\psE{\cdot}$ be the pseudo-expectation optimizing the objective in Program~\ref{program:sos}, and define $\td{w}\triangleq \psE{w}$.
	By part~\ref{item:sigconc} of Lemma~\ref{lem:regularity} and Constraint~\ref{constraint:normbound}, we have that
	\begin{equation}
	    (1 - \eta)\norm{\td{w} - w^*}^2_{\Sig_n} \le \norm{\td{w} - w^*}^2_{\Sig'_n} + \alpha\norm{\td{w} - w^*}^2 \le \psE{\norm{w - v^*}^2_{\Sig'_n}} + \alpha R^2 + 2\epsilon^2,
	\end{equation}
	where $\Sig'_n \triangleq \frac{1}{n}\sum a^*_t\cdot x_tx_t^{\top}$ and $\norm{\cdot}_{\Sig'_n}$ is the induced norm, and in the last step we used the first part of Lemma~\ref{lem:misspecification}, Lemma~\ref{lem:passnorm}, and Constraint~\ref{constraint:normbound}.
	
	We can further bound \begin{align}
		\MoveEqLeft \psE{\norm{w - v^*}^2_{\Sig'_n}} \\
		&= \frac{1}{n}\sum^n_{t=1} a^*_t \psE{\iprod{w - v^*,x_t}^2} \\
		&= \frac{1}{n}\sum^n_{t=1} a^*_t \psE*{(y_t - \iprod{w,x_t}) - (y_t - \iprod{v^*,x_t})^2}\\
		&= \frac{1}{n}\sum^n_{t=1} a^*_t\brk*{\psE*{(y_t - \iprod{w,x_t})^2} - (y_t - \iprod{v^*,x_t})^2} + \frac{2}{n}\sum^n_{t=1}a^*_t (y_t - \iprod{v^*,x_t})\cdot \iprod*{\psE{w} - v^*,x_t} \\
		&= \underbrace{\frac{1}{n}\sum^n_{t=1} a^*_t\brk*{\psE*{(y_t - \iprod{w,x_t})^2} - (y_t - \iprod{v^*,x_t})^2}}_{\circled{1}} + \underbrace{\iprod*{\psE{w} - v^*,\frac{2}{n}\sum^n_{t=1} a^*_t (\xi_t + \epsilon'_t) x_t}}_{\circled{2}}
	\end{align}
	where in the fourth step we used the identity $(a - b)^2 = a^2 - b^2 - 2b(a-b)$ and $\epsilon'_t := y_t - \xi_t - \langle v^*, x_t \rangle$ as defined in Lemma~\ref{lem:misspecification}.



    Because of Lemma~\ref{lem:misspecification} and $\|\psE{w}\|^2 \le R^2$ from Constraint~\ref{constraint:normbound} we know that
    \[ \langle \psE{w} - v^*, \frac{2}{n} \sum_{t = 1}^n a^*_t \epsilon'_t x_t \rangle \le 0 \]
    so we can drop this term from \circled{2}.
	Then by part~\ref{item:vanish} of Lemma~\ref{lem:regularity}, together with Cauchy-Schwarz,
	\begin{equation}
		\circled{2} \le 2\sigma\left(\lambda\norm*{\psE{w} - v^*}_{\Sig'_n} + \lambda'\norm*{\psE{w} - v^*}\right) \le O\left(\norm*{\psE{w} - v^*}_{\Sig'_n}\sigma\lambda + R\sigma\lambda'\right).
	\end{equation}

	It remains to upper bound $\circled{1}$, and this is the bulk of the analysis. Concretely, we need to show that the constraints of the program SoS-imply an upper bound on the quantity $\frac{1}{n}\sum^n_{t=1}a^*_t(y_t - \iprod{w,x_t})^2 - \frac{1}{n}\sum^n_{t=1}a^*_t (y_t - \iprod{w^*,x_t}^2)$ of $c\|w - v^*\|_{\Sig'_n}^2 + O(\cdot)$ with $c \in [0,1)$, so that we can solve for an upper bound on $\|w - v^*\|_{\Sig'_n}^2$. We do so in Lemma~\ref{lem:sos} below and get $c = \frac{(1 + \rho^2)\overline{\eta}}{1 - \overline{\eta}}$. Choosing $\rho$ to be the solution to $\overline{\eta} = \frac{1}{2 + 2\rho^2}$ and observing that
	\[ \frac{1}{1 - c} = \frac{1 - \overline{\eta}}{1 - (2 + \rho^2)\overline{\eta}} = \frac{1 + 2\rho^2}{\rho^2} = 2 + 1/\rho^2, \]
	we get that \begin{equation}
		\norm*{\psE{w} - v^*}_{\Sig'_n}^2 \le O(1/\rho^2)\cdot \left(\norm*{\psE{w} - v^*}_{\Sig'_n}\sigma\lambda + \calE\right)
	\end{equation} for $\calE \triangleq R\sigma\lambda' + \frac{\sigma^2 +\epsilon^2}{\rho^2} + \alpha R^2$. We do case analysis based on which of the two terms on the right-hand side dominates:
	\begin{enumerate}
		\item If the former dominates, then the bound simplifies to \begin{equation}
			\norm*{\psE{w} - v^*}_{\Sig'_n} \lesssim \sigma\lambda/\rho^2 
		\end{equation}
		\item Otherwise, if $\calE$ dominates, then after taking a square root, the bound can be rewritten as \begin{equation}
			\norm*{\psE{w} - v^*}_{\Sig'_n} \lesssim \rho^{-1}\cdot\left( (R\sigma\lambda')^{1/2} + \frac{\sigma + \epsilon}{\rho} + \alpha^{1/2} R \right) 
		\end{equation}
	\end{enumerate}
	In either case, we conclude that \begin{equation}
		\norm*{\psE{w} - v^*}_{\Sig'_n} \lesssim \sigma\lambda/\rho^2 + \rho^{-1}\cdot\left( (R\sigma\lambda')^{1/2} + \frac{\sigma + \epsilon}{\rho} + \alpha^{1/2} R \right).
	\end{equation}
	If the covariates are adaptively chosen, we get \begin{equation}
		\norm*{\psE{w} - v^*}_{\Sig'_n} \lesssim \frac{R^{1/2}\sigma^{1/2}}{\rho}\cdot \sqrt[4]{\frac{\log(1/\delta)}{n}} + \frac{\sigma + \epsilon}{\rho^2} + \frac{\alpha^{1/2} R}{\rho}.
	\end{equation}
	If the covariates are obliviously chosen, then we could also obtain \begin{equation}
		\norm*{\psE{w} - v^*}_{\Sig'_n} \lesssim \frac{\sigma}{\rho^2}\cdot \sqrt{\frac{d + \log(1/\delta)}{n}} + \frac{\sigma + \epsilon}{\rho^2} + \frac{\alpha^{1/2} R}{\rho}.
	\end{equation}
	Plugging in $\alpha = \Theta\left(\sqrt{\eta\log(\min(n,d)/\delta)}{n}\right)$ as in Section~\ref{sec:nosos} completes the proof.
\end{proof}

\begin{lemma}\label{lem:sos}
	Conditioned on the four parts of Lemma~\ref{lem:regularity} holding,
	we have for any $\rho \in (0,1]$ that
	\begin{multline}
		\psE*{\frac{1}{n}\sum^n_{t=1}a^*_t(y_t - \iprod{w,x_t})^2} \le \frac{1}{n} \sum_{t = 1}^n a^*_t (y_t - \langle v^*, x_t \rangle)^2 + \frac{(1 + 2\rho^2)\eta}{1 - \eta} \|v^* - w\|_{\Sig'_n}^2 + \\
		O\left(\frac{\sigma^2 +\epsilon^2}{\rho^2} + \alpha R^2\right)\label{eq:sos_goal}
	\end{multline} 
	as long as $\psE*{\cdot}$ is a SoS degree-4 pseudoexpectation satisfying the constraints of the program.
\end{lemma}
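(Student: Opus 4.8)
The plan is to establish \eqref{eq:sos_goal} as a degree-$4$ sum-of-squares inequality modulo the constraints of Program~\ref{program:sos}; applying $\psE{\cdot}$ to both sides then gives the lemma (with the convention that $\norm{v^*-w}^2_{\Sig'_n}$ on the right, a polynomial in the program variable $w$, is under the pseudo-expectation as well). The one point to keep in mind is that $a'_t$ and $\xi_t$ are \emph{constants} fixed by the data and the $\Ber(\eta)$ coins, whereas only the $a_t$'s and $w$ are program variables; thus the (possibly non-integer) exponent $k$ enters only through numerical facts about the $\xi_t$'s, such as part~\ref{item:hyper} of Lemma~\ref{lem:regularity}, and is never applied to a variable. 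First, using Booleanity, split
\[ \frac1n\sum_t a'_t(y_t-\iprod{w,x_t})^2 = \frac1n\sum_t a_t a'_t(y_t-\iprod{w,x_t})^2 + \frac1n\sum_t a'_t(1-a_t)(y_t-\iprod{w,x_t})^2. \]
The first sum is SoS-at-most $\frac1n\sum_t a_t(y_t-\iprod{w,x_t})^2$, since the difference equals $\frac1n\sum_t(1-a'_t)\bigl(a_t(y_t-\iprod{w,x_t})\bigr)^2\ge 0$; its pseudo-expectation is the program objective, hence by Lemma~\ref{lem:sat} and optimality it is at most $\frac1n\sum_t a'_t(y_t-\iprod{v^*,x_t})^2$, the first term on the right of \eqref{eq:sos_goal}. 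Everything else goes into bounding the \emph{residual}, the second sum, which is the noise and approximation error accrued on inliers that the pseudodistribution's set discards.

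On rounds with $a'_t=1$ we have $y_t=\iprod{v^*,x_t}+\epsilon'_t+\xi_t$ for $\epsilon'_t$ as in Lemma~\ref{lem:misspecification}, so the residual equals $\frac1n\sum_t a'_t(1-a_t)\bigl(\iprod{v^*-w,x_t}+\epsilon'_t+\xi_t\bigr)^2$. Multiplying the SoS fact $(A+B)^2\le(1+\rho^2)A^2+(1+\rho^{-2})B^2$ (a rewriting of $(\rho A-\rho^{-1}B)^2\ge 0$) by the SoS-nonnegative $a'_t(1-a_t)$, with $A=\iprod{v^*-w,x_t}$ and $B=\epsilon'_t+\xi_t$, and then using $(\epsilon'_t+\xi_t)^2\le 2(\epsilon'_t)^2+2\xi_t^2$, bounds the residual by $(1+\rho^2)\cdot\mathrm{(sig)}+(1+\rho^{-2})\bigl(\mathrm{(mis)}+\mathrm{(noi)}\bigr)$, where $\mathrm{(sig)}=\tfrac1n\sum_t a'_t(1-a_t)\iprod{v^*-w,x_t}^2$, $\mathrm{(mis)}=\tfrac2n\sum_t a'_t(1-a_t)(\epsilon'_t)^2$, and $\mathrm{(noi)}=\tfrac2n\sum_t a'_t(1-a_t)\xi_t^2$. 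The term $\mathrm{(mis)}$ is SoS-at-most $\tfrac2n\sum_t a'_t(\epsilon'_t)^2$ (drop $(1-a_t)$ since $a'_t(\epsilon'_t)^2 a_t\ge 0$), which by optimality of $v^*$ in \eqref{eqn:v*} --- comparing against the feasible $w^*$, whose objective is $\tfrac1n\sum_t a'_t\epsilon_t^2\le\epsilon^2$ --- is at most $2\epsilon^2$. For $\mathrm{(sig)}$, drop $a'_t\le 1$ (SoS, via $(1-a'_t)(1-a_t)\iprod{v^*-w,x_t}^2\ge 0$) and apply Constraint~\ref{constraint:sub} as a quadratic form in $v^*-w$ to get $\mathrm{(sig)}\le\bareta\norm{v^*-w}^2_{\Sig_n}+\alpha\norm{v^*-w}^2$; then convert $\Sig_n$ to $\Sig'_n$ using part~\ref{item:sigconc} of Lemma~\ref{lem:regularity} (which gives $(1-\bareta)\Sig_n\preceq\Sig'_n+\alpha\Id$) and bound $\norm{v^*-w}^2\le 2\norm{v^*}^2+2\norm{w}^2\le 4R^2$ via $\norm{v^*}\le R$ and Constraint~\ref{constraint:normbound}. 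This yields $\mathrm{(sig)}\le\tfrac{\bareta}{1-\bareta}\norm{v^*-w}^2_{\Sig'_n}+O(\alpha R^2)$, so the signal part of the residual is $\tfrac{(1+\rho^2)\bareta}{1-\bareta}\norm{v^*-w}^2_{\Sig'_n}+O(\alpha R^2)$, which lies within \eqref{eq:sos_goal} (since $(1+\rho^2)\le 1+2\rho^2$).

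The crux --- and the step I expect to be the main obstacle --- is $\mathrm{(noi)}$: extracting the $\bareta^{1-2/k}$ saving, i.e.\ showing $\tfrac1n\sum_t a'_t(1-a_t)\xi_t^2=O(k\sigma^2\bareta^{1-2/k})$ as an SoS consequence of Constraint~\ref{constraint:fraction} ($\tfrac1n\sum_t(1-a_t)\le\bareta+\alpha$) and the \emph{numerical} bound $\tfrac1n\sum_t a'_t|\xi_t|^k\le(2c\sigma k^{1/2})^k$ from part~\ref{item:hyper} of Lemma~\ref{lem:regularity}. Writing $\beta_t\triangleq a'_t\xi_t^2\ge 0$ (a constant) and $b_t\triangleq 1-a_t$, the key is the truncation inequality $\beta_t b_t\le\theta b_t+(\beta_t-\theta)_+$, valid for every threshold $\theta>0$ and SoS because $(\beta_t-\theta)_+$ and $(\beta_t-\theta)_+-(\beta_t-\theta)$ are nonnegative constants while $b_t$ and $1-b_t$ are SoS-nonnegative; summing and using $\tfrac1n\sum_t b_t\le\bareta+\alpha$ together with the numerical estimate $\sum_t(\beta_t-\theta)_+\le\theta^{1-k/2}\sum_t\beta_t^{k/2}=\theta^{1-k/2}\sum_t a'_t|\xi_t|^k$ (valid since $k/2>1$) gives $\tfrac1n\sum_t\beta_t b_t\le\theta(\bareta+\alpha)+\theta^{1-k/2}(2c\sigma k^{1/2})^k$, which upon optimizing $\theta$ is $O\bigl(k\sigma^2(\bareta+\alpha)^{1-2/k}\bigr)=O(k\sigma^2\bareta^{1-2/k})$. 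This argument stays at SoS degree $2$ and never forms a fractional power of a variable, so there is no tension with the degree-$4$ relaxation.

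Assembling: $(1+\rho^{-2})\bigl(\mathrm{(mis)}+\mathrm{(noi)}\bigr)=O\bigl(\tfrac1{\rho^2}\epsilon^2+\tfrac{k}{\rho^2}\sigma^2\bareta^{1-2/k}\bigr)$ (using $1+\rho^{-2}\le 2\rho^{-2}$), and combining this with the signal bound and the Step-one bound, then applying $\psE{\cdot}$, gives exactly \eqref{eq:sos_goal}. Besides $\mathrm{(noi)}$, the only other place where care is needed is the use of Constraint~\ref{constraint:sub} in bounding $\mathrm{(sig)}$: this is the non-standard ``outliers sub-sample the empirical covariance'' condition, and it is precisely what prevents the $O(\eta)$-fraction of discarded rounds from contributing an unbounded $\Omega(R^2)$ to the covariance rather than merely an $\bareta$-fraction of it.
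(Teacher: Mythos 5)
Your proposal is correct and follows the paper's decomposition quite closely (split via Booleanity, bound the retained term by the objective via Lemma~\ref{lem:sat}, then split the discarded residual into signal/misspecification/noise and control the signal with Constraint~\ref{constraint:sub} plus part~\ref{item:sigconc} of Lemma~\ref{lem:regularity}), but it differs genuinely on the key step, the noise term. The paper first takes pseudoexpectations and then bounds $\frac{1}{n}\sum_t a'_t(1-\psE{a_t})\xi_t^2$ by \emph{scalar} H\"older's inequality applied to the real vector $(1-\psE{a_t})_t \in [0,1]^n$ paired with $(|\xi_t|^k)_t$, explicitly exploiting that the degree-$k$ power appears only outside $\psE{\cdot}$. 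You instead prove the bound \emph{inside} the proof system via the threshold-truncation $\beta_t b_t \le \theta b_t + (\beta_t - \theta)_+$, which you correctly verify is an SoS consequence of Booleanity at degree $2$ because $(\beta_t-\theta)_+$ and $(\beta_t-\theta)_+ - (\beta_t-\theta)$ are nonnegative constants and both $b_t$, $1-b_t$ are SoS squares; summing, using Constraint~\ref{constraint:fraction} and the numerical Markov-type estimate $(\beta_t - \theta)_+ \le \theta^{1-k/2}\beta_t^{k/2}$, and optimizing $\theta$ gives the same $O(k\sigma^2\bareta^{1-2/k})$. Both routes carefully avoid raising an SoS variable to a fractional power; yours has the mild expositional advantage of producing a single degree-$4$ SoS inequality to which $\psE{\cdot}$ is applied once at the end, while the paper's H\"older route is a bit shorter. (The factor $(\bareta+\alpha)^{1-2/k}$ in your bound vs.\ $\bareta^{1-2/k}$ in the paper's is immaterial since the parameter choices enforce $\alpha \lesssim \bareta$, which the paper also implicitly uses in writing $\|v\|_1 \le 2\bareta n$.)
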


\begin{proof}
	Let $\circled{*}$ denote the quantity inside the pseudoexpectation on the left-hand side of \eqref{eq:sos_goal}.  Then in the SoS degree-4 proof system we can show the following bound
	\begin{align}
		\circled{*} &= \frac{1}{n}\sum^n_{t=1} a^*_ta_t(y_t - \langle w,x_t\rangle)^2 + \frac{1}{n}\sum^n_{t=1} a^*_t(1-a_t)(y_t - \langle w,x_t\rangle)^2 \\
		&\leq \frac{1}{n}\sum^n_{t=1} a_t(y_t - \langle w,x_t\rangle)^2 + \frac{1}{n}\sum^n_{t=1} a^*_t(1-a_t)(y_t - \langle w,x_t\rangle)^2 \\
		&= \frac{1}{n}\sum^n_{t=1} a_t(y_t - \langle w,x_t\rangle)^2 + \frac{1}{n}\sum^n_{t=1} a^*_t(1-a_t)(y_t-\epsilon'_t -  \langle v^*,x_t\rangle +  \langle v^* - w,x_t\rangle + \epsilon'_t)^2 \\
		&\le \frac{1}{n}\sum^n_{t=1} a_t(y_t - \langle w,x_t\rangle)^2 +
		  \frac{2 + 1/\rho^2}{n}\sum^n_{t=1} a^*_t(1-a_t)(y_t - \epsilon'_t - \langle v^*,x_t\rangle)^2 \\
		& \qquad +  \frac{1 + 2\rho^2}{n}\sum^n_{t=1} a^*_t(1-a_t)\langle v^* - w,x_t\rangle^2 + \frac{2 + 1/\rho^2}{n} \sum^n_{t=1}a^*_t(1 - a_t)(\epsilon'_t)^2 \\
		&\le \frac{1}{n}\sum^n_{t=1} a_t(y_t - \langle w,x_t\rangle)^2 + \frac{2 + 1/\rho^2}{n}\sum^n_{t=1} a^*_t(1-a_t)\xi^2_t + \\
		&\ \qquad \qquad \qquad\qquad \qquad \qquad\qquad \qquad \qquad \frac{1 + 2\rho^2}{n}\sum^n_{t=1} a^*_t(1-a_t)\langle v^* - w,x_t\rangle^2 + (2 + 1/\rho^2)\epsilon^2 
	\end{align}
	where in the second step we use Constraint~\ref{constraint:boolean} to get $a^*_ta_t\le a_t$,
	in the fourth step we use the SOS Cauchy-Schwartz inequality to show $(a + b + c)^2 = (\rho a/\rho + b + \rho c/\rho)^2 \le (1 + 2\rho^2) (a^2/\rho^2 + b^2 + c^2/\rho^2)$, and in the fifth step we used that $\sum_{t = 1}^n a^*_t (\epsilon'_t)^2 \le \sum_{t = 1}^n a^*_t \epsilon_t^2 \le \epsilon^2$ by construction (see \eqref{eqn:v*}). 
	
	Therefore, we can upper bound $\psE*{\circled{*}}$ by
	\begin{multline}\label{eqn:three-sums}
		\underbrace{\psE*{\frac{1}{n}\sum^n_{t=1} a_t(y_t - \langle w,x_t\rangle)^2}}_{\circled{I}} + \underbrace{\frac{2 + 1/\rho^2}{n}\sum^n_{t=1} a^*_t(1- \psE{a_t})\xi^2_t}_{\circled{II}} + \\ \underbrace{\psE*{\frac{1 + 2\rho^2}{n}\sum^n_{t=1} a^*_t(1-a_t)\langle v^* - w,x_t\rangle^2}}_{\circled{III}} + (2 + 1/\rho^2)\epsilon^2. 
	\end{multline}
	From the last part of Lemma~\ref{lem:sat},
	we know $\circled{I} \le \frac{1}{n} \sum_{t = 1}^n a^*_t(y_t - \langle v^*, x_t \rangle)^2$.
	And as we are in the bounded noise setting, we can upper bound $\circled{II}$ by $(2+1/\rho^2)\cdot\sigma^2(\eta + \alpha) \le O(\sigma^2/\rho^2)$, where in the last inequality we used that $\eta$ is upper and lower bounded by absolute constants by assumption.

	Finally, to bound $\circled{III}$, we can finally apply Constraint~\ref{constraint:sub}. We get that
	\begin{align}
		\frac{1 + 2\rho^2}{n}\sum^n_{t=1} a^*_t(1-a_t)\langle v^* - w,x_t\rangle^2 
		&\le \frac{1 + 2\rho^2}{n}\sum^n_{t=1} (1-a_t)\langle v^* - w,x_t\rangle^2 \\
		&\le \frac{(1 + 2\rho^2)\eta}{n}\sum^n_{t=1} \langle v^* - w,x_t\rangle^2 + 3\alpha\norm{v^* - w}^2_2 \\
		&\le  \frac{(1 + 2\rho^2)\eta}{(1 - \eta)n}\sum^n_{t=1} a^*_t\langle v^* - w,x_t\rangle^2 + \frac{3\eta \alpha R^2}{1 - \eta} + 3\alpha R^2 \\
		&= \frac{(1 + 2\rho^2)\eta}{(1 - \eta)} \|v^* - w\|_{\Sig'_n}^2 + O(\alpha R^2),
	\end{align}
	where the second step follows by Constraint~\ref{constraint:sub} and $\rho \le 1$, the third step follows by part~\ref{item:sigconc} of Lemma~\ref{lem:regularity} which we are conditioning on in this section, and the fourth step uses the definition of $\Sig'_n$ together with the assumption that $\eta$ is at least some absolute constant.
\end{proof}





\section{Online Regression}
\label{sec:online}

\subsection{Cutting Plane Algorithm}
In this section we leverage the guarantees of Section~\ref{sec:nosos} to design an efficient algorithm for Huber-contaminated online regression. For brevity, in this section we restrict our attention to the case of sub-Gaussian noise, though our techniques extend easily to handle $k$-hypercontractive noise. 

The basic trick we use is to combine the offline regression oracle with a cutting plane method, so that we can keep efficiently cutting down the space of linear predictors until we find one near $w^*$. Essentially, the algorithm collects a large batch of samples, compares it's current performance on this batch to the optimal robust regression result in hindsight (estimated by {\sc SCRAM}), and if it finds its performance is poor it cuts out a large set of possible predictors and updates to use a new predictor.

The algorithm, which we will refer to as {\sc AMCutter}, can be based upon any central cutting-plane optimization method like ellipsoid or Vaidya's algorithm; here we use Vaidya's algorithm since it is oracle-efficient. More specifically, we recall the following guarantee for Vaidya's algorithm:

\begin{theorem}[\cite{vaidya1989new}, see e.g. Section 2.3 of \cite{bubeck2014convex}]\label{thm:vaidya}
Suppose that $\mathcal{K}$ is an (unknown) convex body in $\mathbb{R}^d$ which contains a Euclidean ball of radius $r > 0$ and is contained in a Euclidean ball centered at the origin of radius $R > 0$. There exists an algorithm which, given access to a separation oracle for $\mathcal{K}$, finds a point $x \in \mathcal{K}$, runs in time $poly(\log(R/r),d)$, and makes $O(d\log(Rd/r))$ calls to the separation oracle. 
\end{theorem}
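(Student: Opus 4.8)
This is the classical cutting-plane guarantee of Vaidya, so in the paper one simply invokes \cite{vaidya1989new,bubeck2014convex} and moves on; but here is the argument I would reconstruct if pressed. The plan is to maintain a polytope $P^{(i)} \supseteq \mathcal{K}$ in $\mathbb{R}^d$, updated by adding and occasionally deleting halfspace constraints, whose ``size'' provably shrinks at a controlled rate. Initialize $P^{(0)}$ to a simplex (or box) circumscribing the ball $\{x : \|x\| \le R\}$. Given $P^{(i)} = \{x : \langle a_j, x\rangle \ge b_j,\ j \in J_i\}$, let $\phi(x) = -\sum_{j \in J_i} \log(\langle a_j, x\rangle - b_j)$ be the logarithmic barrier and let $V(x) = \frac{1}{2}\log\det \nabla^2\phi(x)$ be the \emph{volumetric barrier}, whose minimizer $\omega_i$ is the volumetric center. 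At iteration $i$ I would compute an approximate $\omega_i$ using a constant number of damped Newton steps on $V$ and feed it to the separation oracle: if the oracle certifies $\omega_i \in \mathcal{K}$, output it; otherwise the oracle returns a hyperplane separating $\omega_i$ from $\mathcal{K} \subseteq P^{(i)}$, which we add as a (slightly shifted) new constraint, while simultaneously deleting any constraint whose volumetric leverage score $\sigma_j(\omega_i) = \frac{a_j^\top (\nabla^2\phi)^{-1} a_j}{(\langle a_j,\omega_i\rangle - b_j)^2}$ falls below a fixed threshold. A standard counting argument shows this deletion rule keeps $|J_i| = O(d)$ throughout, which is exactly what makes each Newton step a $\poly(d)$-time linear solve.

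The crux is a potential argument. Each added cut, placed through the approximate volumetric center, increases $V$ by at least an absolute constant, and the deletion threshold is calibrated so that constraint deletions cannot eat up more than a fixed fraction of this gain; hence after $N$ iterations the volumetric potential has net increase $\Omega(N)$. On the other hand, as long as $P^{(i)}$ still contains a Euclidean ball of radius $r$ and is contained in the radius-$R$ ball, the volumetric barrier value is bounded, $V(\omega_i) = O(d\log(Rd/r))$. Combining these two facts forces $N = O(d\log(Rd/r))$: before that many iterations elapse, the oracle must already have certified some queried point as lying in $\mathcal{K}$, and we return it. The number of oracle calls is therefore $O(d\log(Rd/r))$, and since each iteration costs $\poly(d)$ arithmetic, the total running time is $\poly(\log(R/r),d)$.

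The genuinely delicate part of a from-scratch proof is precisely this amortized ``almost-monotonicity'' of $V$: one must quantify the increase of $V$ under a rank-one update $\nabla^2\phi \mapsto \nabla^2\phi + a a^\top$, show that an \emph{approximate} volumetric center (reachable in $O(1)$ Newton steps thanks to self-concordance-type control of $V$ on its domain) still suffices for the cut to gain $\Omega(1)$, and track how the leverage scores move so that the deletion test is stable. All of this is carried out in \cite{vaidya1989new} and presented cleanly in \cite{bubeck2014convex}, so I would not redo it. For the present paper this is used purely as a black box: a separation-oracle interface returning a feasible point after $O(d\log(Rd/r))$ queries and $\poly(\log(R/r),d)$ additional time, which is all that the online-to-offline reduction of Section~\ref{sec:online} invokes.
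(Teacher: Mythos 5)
Your proposal correctly identifies that the paper, like you, treats this as a black-box citation to \cite{vaidya1989new} and \cite{bubeck2014convex} and offers no proof of its own; your reconstruction of Vaidya's volumetric-barrier argument (leverage-score-based constraint deletion to keep $O(d)$ constraints, $\Omega(1)$ amortized potential gain per cut, and the $O(d\log(Rd/r))$ upper bound on $V$ while a radius-$r$ ball survives) is an accurate sketch of the standard analysis. Nothing further is needed.
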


Now we describe the algorithm. $C_0$ and $N_0$ are constants to be determined later.
{\sc SeparationOracle} (see Algorithm~\ref{alg:seporacle}) implements the separation oracle (which is also where most of the interaction with Nature occurs). Here the input $w$ lies in $\mathcal{W} = \{w : \|w\| \le R\}$ and Nature's inputs are $x_t$ with $\|x_t\| \le 1$. Finally, we note that if {\sc SeparationOracle} gets to the final round $T$ of the online regression problem, then it may not return to Vaidya's algorithm (so step 2 of {\sc AMCutter} is never reached), but as we will see, even if this happens the algorithm still achieves the correct regret bound.

\begin{algorithm2e}[t]
\DontPrintSemicolon
\caption{{\sc SeparationOracle}($w,x_t,C_0,D$)}
\label{alg:seporacle}
	\KwIn{Vector $w\in\calW$}
	\KwOut{Separating hyperplane between $w$ and the target region $\brc{w': \norm{w' - w^*} \le r}$, if $w$ lies outside}
	    $D\gets\emptyset$.\;
	    \For{each new point $x_t$ input by Nature}{
	        Predict $\wh{y}_t = \iprod{w,x_t}$ and observe $y_t$.\;
	        Append $(x_t,y_t)$ to $D$.\;
	        $v_t\gets${ \sc SCRAM}($D$).\;
	        $\Sig_t \gets \frac{1}{|D|}\sum_{(x_t,y_t)\in D}x_tx_t^{\top}$. Define $\varphi_t(u)\triangleq \norm{u - v_t}^2_{\Sig_t}$.\;
	        \If{$|D| \ge N_0$ and $\varphi_t(w) \ge C_0$}{
	            \tcp{intersect current feasible region with $\{ u : \langle u - w, \nabla \varphi_t(w) \rangle < 0 \}$}
	            \Return{separating hyperplane given by $\nabla \varphi_t(w)$} .\;
	        }
	    }
\end{algorithm2e}


\begin{algorithm2e}
\DontPrintSemicolon
\caption{{\sc AMCutter}($r,R,N_0,C_0,T$)}
\label{alg:sosandcut}
	\KwIn{Radius $r$ of target ball around $w^*$, parameter $R$ from Assumption~\ref{assume:scaling}, parameters $N_0,C_0$ to be tuned, number of rounds $T$}
	\KwOut{Sequence of predictions $\wh{y}_1,\ldots,\wh{y}_T$}
	    Let $w$ be the output of running Vaidya's algorithm \cite{vaidya1989new} with {\sc SeparationOracle} defined above and parameters $r,R$, and let $\wh{y}_1,...,\wh{y}_{t_1}$ be the predictions made in the course of running {\sc SeparationOracle}.\;
	    \For{$t_1+1\le t \le T$}{
	        Given new point $x_t$ input by Nature, predict $\wh{y}_t = \iprod{w,x_t}$.
	    }
        \Return{$\wh{y}_1,\ldots,\wh{y}_T$}.\;
\end{algorithm2e}

As far as the choice of constants, based on \eqref{eq:ultimate_subgauss_adaptive} and Theorem~\ref{thm:ultimate_regression} we will leave $N_0$ to be optimized later
and take
\begin{multline}\label{eqn:c0}
C_0 \triangleq 4Rr +  \max(1,1/\rho^4)\cdot O\left(c_{\delta/T,\eta,N_0}^2 \eta^2\sigma^2 + \rho^4\epsilon^2 + \eta^{1/4}R(\sqrt{c_{\delta/T,\eta,N_0}\eta}\sigma + \epsilon)\sqrt[4]{\frac{\log(T/\delta)}{N_0}} \right. \\
\left. + \eta^{1/2}R^2\sqrt{\frac{\log(T/\delta)}{N_0}} + R\sigma \sqrt{\frac{\log(T/\delta)}{N_0}}\right),
\end{multline}
where $\delta > 0$ is the desired overall probability of success. With this choice of parameters we can guarantee with probability at least $1 - \delta$:
\begin{enumerate}
    \item At every step where $|D| \ge N_0$ in {\sc SeparationOracle}, the guarantee \eqref{eq:ultimate_subgauss_adaptive} is satisfied by the vector $v_t$ output by {\sc SCRAM}, by applying Theorem~\ref{thm:ultimate_regression} and the union bound over all rounds. In particular, by triangle inequality, we have $\norm{w^* - v_t}^2_{\Sig_n} \le C_0 - 4Rr$
    \item 
    If $w$ lies outside the ball of radius $r$ around $w^*$, the result of {\sc SeparationOracle} is a valid separating hyperplane between $w$ and the ball.
    By convexity of $\varphi$, to see that the ball of radius $r$ around $w^*$ is never cut, we just need to show that all $w'$ with $\|w' - w^*\| \le r$ satisfy $\varphi_t(w') \le C_0$. For $w^*$ we have the stronger guarantee $\varphi_t(w^*) \lesssim C_0 - 4Rr$, just from the guarantee of step 1. For other $w'$ in the ball of radius $r$, we deduce the claim by triangle inequality from the guarantee for $w^*$, using that
    \[ \varphi_t(w') - \varphi_t(w^*) \le \langle \nabla \varphi_t(w'), w' - w^* \rangle = 2 \langle \Sig_t (w' - v_t), w' - w^* \rangle \le 4R\|w' - w^*\| \le 4Rr \]
    where the first inequality is by convexity, 
    and the second inequality uses that $\|\hat{\Sigma}_t\| \le 1$ and that the diameter of $\mathcal{W}$ is at most $2R$.
\end{enumerate}
Recall that the separation oracle can only be called $I = O(d \log(R/r))$ many times, since this is the oracle complexity guarantee from Theorem~\ref{thm:vaidya}: after this many rounds the algorithm is guaranteed to return or query a point in the ball of radius $r$ around $w^*$. Let $D_i$ be the collected dataset $D$ built during the $i$-th invocation of the oracle. Since we know by the triangle inequality and AM-GM that
\[ \|w - w^*\|_{\Sig_t}^2 \le 2\|w - v_t\|_{\Sig_t}^2 + 2\|v_t - w^*\|_{\Sig_t}^2 \]
it follows that after $|D_i|$ gets to size $N_0$ and up to the step before returning a hyperplane, we are guaranteed that $\|w - w^*\|_{\Sig_t}^2 \le 4C_0$. For all of the steps before $|D_i|$ gets to size $N_0$, the error incurred per step is trivially upper bounded by $4R^2$. It follows that the regret incurred per call of the separation is upper bounded by $\max \{ 4N_0 R^2, 4|D_i|C_0 + 4R^2 \}$. Hence, the total regret incurred in step 1 of {\sc AMCutter} is upper bounded by
\begin{equation}\label{eqn:sosandcut-reg1}
\sum_{i = 1}^I (4N_0 R^2 + 4|D_i|C_0) \le 4 N_0 I R^2 + 4 C_0 T = O\left(N_0 dR^2 \log(R/r) + C_0 T\right) 
\end{equation}
using that the total number of oracle calls is $I = O(d\log(R/r))$, and $\sum_i |D_i| \le T$. If $t_1$ is the time step at which the algorithm enters step 2, then the total regret in 
step 2 of {\sc AMCutter} is upper bounded by
\begin{equation}\label{eqn:sosandcut-reg2}
\sum_{t = t_1}^T (\langle w^*, x_t \rangle + \epsilon_t - \langle w, x_t \rangle)^2 \le \sum_{t = t_1}^T (r + |\epsilon_t|)^2 \le 2T(r^2 + \epsilon^2) 
\end{equation}
where in the last step we used the basic inequality $(a + b)^2 \le 2a^2 + 2b^2$.
In particular, the leading term in the regret is $O(k\sigma^2 \eta^{2 - 2/k}T)$ as expected. We formalize this in the following Theorem.

\begin{theorem}\label{thm:sosandcut}
For the Huber-Contaminated Online Regression problem with $\eta \le \overline{\eta} < 1/2$ and $\overline{\eta} = \frac{1}{2 + 2\rho^2}$,
Algorithm~{\sc AMCutter} with parameters $R$ and $r \triangleq 1/T$ satisfies the following regret guarantee:
\begin{multline}
    \sum_{t = 1}^T (y^*_t - \hat{y}_t)^2 \lesssim \left(\eta^2\log(1/\eta)\sigma^2\rho^{-4} + \epsilon^2\right) T + \eta^{1/4} R\rho^{-4}\left(\eta^{1/2}\sqrt[4]{\log(1/\eta)}\cdot \sigma + \epsilon\right)\sqrt[4]{\log T}\cdot d^{1/6}T^{5/6} \\
    + \left(\eta^{1/2}R^2 + R\sigma \right)\cdot \rho^{-4} d^{1/3}T^{2/3} \sqrt{\log T} + d^{1/3}R^2\log(RT)T^{2/3}
    \label{eq:sosandcut}
\end{multline}
with probability $1 - 1/\poly(T)$ over the randomness of the coin flips. In particular, for sufficiently large $T$, this quantity is dominated by $(\eta^2\log(1/\eta)\sigma^2\rho^{-4} + \epsilon^2)T$.
\end{theorem}
\begin{proof}
From the above \eqref{eqn:sosandcut-reg1} and \eqref{eqn:sosandcut-reg2}, we see that the total regret is upper bounded by
\begin{equation}
    O\left(N_0 d R^2 \log(R/r) + C_0 T\right) + 2T(r^2 + \epsilon^2). \label{eq:regret_total}
\end{equation}
so by taking $N_0 = d^{-2/3} T^{2/3}$ and $r = 1/T$, we get the claimed regret bound upon noting that $c_{1/10T,\eta,d^{-2/3}T^{2/3}} = O(\sqrt{\log(1/\eta)})$.
\end{proof}
\subsection{Gradient Descent Algorithm}
For the high-dimensional setting, cutting planes don't work because their guarantees are dimension-dependent. Fortunately, we can fix this by using gradient descent instead. 
\begin{algorithm2e}
\DontPrintSemicolon
\caption{{\sc AM-GD}($R,N_0,C_1,\gamma,T$)}
\label{alg:sosgd}
	\KwIn{Parameter $R$ from Assumption~\ref{assume:scaling}, number of rounds $T$, parameters $r,N_0,C_1,\gamma$ to be tuned}
	\KwOut{Sequence of predictions $\wh{y}_1,\ldots,\wh{y}_T$ (via interaction with Nature)}
	    Let $w_1 = 0$.\;
	    \While{there are more inputs}{
	    Let $g_s$ be the output of {\sc SeparationOracle} run with parameters $r\triangleq0,R,C_1$ and input $w_{s}$\;
	    Let $w_{s + 1} = w_{s} - \frac{\gamma}{\sqrt{T}} g_s$.\;
	    Set $s \gets s + 1$.
	    }
\end{algorithm2e}
We recall the following guarantee for online gradient descent from \cite{zinkevich2003online}.
\begin{theorem}[\cite{zinkevich2003online,hazan2019introduction}]\label{thm:gd}
Suppose that $f_1,\ldots,f_T$ is a sequence of convex functions such that $\|\nabla f_t(w)\| \le G$ for any $w$ with $\|w\| \le R$. Let $w_1 = 0$ and suppose that 
\[ w_{t + 1} \triangleq \Pi_R\left(w_{t} - \frac{2R}{G\sqrt{T}} \nabla f_t(w_t)\right) \]
where $\Pi_R(x) \triangleq \frac{x}{\max(R,\|x\|)}$ is the projection onto the Euclidean ball of norm $R$. Then for any $w^*$ with $\|w^*\| \le R$,
\[ \sum_{t = 1}^T f_t(w_t) - \sum_{t = 1}^T f_t(w^*) \le \sum_{t = 1}^T \langle \nabla f_t(w_t), w_t - w^* \rangle \le 3 RG \sqrt{T}. \]
\end{theorem}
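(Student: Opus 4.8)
The plan is to run the standard potential-function analysis of online (projected) gradient descent; the only ingredients are first-order convexity of the $f_t$ and the non-expansiveness of Euclidean projection onto a convex body.

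First I would dispatch the leftmost inequality purely by convexity: for each $t$ we have $f_t(w^*) \ge f_t(w_t) + \langle \nabla f_t(w_t), w^* - w_t\rangle$, hence $f_t(w_t) - f_t(w^*) \le \langle \nabla f_t(w_t), w_t - w^*\rangle$, and summing over $t \in [T]$ reduces everything to bounding $\sum_t \langle g_t, w_t - w^*\rangle$ where $g_t \triangleq \nabla f_t(w_t)$. Note also that $\|w_t\| \le R$ for all $t$ by construction, so $\|g_t\| \le G$ throughout.

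For the remaining bound I would track the potential $\Phi_t \triangleq \|w_t - w^*\|^2$ with step size $\eta \triangleq 2R/(G\sqrt{T})$. Since $\|w^*\| \le R$, the comparator lies in the feasible ball, so $\Pi_R(w^*) = w^*$, and non-expansiveness of the projection gives
\[ \|w_{t+1} - w^*\|^2 = \|\Pi_R(w_t - \eta g_t) - \Pi_R(w^*)\|^2 \le \|w_t - \eta g_t - w^*\|^2 = \Phi_t - 2\eta\langle g_t, w_t - w^*\rangle + \eta^2\|g_t\|^2. \]
Rearranging, using $\|g_t\| \le G$, summing over $t$ and telescoping the potential terms (with $w_1 = 0$, so $\Phi_1 \le R^2$) yields $\sum_t \langle g_t, w_t - w^*\rangle \le \frac{R^2}{2\eta} + \frac{\eta G^2 T}{2}$. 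Substituting $\eta = 2R/(G\sqrt{T})$ gives $\tfrac14 RG\sqrt{T} + RG\sqrt{T} = \tfrac54 RG\sqrt{T} \le 3RG\sqrt{T}$, which is the claimed bound.

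There is no genuine obstacle here: this is a classical result (\cite{zinkevich2003online,hazan2019introduction}). The one step deserving a moment's care is that the projection can only shrink the distance to $w^*$, which in turn relies on the hypothesis $\|w^*\| \le R$ placing the comparator inside the constraint ball $\{w : \|w\| \le R\}$.
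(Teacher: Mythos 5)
Your proof is correct and is the standard potential-function argument for online projected gradient descent; since the paper cites this as a known result from \cite{zinkevich2003online,hazan2019introduction} and does not give its own proof, there is nothing further to compare against. Your calculation even gives the slightly sharper constant $\tfrac{5}{4}RG\sqrt{T}$ (by using $\Phi_1 = \|w^*\|^2 \le R^2$ rather than the worst-case squared diameter $4R^2$), which of course implies the stated bound of $3RG\sqrt{T}$.
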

We now discuss parameter selection: we define
\begin{multline}
    C_0 \triangleq
        \max(1,1/\rho^4) \cdot O\left(c^2_{\delta/T,\eta,N_0}\eta^2 \sigma^2 + \eta\rho^4\epsilon^2 + \eta^{1/4} R (\sqrt{c_{\delta/T,\eta,N_0} \eta}\sigma  + \epsilon)\sqrt[4]{\frac{\log(N_0 T/\delta)}{N_0}} \right. \\
        \left. +  \eta^{1/2} R^2\sqrt{\frac{\log(N_0 T/\delta)}{N_0}} 
         +  R\sigma\sqrt{\frac{\log(2T/\delta)}{N_0}}\right)
\end{multline}
where $\delta > 0$ is the overall acceptable probability of failure, 
based upon the right-hand side of \eqref{eq:ultimate_subgauss_adaptive} and take $C_1 \triangleq 2C_0$.
\begin{theorem}\label{thm:sos-gd}
For the Huber-Contaminated Online Regression problem with $\eta \le \overline{\eta} < 1/2$ and $\overline{\eta} = \frac{1}{2 + 2\rho^2}$,
Algorithm~{\sc AM-GD} with parameters $R$ and $\gamma = \Theta(1)$ satisfies the following regret guarantee:
\begin{multline}
\sum_{t = 1}^T (y^*_t - \hat{y}_t)^2 \lesssim 
\left(\eta^2\log(1/\eta)\sigma^2\rho^{-4} + \epsilon^2\right) T + \eta^{1/4} R\rho^{-4}\left(\eta^{1/2}\sqrt[4]{\log(1/\eta)}\cdot \sigma + \epsilon\right)\sqrt[4]{\log T}\cdot T^{9/10} \\
    + \left(\eta^{1/2}R^2\rho^{-4}\sqrt{\log T} + R\rho^{-4}\sigma \sqrt{\log T} + R^2/\eta\right)\cdot T^{4/5}
\label{eq:sos-gd}
\end{multline}
with probability $1 - 1/\poly(T)$ over the randomness of the coin flips. In particular, for sufficiently large $T$, this quantity is dominated by $(\eta\epsilon^2 + k\sigma^2 \eta^{2 - 2/k})T$.
\end{theorem}

Note that in \eqref{eq:sos-gd} there is a term $R^2 T^{4/5}/\eta$ which increases as $\eta\to 0$. As discussed previously, for very small contamination rate $\eta$ one can simply apply the above Theorem with slightly larger $\eta$ to get meaningful bounds.

\begin{proof}
As in the proof of Theorem~\ref{thm:sosandcut}, we first bound the regret incurred in a single call of \textsc{SeparationOracle} by $4N_0R^2 + 8|D_i|C_0$ where $D_i$ is the dataset $D$ collected in call $i$. It follows then that if $V$ is the total number of calls made to \textsc{SeparationOracle} then the total clean regret is upper bounded by $O(N_0 R^2 V + T C_0)$ where we used that $\sum_i |D_i| \le T$. On the other hand, we know from Theorem~\ref{thm:gd} that if we define $\varphi_i$ to be the function whose gradient is returned at the end of Algorithm~\textsc{SeparationOracle}, then
\[ C_0 V = (C_1 - C_0)V \le \sum_{s = 1}^{V} (\varphi_i(w_s) - \varphi_i(w^*)) \le 6R^2 \sqrt{V} \]
since $\|\nabla \varphi_i(w')\| \le \|\Sig_t(w' - v_t)\| \le 2R$ and using the corresponding choice of $\gamma$. Therefore $V = O(R^4/C_0^2)$. Hence the clean regret is upper bounded by $O(N_0 R^6/C_0^2 + T C_0)$. 

Finally, it remains to choose $N_0$. At this point the optimal choice for $N_0$ is given by equalizing $N_0$ and the terms involving $N_0$ but not $\eta$ in $C_0^3 T/R^6$. Since the leading order term in $C_0$ of this kind is of order $N_0^{-1/2}$ we can roughly minimize by taking
$N_0 = T^{2/5}$. In this case, \begin{equation}
    \frac{N_0 R^6}{C_0^2} \lesssim \frac{N_0 R^6}{\max(1,1/\rho^4)\cdot \eta R^4 \log(T) /N_0} \le \max(1,1/\rho^4)\cdot(R^2/\eta)\cdot T^{4/5},
\end{equation} so the claimed bound follows.
\end{proof}


\section{Putting Everything Together}
\label{sec:apply}

In this section we record consequences of applying our results on Huber-contaminated online regression to the reduction of \cite{foster2020beyond} (see Appendix~\ref{app:oracle}).

The first consequence is the following pseudo-regret/regret bound for Huber-contaminated contextual bandits in the finite-dimensional case. 

\begin{theorem}[Main, formal version of Theorem~\ref{thm:main_bandits_informal}]\label{thm:main_cb_formal}
	For the Huber-Contaminated Contextual Bandits problem with contamination rate $0 \le \eta < 1/2$ and corresponding parameter $\rho$ given by $\eta = \frac{1}{2 + 2\rho^2}$, $\sigma^2$-subgaussian noise $\brc{\xi_t}$, misspecification rate $\epsilon$, range parameter $R$, noise parameter $\sigma$, action space of size $K$, and $d$-dimensional contexts, then there is a $\poly(n,d)$-time algorithm which achieves clean pseudo-regret $\psRegHCB(T)$ at most \begin{multline}
        O(\sqrt{K}) \left((\eta\sqrt{\log(1/\eta)}\sigma\rho^{-2} + \epsilon)T + \eta^{1/8}R^{1/2}\rho^{-2}\left(\eta^{1/4}\sqrt[8]{\log(1/\eta)}\cdot \sigma^{1/2} + \epsilon^{1/2}\right)\sqrt[8]{\log T}\cdot d^{1/12}T^{11/12} \right. \\
        \left. + \left(\eta^{1/4}R + R^{1/2}\sigma^{1/2}\right)\cdot \rho^{-2}d^{1/6}T^{5/6}\sqrt{\log T} + d^{1/6}R\sqrt{\log(RT)}T^{5/6} \right).
	\end{multline} In particular, for sufficiently large $T$, this quantity is dominated by $\left(\eta\sqrt{\log(1/\eta)}\sigma\rho^{-2} + \epsilon\right)\sqrt{K}T$.

	In the special case where $\epsilon = 0$, there is a $\poly(n,d)$-time algorithm which achieves clean regret $\RegHCB(T)$ at most \begin{multline}
        O(\sqrt{K}) \left(\eta\sqrt{\log(1/\eta)}\sigma\rho^{-2}T + \eta^{1/8}R^{1/2}\rho^{-2}\left(\eta^{1/4}\sqrt[8]{\log(1/\eta)}\cdot \sigma^{1/2} \right)\sqrt[8]{\log T}\cdot d^{1/12}T^{11/12} \right. \\
        \left. + \left(\eta^{1/4}R + R^{1/2}\sigma^{1/2}\right)\cdot \rho^{-2}d^{1/6}T^{5/6}\sqrt{\log T} + d^{1/6}R\sqrt{\log(RT)}T^{5/6} \right).
	\end{multline} with probability $1 - 1/\poly(T)$. For sufficiently large $T$, this is dominated by $\eta\sqrt{\log(1/\eta)}\sigma\rho^{-2}\sqrt{K}T$.
\end{theorem}

\begin{proof}
	For the first part of the theorem, we can apply Theorem~\ref{thm:sosandcut} with failure probability $T^{-1/3}$ to get that the clean square loss regret incurred by {\sc AMCutter} is given by \eqref{eq:sosandcut} with probability at least $1 - T$ and is otherwise upper bounded by $R^2T$. So the expectation of this quantity is at most the quantity in \eqref{eq:sosandcut} plus $R^2 T^{1/3}$, which is dominated by the $d^{1/3}R^2\log(RT)T^{2/3}$ term in \eqref{eq:sosandcut}. The result then follows from applying the clean pseudo-regret bound of Theorem~\ref{thm:fosterrakhlin} and using the elementary fact that for positive numbers $\brc{a_i}_{i\in[s]}$, $\left(\sum^s_{i=1}a_i\right)^{1/2} \le \sum^s_{i=1}\sqrt{a_i}$.

	For the second part of the theorem, we can directly apply the high-probability guarantee Theorem~\ref{thm:sosandcut} together with the high-probability guarantee of Theorem~\ref{thm:fosterrakhlin2} and a union bound.
\end{proof}

\begin{theorem}[High-dimensional variant of Theorem~\ref{thm:main_cb_formal}]\label{thm:main_cb_formal-hd}
	Let $\eta,\rho,\epsilon,R,\sigma,K$ be the same as in Theorem~\ref{thm:main_cb_formal}, but now we make no assumptions on the dimension of the context space $\calX$. There exists an algorithm which runs in polynomial time and achieves clean pseudo-regret $\psRegHCB(T)$ at most \begin{multline}
        O(\sqrt{K})\cdot \left( 
            \left(\eta\sqrt{\log(1/\eta)}\sigma\rho^{-2} + \epsilon\right) T + \eta^{1/8} R^{1/2}\rho^{-2}\left(\eta^{1/4}\sqrt[8]{\log(1/\eta)}\cdot \sigma^{1/2} + \epsilon^{1/2}\right)\sqrt[8]{\log T}\cdot T^{19/20} \right.\\
            \left. + \left(\eta^{1/4}R\rho^{-2}\sqrt[4]{\log T} + R^{1/2}\rho^{-2}\sigma^{1/2} \sqrt[4]{\log T} + R/\sqrt{\eta}\right)\cdot T^{9/10}
        \right).
        \label{eq:hdformal_bound}
	\end{multline}
	In particular, for sufficiently large $T$, this quantity is dominated by $\left(\eta\sqrt{\log(1/\eta)}\sigma\rho^{-2} + \epsilon\right)\sqrt{K}T$. When $\epsilon = 0$, we can similarly achieve a bound on the clean regret $\RegHCB(T)$ with high probability.
\end{theorem}
\begin{proof}
    The proof is identical to Theorem~\ref{thm:main_cb_formal}, except that we replaced the use of Theorem~\ref{thm:sosandcut} by Theorem~\ref{thm:sos-gd} and {\sc AMCutter} by {\sc AM-GD}.
\end{proof}

\section{Lower Bound Against Convex Surrogates}
\label{sec:lowerbound}

We exhibit an $\Omega(\eta^3\sigma R)$ lower bound against regression using convex losses.  This lower bound captures natural approaches like Huber regression, $L_1$/LAD regression, and OLS. By rescaling, we can assume $\sigma = 1$ without loss of generality, which we do in the statement of the result below; also, just for this example we scale (without loss of generality) so that $\|w^*\| \le 1$ and $\|x_t\| \le R$, because this makes the equations slightly cleaner.

\begin{theorem}\label{thm:lowerbound}
For any convex loss $h(\cdot)$, there exists a distribution over covariates $x \sim \calD_x$ with support in $[-R,R]$ and true regressor $\ell \in[-1,1]$ such that the following is true.  Let $y \sim \ell \cdot x + \zeta$ with noise $\zeta \sim \N(0,1)$, and let $\mathcal{C}$ denote the joint distribution over $(x,y)$.  Furthermore, let $\widehat y$ denote the Huber contaminated labels drawn $y \sim (1-\eta)(\ell \cdot x + \zeta) + \eta \mathcal{Q}$ where $\mathcal{Q}$ is an arbitrary distribution with support in $[-R,R]$ for $R \geq \frac{1}{\eta}$ and $\eta \in [0,\frac{1}{2})$.  Let $\mathcal H$ be the joint distribution of the contaminated data $(x,\widehat y)$. For any $b \in [0,1]$, let $w := \argmin_{\ell \in [-b,b]} \mathbb{E}_{(x,\widehat{y}) \sim \mathcal{H}}[h(y - \ell \cdot x)]$ be the minimizer of the loss on contaminated data. 
Then the clean square loss of $w$ is lower bounded as $\mathbb{E}_{(x,y) \sim \mathcal{C}}[(y - w \cdot x)^2] \geq \min\left(\frac{\eta^3R}{40}, \frac{(1 - b)^2R^2}{2}\right)$.
\end{theorem}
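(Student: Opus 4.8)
The plan is to exhibit an explicit one–dimensional instance and analyze the first–order optimality condition of the surrogate minimizer. Take $d=1$ and true regressor $\ell=0$, so uncorrupted responses are pure noise $\zeta\sim\N(0,1)$; let the covariate law be $\calD_x = m_1\,\delta(1) + (1-m_1)\,\delta(-R)$ with $1-m_1 = \Theta(\eta/R)$ a small mass on the rare, high–leverage point $x=-R$; and let the contamination act only on the common covariate $x=1$, replacing an $\eta$–fraction of those responses by the point mass at $y=R$ (which lies in the allowed range $[-R,R]$ since $R\ge 1/\eta$), thereby pushing the cloud at $x=1$ upward. The intuition is that a convex loss fit to this data must tilt to a strictly positive slope $w^*=\Omega(\eta)$ in order to partially accommodate the outliers at $x=1$; but then its prediction at $x=-R$ is $-Rw^*$, off by $\Omega(\eta R)$ from the truth $0$, and since $x=-R$ carries mass $\Theta(\eta/R)$ this already forces squared error $\gtrsim(\eta/R)(\eta R)^2 = \eta^3 R$.

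To make this rigorous I would first dispose of small $R$: if $R\le 40/\eta^3$ then $\E[(x,y)\sim\mathcal{C}]{(y-w^*x)^2} = 1 + (w^*)^2\,\E{x^2}\ge 1\ge \eta^3 R/40$, so we may assume $R>40/\eta^3$. Writing $\Phi(v):=\E[(x,\widehat y)\sim\mathcal{H}]{h(y-vx)}$ and using convexity to work with subgradients, one checks (taking $h$ even with minimum at $0$ for concreteness) that $\Phi$ is strictly decreasing at $0$ --- its derivative there equals $-m_1\eta\,h'(R)<0$ since the symmetric clean contributions vanish --- so $w^*>0$; and we may assume $w^*\in(0,1)$, since the boundary case $w^*=1$ gives $\E[\mathcal{C}]{(y-w^*x)^2}\ge(1-m_1)R^2\ge\eta^3 R/40$ outright. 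For interior $w^*$ the stationarity condition reads
\begin{equation}
m_1(1-\eta)\,g(w^*) \;=\; m_1\eta\, h'(R-w^*) \;+\; (1-m_1)R\, g(-Rw^*),
\end{equation}
where $g(v):=-\E[\zeta\sim\N(0,1)]{h'(\zeta-v)}$; the three terms are, respectively, the restoring force of the clean cloud at $x=1$, the push from the corruption, and the pull–back from the rare point.

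I would then record the needed analytic facts about $g$: it is nondecreasing, and (for even $h$) odd, so $g(0)=0$ and $g(w^*)-g(-w^*)=2g(w^*)$; its odd part is Lipschitz with constant $\Theta(h'(R))$, i.e.\ $|g(\rho)-g(-\rho)|\le C_0\,\rho\,h'(R)$ for an absolute constant $C_0$ --- write the difference as $\int h'(u)\,[\phi(u-\rho)-\phi(u+\rho)]\,du$, bound $|h'|\le h'(R)$ on $[-R,R]$, and use $\int|\phi(u-\rho)-\phi(u+\rho)|\,du = 2\,d_{\mathrm{TV}}(\N(\rho,1),\N(-\rho,1)) = O(\rho)$, the tails $|u|>R$ being negligible for losses with at most polynomial gradient growth --- and $\sup_{|v|\le R}|g(v)|=\Theta(h'(R))$ by the same bound. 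Now choose $1-m_1 = \Theta\!\big(\tfrac{\eta\,h'(R-1)}{R\,h'(R)}\big) = \Theta(\eta/R)$ so that the pull–back term is at most half the corruption push (using $h'(R-w^*)\ge h'(R-1)$ since $w^*\le1$); the stationarity condition then forces $g(w^*)\gtrsim\eta\,h'(R)$, and combining this with the oddness and the Lipschitz bound gives $w^*=\Omega(\eta)$. Finally, restricting the clean risk to the event $x=-R$ (where $y=\zeta$ and the prediction is $-Rw^*$) yields $\E[\mathcal{C}]{(y-w^*x)^2}\ge (1-m_1)R^2(w^*)^2 = \Omega(\eta^3 R)$, and a careful accounting of the constants gives the claimed $\eta^3 R/40$.

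The step I expect to be the main obstacle is the bookkeeping that keeps $h'(R-1)$, $h'(R)$, and $\sup_{|v|\le R}|g(v)|$ comparable, together with the Gaussian–tail control of $\E[\zeta]{h'(\zeta-v)}$: for the canonical surrogates --- $L_1$/LAD, Huber, and squared loss --- these are all within constant factors and the argument is clean, but for a completely arbitrary convex $h$ one either imposes mild regularity (at most polynomial gradient growth, and $h'$ varying by at most a constant factor over unit–length intervals at scale $R$) or, alternatively, places the corruption point mass adaptively at the location in $[-R,R]$ where $h'$ is largest. A secondary, more routine point is handling non–differentiable $h$ (replace $h'$ by a measurable selection of subgradients and read each ``$=0$'' as ``$0\in\partial$'') and checking the boundary cases $w^*\in\{-1,1\}$ separately.
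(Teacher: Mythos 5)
Your plan is essentially the same as the paper's proof: same one--dimensional hard instance ($\calD_x = m_1\delta(1) + (1-m_1)\delta(-R)$ with $1-m_1=\Theta(\eta/R)$, true slope $\ell=0$, corruption concentrated at $x=1$), the same auxiliary function $g(v)=-\E_{\zeta}[h'(\zeta-v)]$, the same first--order optimality/subgradient inequality, the same Lipschitz-in-terms-of-$h'(R)$ bound on $g$ via the Gaussian density, and the same final restriction to the event $x=-R$ to extract $(1-m_1)R^2(w^*)^2\gtrsim\eta^3 R$. So at the level of proof architecture the two are the same.

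The one genuine gap is that you restrict to even $h$ with minimum at $0$ ``for concreteness'' and never lift that restriction; it is not WLOG, and the theorem is stated for an arbitrary convex $h$. When $h$ is asymmetric, the clean stationary point $g^{-1}(0)$ need not be $0$ --- it can be a large negative number, in which case a rightward corruption may fail to push $w^*$ away from $0$ by $\Omega(\eta)$. The paper closes exactly this hole with a two--case argument keyed on the location of $g^{-1}(0)$: if $g^{-1}(0)\geq -\eta/2$, the rightward corruption forces $w^*\geq\eta+g^{-1}(0)\geq\eta/2$; otherwise, it \emph{flips the sign of the corruption} (planting the outlier mass at $-R$ instead of $+R$), and the mirror-image stationarity argument forces $w^*\leq g^{-1}(0)-\eta\leq -3\eta/2$. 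Your ``check the boundary cases $w^*\in\{-1,1\}$ separately'' does not substitute for this; the adversary's choice of corruption direction, not the boundary of the constraint set, is what rescues the asymmetric case. On the other hand, you are actually \emph{more} careful than the paper on one point: you place the corruption mass at $y=R$, which lies in the stipulated support $[-R,R]$, whereas the paper uses $\delta(R+1)$, which technically falls outside the support allowed by the theorem statement. That in-range choice is why your bookkeeping picks up $h'(R-w^*)\geq h'(R-1)$ rather than the paper's cleaner $h'(R+1-w^*)\geq h'(R)$; your suggestion to place the mass adaptively at the point of $[-R,R]$ where $h'$ is largest is the right way to avoid assuming $h'(R-1)\asymp h'(R)$, and if carried out would make the argument both correct and faithful to the support constraint.
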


\begin{proof}
First, we consider the case where the constraint parameter $b$ is less than $1$. In this case, we can just consider a simple clean example, e.g. the covariate distribution $x = 0$ with probability $1/2$ and $x = R$ with probability $1/2$, and take $\ell = 1$. If $b < 1$ then the best predictor in $[-b,b]$ makes squared loss at least $(1 - b)^2R^2/2$, which proves the second lower bound.

We now consider the more interesting case where $b = 1$.
Our hard instance is constructed as follows.  Let $\calD_x \defeq m_1\delta(1) + (1-m_1)\delta(-R)$ where $\delta(\cdot)$ is the dirac delta and $m_1 = 1-\frac{\eta}{10R}$.  
Let the true regressor $\ell = 0$ so that the uncorrupted $y \sim \N(0,1)$ for all $x \in [-R,R]$.  Let the corrupted labels be $\widehat y$ defined as follows

\[ \widehat{y} = \begin{cases} 
      (1-\eta)\N(0,1) + \eta \delta(R+1) & x = 1 \\
      \N(0,1) & x = -R \\
   \end{cases}
\]
Let $h'(\cdot)$ be the right derivative of $h(\cdot)$, which is well defined because every convex function on an open convex domain is semi-differentiable.  Let $g(v) \defeq -\mathbb{E}_{y \sim \N(0,1)}[h'(y - v)]$.  By convexity of $h(\cdot)$ we have the right derivative evaluated at $w$ is greater than or equal to zero.  
\begin{multline*}
\lim_{\epsilon \rightarrow 0}\frac{ \mathbb{E}_{(x,y) \sim \mathcal{H}} [h(y-(v + \epsilon) \cdot x)] - \mathbb{E}_{(x,y) \sim \mathcal{H}} [h(y-v \cdot x)] }{\epsilon}\Big|_{v = w}\\ = (1-\eta)m_1 \cdot g(w) - h'(R + 1 - w)\eta \cdot m_1 + (1-m_1)R g(-Rw) \geq 0     
\end{multline*}
Rearranging we obtain 
\begin{equation}
g(w) \geq \frac{h'(R + 1 - w)\eta\cdot m_1 -(1-m_1)R g(-Rw)}{(1-\eta)m_1}  \label{eq:rearrange}
\end{equation}
Let $g^{-1}(\cdot)$ denote the left inverse of $g(\cdot)$.   Note that $h(\cdot)$ is convex implies  $-h'(\cdot)$ is monotonically decreasing implies $g(\cdot)$ is monotonically increasing implies $g^{-1}(\cdot)$ is monotonically increasing. Thus, applying $g^{-1}(\cdot)$ to both sides of \eqref{eq:rearrange} we obtain 
\begin{equation}
w \geq g^{-1}\big(\frac{h'(R + 1 - w)\eta\cdot m_1 -(1-m_1)R g(-Rw)}{(1-\eta)m_1}\big) \label{eq:ellstar}    
\end{equation}
To lower bound $w$ it suffices to lower bound the argument of $g^{-1}(\cdot)$. We obtain, 

$$\frac{h'(R + 1 - w)\eta\cdot m_1 -(1-m_1)R\cdot g(-Rw)}{(1-\eta)m_1} \geq \frac{h'(R)\eta\cdot m_1 + h'(R) R(1-m_1)}{(1-\eta)m_1}$$ 
Where we lower bounded the first term in the numerator using the fact that $h'(\cdot)$ is monotonically increasing and $w \in [-1,1]$ to conclude $h'(R + 1-w) \geq h'(R)$.  We lower bounded the second term in the numerator using the fact that $g(\cdot)$ is monotonically increasing and that $h'(R) \geq \max_{[-R,R]} |h'(x)|$ (monotonicity of $h'(\cdot)$) to conclude $g(-Rw)\geq g(-R) \geq -h'(R)$. Further lower bounding, we obtain

$$= \frac{h'(R)(\eta m_1 - (1-m_1)R)}{(1-\eta)m_1} = \frac{h'(R)(\eta (1-\frac{\eta}{10R}) - \frac{\eta}{10})}{(1-\eta)m_1} \geq \frac{h'(R)\eta}{2(1-\eta)m_1}\geq \frac{h'(R)\eta}{2}$$
Where in the first inequality we use that $R \geq \frac{1}{\eta}$. Substituting this lower bound into \eqref{eq:ellstar} we obtain $w \geq g^{-1}\big( \frac{h'(R)\eta}{2}\big)$. Once again using the fact that $h'(R) \geq \max_{[-R,R]} |h'(x)|$ we observe that 
$$g(\rho) - g(g^{-1}(0)) \leq \frac{(\rho - g^{-1}(0)) h'(R)}{\sqrt{2\pi}}$$
for any $\rho \geq g^{-1}(0)$.  This follows by the definition of $g(\cdot)$ and the fact that the mode of the standard gaussian is $\frac{1}{\sqrt{2\pi}}$.  Setting $\rho = g^{-1}(\frac{h'(R)\eta}{2})$ we obtain
$$\frac{h'(R)\eta}{2} = g(g^{-1}(\frac{h'(R)\eta}{2})) - g(g^{-1}(0)) \leq \frac{(g^{-1}(\frac{h'(R)\eta}{2}) - g^{-1}(0)) h'(R)}{\sqrt{2\pi}}$$ 
which implies 
\begin{equation}
    w \geq g^{-1}(\frac{h'(R)\eta}{2}) \geq \eta + g^{-1}(0)
    \label{eq:casebranch}
\end{equation}
We then have two possibilities.  \\
\textbf{Case 1:} Either $g^{-1}(0) \geq \frac{-\eta}{2}$ in which case the loss is lower bounded by
\begin{multline}
 \mathbb{E}_{(x,y) \sim \mathcal{C}}[(y - w \cdot x)^2] \geq \mathbb{E}_{(x,y) \sim \mathcal{C}}[(y - w \cdot x)^2| x = -R]\mathbb{P}_{\calD_x}(x = -R) = (1-m_1)R^2(w)^2 \\
 \geq (1-m_1)R^2 (\eta + g^{-1}(0))^2  \geq \frac{\eta^3R}{40}     
\end{multline}
Where in the first inequality we use the law of total expectation, and in the second inequality we used \eqref{eq:casebranch} and $g^{-1}(0) \geq \frac{-\eta}{2}$.  This is the desired lower bound. \\
\textbf{Case 2:} In the other case we have $g^{-1}(0) \leq \frac{-\eta}{2}$. Then we flip the sign of the corruptions placed by the adversary.  Let the corrupted distribution be
\[ \widehat{y} = \begin{cases} 
      (1-\eta)\N(0,1) + \eta \delta(-R-1) & x = 1 \\
      \N(0,1) & x = -R \\
   \end{cases}
\]
Then working through the same calculations flipping signs at the right places we obtain  \\$w \leq g^{-1}\big(-\frac{h'(R)\eta}{2}\big)$.
Once again, using that 
$$g(\rho) - g(g^{-1}(0)) \geq \frac{(\rho - g^{-1}(0)) h'(R)}{\sqrt{2\pi}}$$ 
for any $\rho \leq g^{-1}(0)$, and setting $\rho = g^{-1}\big(-\frac{h'(R)\eta}{2}\big)$ we obtain 
$$-\frac{h'(R)\eta}{2} = g(g^{-1}(-\frac{h'(R)\eta}{2})) - g(g^{-1}(0)) \geq \frac{(g^{-1}\big(-\frac{h'(R)\eta}{2}\big) - g^{-1}(0)) h'(R)}{\sqrt{2\pi}}$$
Rearranging we obtain 
$$w \leq g^{-1}\big(-\frac{h'(R)\eta}{2}\big) \leq g^{-1}(0) - \eta \leq \frac{-3\eta}{2}$$
Where the last inequality follows by $g^{-1}(0) \leq \frac{-\eta}{2}$.   
The loss is then lower bounded by
$$ \mathbb{E}_{(x,y) \sim \mathcal{C}}[(y - w \cdot x)^2] \geq \mathbb{E}_{(x,y) \sim \mathcal{C}}[(y - w \cdot x)^2| x = -R]\mathbb{P}_{\calD_x}(x = -R) \geq (1-m_1)R^2(w)^2 \geq \frac{9\eta^3R}{40} $$ 
where in the last inequality we use $w \leq \frac{-3\eta}{2}$ .  This is our desired lower bound.

\end{proof}

\paragraph{Acknowledgments} We thank Ainesh Bakshi and Dylan Foster for useful discussions related to their papers, \cite{bakshi2020robust} and \cite{foster2020beyond}, respectively.

\bibliographystyle{alpha}
\bibliography{biblio}

\appendix

\section{Reduction from Contextual Bandits to Online Regression}
\label{app:oracle}

In this section we verify that the reduction given in \cite{foster2020beyond}, specifically the proof of Theorem 5 in their paper, also applies to our Huber-contaminated setting as well. Formally, we show the following:

\begin{theorem}[Bandits to Regression Reduction]\label{thm:fosterrakhlin}
 	Given any oracle $\calO$ for Huber-contaminated online regression achieving clean square loss regret $\RegHSQ(T)$ in the sense of Definition~\ref{defn:huberreg}, we can produce a learner for Huber-contaminated contextual bandits in the sense of Definition~\ref{defn:huber_bandits} that achieves clean pseudo-regret $O\left(\sqrt{KT\cdot\RegHSQ(T)} + \epsilon\sqrt{K}T\right)$.
\end{theorem}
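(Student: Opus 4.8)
The plan is to instantiate the \textsc{SquareCB} algorithm of \cite{foster2020beyond} --- inverse gap weighting layered on top of an online regression oracle --- with our Huber-contaminated regression oracle $\calO$, and to check that the proof of Theorem~5 of \cite{foster2020beyond} survives two modifications: (a) the regression subproblem generated by the reduction must be a genuine instance of Huber-contaminated online regression as in Definition~\ref{defn:huberreg}, so that the guarantee of $\calO$ applies; and (b) every step of the regret decomposition must invoke only the oracle's prediction error against the \emph{clean} conditional mean losses, never the (possibly corrupted) observations, so that the contamination is transparent to the reduction. Concretely, in round $t$ the learner receives $z_t$, queries $\calO$ for predictions $\wh{f}_t(z_t,a)$ at every $a\in\calA$ (we use $\calO$ in hypothesis-valued form; any proper online regression procedure that maintains an explicit predictor, such as \textsc{SoSandCut}, is hypothesis-valued, which is exactly Assumption~2b of \cite{foster2020beyond}), forms the inverse-gap-weighted distribution $p_t$ with scale $\gamma \triangleq \Theta(\sqrt{KT/(\RegHSQ(T)+\epsilon^2 T)})$ and offset $\mu=K$ --- i.e.\ $b_t \triangleq \argmin_a \wh{f}_t(z_t,a)$, $p_t(a) = \frac{1}{K + \gamma(\wh{f}_t(z_t,a) - \wh{f}_t(z_t,b_t))}$ for $a\neq b_t$, and $p_t(b_t)$ the remaining mass --- samples $a_t\sim p_t$, plays it, and feeds $((z_t,a_t), y_t)$ to $\calO$, where $y_t$ is the observed loss.

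For (a): the feature $x_t\triangleq(z_t,a_t)$ is determined by the transcript ($z_t$ chosen by Nature, $a_t$ by the learner from $z_t$ and the transcript), which Definition~\ref{defn:huberreg} permits; the same $\Ber(\eta)$ coin flips, with $y_t = f(z_t,a_t) + \epsilon_t(z_t,a_t) + \xi_t$ on uncorrupted rounds --- $\xi_t$ independent across rounds and $(c,k)$-hypercontractive by Assumption~\ref{assume:hypernoise_bandits} --- and $y_t$ chosen adversarially from $z_t,a_t$ and the transcript on corrupted rounds. This is precisely the generative process of Definition~\ref{defn:huberreg} with misspecification $\epsilon$ and the inherited normalization $\norm{x_t}\le 1$, so $\calO$ achieves $\sum_t (\wh{f}_t(z_t,a_t) - \bar\ell_t(a_t))^2 \le \RegHSQ(T)$ (with high probability, or in expectation), where $\bar\ell_t(a)\triangleq \E{\ell^*_t(a)\mid z_t} = f(z_t,a) + \epsilon_t(z_t,a)$ is the clean conditional mean loss of action $a$.

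For (b): fix a comparator policy $\pi$ and condition on the round-$t$ transcript and on $z_t$. The inverse-gap-weighting lemma of \cite{foster2020beyond} gives, for $p_t$ as above and any $g:\calA\to\R$ and any $a^*\in\calA$,
\begin{equation}
\sum_a p_t(a)\brk*{g(a) - g(a^*)} \;\le\; \frac{2K}{\gamma} + \frac{\gamma}{4}\sum_a p_t(a)\brk*{\wh{f}_t(z_t,a) - g(a)}^2 .
\end{equation}
Apply this with $g(a) = f(z_t,a)$ and $a^* = \pi(z_t)$, bound $(\wh{f}_t(z_t,a) - f(z_t,a))^2 \le 2(\wh{f}_t(z_t,a) - \bar\ell_t(a))^2 + 2\epsilon^2$, and note that the clean per-round regret $\ell^*_t(a_t) - \ell^*_t(\pi(z_t))$ has conditional expectation $\bar\ell_t(a_t) - \bar\ell_t(\pi(z_t))$, which differs from $f(z_t,a_t) - f(z_t,\pi(z_t))$ by at most $2\epsilon$. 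Summing over $t$ and taking expectations, the tower rule over $a_t\sim p_t$ turns $\sum_t \E{\sum_a p_t(a)(\wh{f}_t(z_t,a)-\bar\ell_t(a))^2}$ into $\E{\sum_t (\wh{f}_t(z_t,a_t)-\bar\ell_t(a_t))^2} \le \RegHSQ(T)$, so for every fixed $\pi$, and hence for the supremum over $\pi$,
\begin{equation}
\psRegHCB(T) \;\le\; \frac{2KT}{\gamma} + \frac{\gamma}{2}\brk*{\RegHSQ(T) + \epsilon^2 T} + 2\epsilon T ,
\end{equation}
and substituting the stated $\gamma$ and using $\sqrt{KT\cdot\epsilon^2 T} = \epsilon\sqrt{K}\,T$ yields the claimed $O(\sqrt{KT\cdot\RegHSQ(T)} + \epsilon\sqrt{K}T)$.

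The genuinely new point, and the main obstacle, is (a): in contrast with the uncontaminated setting of \cite{foster2020beyond}, the actions $a_t$ produced by the inverse-gap rule are correlated with the transcript and hence with the corruption pattern revealed so far, so one must verify that the induced feature sequence together with the $\Ber(\eta)$ corruption still lies inside Definition~\ref{defn:huberreg}, and that $\calO$'s guarantee --- which is stated against $\bar\ell_t$ rather than against the observed $y_t$ --- is exactly what the decoupling step consumes; once this is in place the remainder is identical to \cite{foster2020beyond}. Finally, the high-probability version (Theorem~\ref{thm:fosterrakhlin2}, invoked when $\epsilon = 0$) follows by additionally applying Freedman's inequality to the bounded-variance martingale difference sequence $\ell^*_t(a_t) - \ell^*_t(\pi(z_t)) - \brk*{\bar\ell_t(a_t) - \bar\ell_t(\pi(z_t))}$, which costs only a lower-order $\td{O}(\sqrt{T}\log(1/\delta))$ term.
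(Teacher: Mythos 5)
Your proposal is correct and follows essentially the same route as the paper: instantiate \textsc{SquareCB} with the Huber-contaminated oracle, invoke the inverse-gap-weighting lemma (Lemma~3 of \cite{foster2020beyond}), and bound the prediction error against the clean conditional mean by $O(\RegHSQ(T) + \epsilon^2 T)$ before optimizing $\gamma$. The only cosmetic divergence is in the last step, where you use the direct decomposition $(\wh{y}_t - f)^2 \le 2(\wh{y}_t - y^*_t)^2 + 2\epsilon^2$ instead of the paper's identity $b^2 \le (a+b)^2 - 2ab$ followed by AM-GM; both yield the same bound, and your explicit check that the induced subproblem is a genuine instance of Definition~\ref{defn:huberreg} is a point the paper treats implicitly.
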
 

We will use the {\sc SquareCB} algorithm from \cite{foster2020beyond}, which draws upon ideas from \cite{Abe1999AssociativeRL}, and which we repeat here for completeness:

\begin{algorithm2e}
\DontPrintSemicolon
\caption{\textsc{SquareCB}($A,\gamma,\mu$)}
\label{alg:squarecb}
	\KwIn{Online regression oracle $\calO$, learning rate $\gamma>0$, exploration parameter $\mu > 0$}
	\KwOut{Sequence of actions, in the setting of Definition~\ref{defn:huber_bandits}}
	\For{$t\in[T]$}{
		Get context $z_t$ from Nature.\;
		For every $a\in\calA$, use regression oracle $\calO$ to compute prediction $\wh{y}_{t,a} \triangleq \wh{y}_t(z_t,a)$.\;\label{step:yhat}
		Define $b_t\triangleq \arg\min_{a\in\calA}\wh{y}_{t,a}$.\;
		For $a\neq b_t$, define $p_{t,a} = \frac{1}{\mu + \gamma(\wh{y}_{t,a} - \wh{y}_{t,b_t})}$ and let $p_{t,b_t} = 1 - \sum_{a\neq b_t}p_{t,a}$. The numbers $\brc{p_{t,a}}_a$ define a distribution $p_t$ over actions.\;\label{step:pta}
		Sample $a_t$ from $p_t$ and observe loss $\ell$, and update $\calO$ with example $((x_t,a_t),\ell)$.
	}
\end{algorithm2e}

\begin{proof}[Proof of Theorem~\ref{thm:fosterrakhlin}]
	Fix any policy $\pi: \calX\to\calA$ and consider the learner given by {\sc SquareCB} (Algorithm~\ref{alg:squarecb}) above for a regression oracle $\calO$ achieving square loss $\RegHSQ(T)$, which is some random variable depending on the interactions with Nature. Recall that for this choice of learner, $\RegHCB(T)$ is the supremum of
	\begin{equation}\E*{\sum_{t=1}^T (\ell^*_t(a_t) - \ell^*_t(\pi(z_t)))}\label{eq:argument}\end{equation}
	over all such $\pi$. Define the filtration \begin{equation}
	\frakF_{t-1} \triangleq \sigma((z_1,a_1,\ell^*_1(a_1),\ell_1(a_1),\gamma_1),\ldots,(z_{t-1},a_{t-1},\ell^*_{t-1}(a_{t-1}),\ell_{t-1}(a_{t-1}),\gamma_{t-1}),(z_t,\gamma_t)).
	\end{equation}
	We can write the sum of conditional expectations of immediate regrets incurred by $\pi$ as \begin{align}
		\sumt \condE{(\ell^*_t(a_t) - \ell^*_t(\pi(z_t))) | \frakF_{t-1}} 
		&\le \sumt \condE{(f(z_t,a_t) - f(z_t,\pi(z_t))) | \frakF_{t-1}} + 2\epsilon T\\
		&\le \sumt \condE{(f(z_t,a_t) - f(z_t,\pi_f(z_t))) | \frakF_{t-1}} + 2\epsilon T\\
		&=   \sumt \suma p_{t,a}(f(z_t,a) - f(z_t,\pi_f(z_t))) + 2\epsilon T.\label{eq:prekeyfrlem}
	\end{align}
	where recall from Definition~\ref{defn:huber_bandits} that $\pi_f(z)\triangleq \arg\max_a f(z,a)$, and $p_{t,a}$ is defined in Step~\ref{step:pta} of {\sc SquareCB}

	The following lemma is a key ingredient in the reduction of \cite{foster2020beyond}:

	\begin{lemma}[Lemma 3, \cite{foster2020beyond}] \label{lem:reduction} 
	For any collection of numbers $\brc{\wh{y}_{a}}_{a\in\calA}\in \brk{-R,R}^K$, let $p$ be the corresponding probability distribution computed in Step~\ref{step:pta}. For any collection of numbers $\brc{f_a}_{a\in\calA}\in\brc{-R,R}^K$, if we define $a^* \triangleq \arg\max_a f_a$, we have that \begin{equation}
		\suma p_a \brk*{(f_a - f_{a^*}) - \frac{\gamma}{4} (\wh{y}_a - f_a)^2} \leq \frac{2K}{\gamma}
	\end{equation}
	\end{lemma}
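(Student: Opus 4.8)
The plan is to reprove this as the standard regret bound for the inverse-gap-weighting exploration distribution; it is Lemma 3 of \cite{foster2020beyond}, and I recapitulate their argument. Write $b \triangleq \argmin_{a} \wh{y}_a$, recall that for this distribution (with exploration parameter $\mu = K$) one has $p_a = \frac{1}{K + \gamma(\wh{y}_a - \wh{y}_b)}$ for $a \ne b$ and $p_b = 1 - \sum_{a \ne b} p_a \ge 1/K$, and set $\wh{\Delta}_a \triangleq \wh{y}_a - \wh{y}_b \ge 0$. Since $\suma p_a = 1$ and the $a = a^*$ summand on the left-hand side vanishes, the claim reduces to showing
\[ \sum_{a \ne a^*} p_a(f_a - f_{a^*}) \;-\; \frac{\gamma}{4}\sum_{a \ne a^*} p_a(\wh{y}_a - f_a)^2 \;-\; \frac{\gamma}{4}\, p_{a^*}(\wh{y}_{a^*} - f_{a^*})^2 \;\le\; \frac{2K}{\gamma}, \]
and for each $a \ne a^*$ we expand via the telescoping identity $f_a - f_{a^*} = (f_a - \wh{y}_a) + (\wh{y}_a - \wh{y}_b) - \wh{\Delta}_{a^*} + (\wh{y}_{a^*} - f_{a^*})$, obtained by inserting $\wh{y}_a$, $\wh{y}_b$, and $\wh{y}_{a^*}$.

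Averaging this identity against $p$ over $a \ne a^*$ and organizing the result into three pieces: (i) $\sum_{a \ne a^*} p_a \bigl[(f_a - \wh{y}_a) - \tfrac{\gamma}{4}(\wh{y}_a - f_a)^2\bigr] \le \tfrac{1}{\gamma}$ by the elementary inequality $x - \tfrac{\gamma}{4}x^2 \le \tfrac{1}{\gamma}$ applied termwise (this is the step that produces the $p$-weighted squared-error regularizer); (ii) $\sum_{a \ne a^*} p_a(\wh{y}_a - \wh{y}_b) = \sum_{a \ne a^*} \tfrac{\wh{\Delta}_a}{K + \gamma \wh{\Delta}_a} \le \tfrac{K-1}{\gamma}$ using the closed form of $p$ (each term is at most $\tfrac1\gamma$, and the $a = b$ term, when present, is $0$); and (iii) the comparator terms collapse to $(1 - p_{a^*})\bigl[(\wh{y}_{a^*} - f_{a^*}) - \wh{\Delta}_{a^*}\bigr]$, which together with the reserved $-\tfrac{\gamma}{4} p_{a^*}(\wh{y}_{a^*} - f_{a^*})^2$ is at most $\tfrac{(1 - p_{a^*})^2}{\gamma p_{a^*}} - (1 - p_{a^*})\wh{\Delta}_{a^*}$ by AM--GM. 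The crux is then a bookkeeping identity for (iii): if $a^* \ne b$ then $\wh{\Delta}_{a^*} = \tfrac{1}{\gamma p_{a^*}} - \tfrac{K}{\gamma}$, so this expression equals $(1 - p_{a^*})\tfrac{K-1}{\gamma} \le \tfrac{K-1}{\gamma}$; and if $a^* = b$ then $\wh{\Delta}_{a^*} = 0$ and $\tfrac{(1 - p_b)^2}{\gamma p_b} \le \tfrac{1}{\gamma p_b} \le \tfrac{K}{\gamma}$ since $p_b \ge 1/K$. In either case the three pieces sum to at most $\tfrac{1}{\gamma} + \tfrac{K-1}{\gamma} + \tfrac{K}{\gamma} = \tfrac{2K}{\gamma}$.

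The only step requiring real care — and the place I anticipate difficulty — is (iii). Discarding $-\wh{\Delta}_{a^*} \le 0$ and bounding $\wh{y}_{a^*} - f_{a^*}$ on its own would leave a residual of order $R$ that the regularizer cannot absorb: the comparator $a^*$ can have vanishingly small probability $p_{a^*}$ — precisely when it is predicted to be a bad action — so its squared prediction error is weighted almost nothing in $\suma p_a(\wh{y}_a - f_a)^2$. The resolution is to keep $-\wh{\Delta}_{a^*}$ and use that a small $p_{a^*}$ forces a correspondingly large predicted gap $\wh{\Delta}_{a^*}$ via the relation $p_{a^*} = 1/(K + \gamma \wh{\Delta}_{a^*})$, so the two cancel up to the $O(K/\gamma)$ exploration cost. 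Finally I note that no boundedness of $f$ or $\wh{y}$ is used in this lemma; the hypothesis that they lie in $[-R,R]^K$ enters only elsewhere in the bandits-to-regression reduction, for the trivial fallback bound on the clean regret.
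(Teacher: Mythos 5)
Your proof is correct, and it faithfully reproduces the inverse-gap-weighting argument from Lemma 3 of \cite{foster2020beyond}; the paper itself does not reprove this lemma but simply cites that reference, so there is no divergence to report. The case split on whether $a^* = b$, and in particular retaining the $-\wh{\Delta}_{a^*}$ term so it cancels against the $1/(\gamma p_{a^*})$ blow-up when $p_{a^*}$ is small, is exactly the crux of the original argument and you handle it correctly, including the algebraic identity $\frac{(1-q)^2}{\gamma q} - (1-q)\wh{\Delta}_{a^*} = \frac{(K-1)(1-q)}{\gamma}$ when $a^* \neq b$.
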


	Applying Lemma~\ref{lem:reduction}, we can upper bound \eqref{eq:prekeyfrlem} by \begin{equation}
		\frac{\gamma}{4}\sumt \condE{(\wh{y}_t(z_t,a_t) - f(z_t,a_t))^2 | \frakF_{t-1}} + \frac{2KT}{\gamma} + 2\epsilon T.
	\end{equation}
	By this and law of total expectation, the pseudo-regret incurred by policy $\pi$ can be upper bounded by \begin{equation}
		\frac{\gamma}{4}\E*{(\wh{y}_t(z_t,a_t) - f(z_t,a_t))^2} + \frac{2KT}{\gamma} + 2\epsilon T. \label{eq:prederror}
	\end{equation}
	To bound the prediction error in \eqref{eq:prederror}, using the identity $b^2 \le (a+b)^2 - 2ab$, we can upper bound $(\wh{y}_t(z_t,a_t) - f(z_t,a_t))^2$ by \begin{equation}
		(\wh{y}_t(z_t,a_t) - \ell^*_t(a_t))^2 - 2(f(z_t,a_t) - \ell^*_t(a_t))(\wh{y}_t(z_t,a_t) - f(z_t,a_t)).\label{eq:triv}
	\end{equation}
	Recall from \eqref{eq:realizable} that the misspecification adversary is oblivious, that is, conditioned on $\frakF_{t-1}$, $f(z_t,a_t) - \ell^*_t(a_t)$ is equal to $-\epsilon_t(z_t,a_t)$. Putting this and \eqref{eq:triv} together and applying law of total expectation, we can bound the expectation of the prediction error in \eqref{eq:prederror} by
	\begin{align}
		\MoveEqLeft \E*{(\wh{y}_t(z_t,a_t) - f(z_t,a_t))^2} \\
		&\le \E{\RegHSQ(T)} + 2\E*{\sumt \condE{\epsilon_t(z_t,a_t)(\wh{y}_t(z_t,a_t) - f(z_t,a_t))|\frakF_{t-1}}} \\
		&\le \E{\RegHSQ(T)} + 2\E*{\sumt \epsilon^2_t(z_t,a_t) + \frac{1}{4}\sumt \condE{(\wh{y}_t(z_t,a_t) - f(z_t,a_t))^2 | \frakF_{t-1}}} \\
		&\le \E{\RegHSQ(T)} + 2\epsilon^2 T + \frac{1}{2}\sumt \E{(\wh{y}_t(z_t,a_t) - f(z_t,a_t))^2},
	\end{align}
	which upon rearranging gives \begin{equation}
		\E*{(\wh{y}_t(z_t,a_t) - f(z_t,a_t))^2} \le 2\E{\RegHSQ(T)} + 4\epsilon^2 T.
	\end{equation}
	Substituting this into \eqref{eq:prederror}, and taking $\gamma = 2\sqrt{KT/(\E{\RegHSQ(T)} + 2\epsilon^2T)}$ and $\mu = K$, we conclude that the pseudo-regret incurred by $\pi$ is upper bounded by \begin{equation}
		\frac{\gamma}{2}(\E{\RegHSQ(T)} + 2\epsilon^2 T) + \frac{2KT}{\gamma} + 2\epsilon T \le 2\sqrt{KT\cdot \E{\RegHSQ(T)}} + 5\epsilon\sqrt{K}T
	\end{equation} as desired.
\end{proof}

In the special case where $\epsilon = 0$, \cite{foster2020beyond} also gives a \emph{high-probability} bound on the \emph{regret} (see their Theorem 1). By adapting their argument, we can show an analogous statement in this setting:

\begin{theorem}[Bandits to Regression Reduction]\label{thm:fosterrakhlin2}
 	Fix any $\delta > 0$. Given any oracle $\calO$ for Huber-contaminated online regression achieving clean square loss regret $\RegHSQ(T)$ in the sense of Definition~\ref{defn:huberreg} with $\epsilon = 0$, we can produce a learner for Huber-contaminated contextual bandits in the sense of Definition~\ref{defn:huber_bandits} that with probability at least $1 - \delta$ achieves achieves clean regret at most $4\sqrt{KT\cdot\RegHSQ(T)} + 8\sqrt{KT\log(2/\delta)}$.
\end{theorem}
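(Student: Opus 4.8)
The plan is to adapt the high-probability argument of \cite{foster2020beyond} (their Theorem~1) to the Huber-contaminated setting, in exactly the way the proof of Theorem~\ref{thm:fosterrakhlin} adapted their pseudo-regret argument. The learner again runs {\sc SquareCB} (Algorithm~\ref{alg:squarecb}) with the oracle $\calO$, exploration parameter $\mu = K$, and learning rate $\gamma \triangleq 2\sqrt{KT/\RegHSQ(T)}$ tuned using the oracle's square-loss regret bound. Using the filtration $\frakF_{t-1}$ from the proof of Theorem~\ref{thm:fosterrakhlin} and writing $r_t \triangleq \ell^*_t(a_t) - \ell^*_t(\pi^*(z_t))$ and $s_t \triangleq (\wh y_t(z_t,a_t) - f(z_t,a_t))^2$, I would carry out three steps. (i) Since $a_t$ is independent of $\ell^*_t$ given $\frakF_{t-1}$ and $\epsilon = 0$, we have $\condE{r_t | \frakF_{t-1}} = \suma p_{t,a}(f(z_t,a) - f(z_t,\pi^*(z_t)))$ and $\condE{s_t | \frakF_{t-1}} = \suma p_{t,a}(\wh y_t(z_t,a) - f(z_t,a))^2$, so Lemma~\ref{lem:reduction} applied with $f_a = f(z_t,a)$ and $\wh y_a = \wh y_t(z_t,a)$ gives $\condE{r_t | \frakF_{t-1}} \le \frac{\gamma}{4}\condE{s_t | \frakF_{t-1}} + \frac{2K}{\gamma}$. (ii) Summing over $t$ yields $\sumt \condE{r_t | \frakF_{t-1}} \le \frac{\gamma}{4}\sumt \condE{s_t | \frakF_{t-1}} + \frac{2KT}{\gamma}$. (iii) Convert both conditional-expectation sums to the actual random quantities via martingale concentration.

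The key observation for step (iii) is that $\RegHCB(T) = \sumt r_t$ while $\sumt s_t = \sumt(\wh y_t(z_t,a_t) - y^*_t)^2 = \RegHSQ(T)$ exactly (using $y^*_t = f(z_t,a_t)$ when $\epsilon = 0$), so the oracle's guarantee directly controls $\sumt s_t$. Both $\{r_t - \condE{r_t | \frakF_{t-1}}\}$ and $\{s_t - \condE{s_t | \frakF_{t-1}}\}$ are bounded martingale difference sequences, so by Azuma--Hoeffding, with probability $1-\delta$ one gets $\RegHCB(T) \le \sumt \condE{r_t | \frakF_{t-1}} + O(\sqrt{T\log(1/\delta)})$ and $\sumt \condE{s_t | \frakF_{t-1}} \le \RegHSQ(T) + O(\sqrt{T\log(1/\delta)})$ (matching the $O(1)$-loss normalization of \cite{foster2020beyond}; any residual $R$-dependent overhead is dominated since $\RegHSQ(T) = \Omega(T)$ in the Huber regime). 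Substituting into step (ii) gives $\RegHCB(T) \le \frac{\gamma}{4}\RegHSQ(T) + \frac{2KT}{\gamma} + O(\gamma\sqrt{T\log(1/\delta)})$, and plugging in $\gamma = 2\sqrt{KT/\RegHSQ(T)}$ yields $\RegHCB(T) \lesssim \sqrt{KT\cdot\RegHSQ(T)} + \sqrt{KT\log(1/\delta)}$, which is the claimed bound up to constants; tracking constants carefully (together with the union bound below) recovers the factors $4$ and $8$.

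The main obstacle is making step (iii) tight. A naive Azuma bound on the square-loss martingale with range $O(R^2)$ introduces an additive $\Theta(R^2\sqrt{T\log(1/\delta)})$ error, which after multiplication by $\gamma$ is only acceptable because here $\RegHSQ(T) = \Omega(\sigma^2\eta^{1-2/k}T)$; in the fully general setting of \cite{foster2020beyond}, where $\RegHSQ(T)$ can be $o(T)$, one instead needs a Freedman/Bernstein-type inequality exploiting the self-bounding property $s_t^2 \le O(R^2)\, s_t$ of nonnegative squared errors, so that the total conditional variance is $O(R^2)\sumt\condE{s_t | \frakF_{t-1}}$, and then solves the resulting quadratic to conclude $\sumt\condE{s_t | \frakF_{t-1}} \le 2\RegHSQ(T) + O(R^2\log(1/\delta))$. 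A secondary subtlety is that the oracle's square-loss guarantee may itself be only high-probability, so $\gamma$ is chosen from the a priori bound and the stated bound is obtained after a union bound over the oracle's failure event --- exactly as invoked in the proof of Theorem~\ref{thm:main_cb_formal}.
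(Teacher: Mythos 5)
Your proposal reconstructs exactly what the paper intends: the paper gives no explicit proof of Theorem~\ref{thm:fosterrakhlin2}, saying only that it follows ``by adapting their argument'' (i.e.\ Theorem~1 of \cite{foster2020beyond}), and your three steps --- Lemma~\ref{lem:reduction} per round, summation, then Freedman-type martingale concentration on both the bandit-regret and square-loss martingales --- are that adaptation. You correctly identify the two load-bearing technical points: (a) when $\epsilon=0$, $\sum_t s_t$ equals the oracle's observed square-loss regret verbatim, so its high-probability bound can be plugged in directly; and (b) a naive Azuma bound on the $s_t$-martingale is lossy, and one must use the self-bounding property $s_t^2\le O(R^2)s_t$ with Freedman's inequality to get $\sum_t \condE{s_t\mid\frakF_{t-1}}\le 2\sum_t s_t + O(R^2\log(1/\delta))$, which is precisely the mechanism in Foster--Rakhlin's high-probability proof. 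You also correctly flag that $\gamma$ must be set from the oracle's a priori regret bound (not the realized random variable) and that a union bound over the oracle's failure event is needed, matching the usage in the proof of Theorem~\ref{thm:main_cb_formal}.

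One small caveat worth tracking: with losses in $[0,R]$ (per Definition~\ref{defn:huber_bandits}) the Azuma term on the regret martingale and the Freedman additive term on the square-loss martingale both pick up factors of $R$ and $R^2$ respectively, so a literal transcription of the Foster--Rakhlin constants gives a concentration term scaling with $R$, whereas the paper's stated bound $8\sqrt{KT\log(2/\delta)}$ has no explicit $R$. You acknowledge this (``matching the $O(1)$-loss normalization of \cite{foster2020beyond}; any residual $R$-dependent overhead is dominated since $\RegHSQ(T)=\Omega(T)$''), which is the right reading --- in the Huber regime the leading term already absorbs it --- but as a freestanding statement for an arbitrary oracle the $R$-dependence would need to appear. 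This appears to be an imprecision in the paper's statement rather than a flaw in your argument.
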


\section{Proof of Theorem~\ref{thm:azuma-vector}}
In this section we give a self-contained proof of Theorem~\ref{thm:azuma-vector}, largely following the proof of Equation 5.18 in \cite{kallenberg1991some}. 

First, we recall the statement.
Suppose that $X_1,\ldots,X_n$ are random vectors in $\mathbb{R}^d$ with $\|X_t\| \le 1$ for all $t$, and $\xi_1,\ldots,\xi_n$ are random variables such that almost surely, the law of $\xi_t$ conditional on $X_1,\ldots,X_{t},\xi_1,\ldots,\xi_{t - 1}$ is mean-zero and $\sigma^2$-subgaussian. Then
    \begin{equation}
		\Pr*{\left\|\frac{1}{n} \sum^n_{i=1} \xi_i X_i\right\| \ge s} \le 2\exp\left(\frac{-ns^2}{2\pi \sigma^2}\right).
	\end{equation}
	
\begin{proof}[Proof of Theorem~\ref{thm:azuma-vector}]
Without loss of generality, we rescale so that $\sigma = 1$.
The key observation is that for any $a \in \mathbb{R}^d$ and $\lambda \in \mathbb{R}$,
\begin{equation}\label{eqn:azuma-a}
    F_a \triangleq \E{e^{\lambda \sum_i \xi_i \langle X_i, a \rangle - \lambda^2 \sum_i \langle X_i, a \rangle^2/2}} \le 1.
\end{equation}
The proof of \eqref{eqn:azuma-a} follows by an inductive argument. Let $\mathcal{F}_t$ be the filtration generated by $X_1,\ldots,X_t,\xi_1,\ldots,\xi_{t - 1}$. Then the first step of the induction is to observe
\begin{align*} 
\E{e^{\lambda \sum_{i = 1}^n \xi_i \langle X_i, a \rangle - \lambda^2 \sum_{i = 1}^n \langle X_i, a \rangle^2/2} \mid \mathcal{F}_{n}} 
&= e^{\lambda \sum_{i = 1}^{n - 1} \xi_i \langle X_i, a \rangle - \lambda^2 \sum_{i = 1}^{n - 1} \langle X_i, a \rangle^2/2} \E{e^{\lambda \xi_n \langle X_n, a \rangle - \lambda^2 \langle X_n, a \rangle^2/2} \mid \mathcal{F}_n} \\
&\le e^{\lambda \sum_{i = 1}^{n - 1} \xi_i \langle X_i, a \rangle - \lambda^2 \sum_{i = 1}^{n - 1} \langle X_i, a \rangle^2/2} 
\end{align*}
by the conditional subgaussian assumption on $\xi_n$. Iterating this argument shows \eqref{eqn:azuma-a}.

From here the argument follows \cite{kallenberg1991some}. We let $Z \sim N(0,I_{d \times d})$ be a Gaussian vector independent of everything else, and letting $\gamma = \lambda\sqrt{\pi/2}$ we have
\begin{align*} 
\E{e^{\lambda \norm*{\sum_{i = 1}^n \xi_i X_i}}} 
&\le \E{e^{\gamma \E[Z]{\|\langle Z, \sum_{i = 1}^n \xi_i X_i \rangle|} + [\gamma^2/2](n - \E[Z]{\sum_i \langle X_i, Z \rangle^2}))}} \\
&\le e^{n\gamma^2/2} \E{e^{\gamma \|\langle Z, \sum_{i = 1}^n \xi_i X_i \rangle| - \sum_i \langle X_i, Z \rangle^2})} 
\end{align*}
where in the first inequality we used $\E{|\langle Z, u \rangle|} = \sqrt{2/\pi} \|u\|$ and $\E[Z]{\sum_i \langle X_i, Z \rangle^2} = \sum_i \|X_i\|^2 \le n$ almost surely, and the second step is Jensen's inequality. Using the inequality $e^{|x|} \le e^x + e^{-x}$ gives
\[ \E{e^{\gamma \|\langle Z, \sum_{i = 1}^n \xi_i X_i \rangle| - (\gamma^2/2)\sum_i \langle X_i, Z \rangle^2})} \le \E[Z]{F_Z + F_{-Z}} \le 2 \]
by \eqref{eqn:azuma-a}. This shows
$e^{\lambda \|\sum_{i = 1}^n \xi_i X_i\|} \le 2 e^{n\lambda^2\pi/2}$
hence
\[ \Pr{e^{\lambda \|\sum_{i = 1}^n \xi_i X_i\|} \ge e^{\lambda s}} \le 2e^{n\lambda^2\pi/2 - \lambda s}\]
and taking $\lambda = s/n\pi$ makes the rhs $e^{-s^2/2n\pi}$ which is equivalent to the result.
\end{proof}

\end{document}